\theoremstyle{plain}
\newtheorem{theorem}{Theorem}[section]
\newtheorem{lemma}[theorem]{Lemma}
\newtheorem{corollary}[theorem]{Corollary}
\theoremstyle{definition}
\newtheorem{example}[theorem]{Example}
\theoremstyle{remark}
\newtheorem{remark}[theorem]{Remark}
\icmltitlerunning{GLGENN: A Novel Equivariant Neural Network  Architecture Based on Clifford Geometric Algebras}
\newcommand{\BR}{\mathbb{R}}
\newcommand{\BC}{\mathbb{C}}
\def\ad{{\rm ad}}
\def\Lambd{{\rm\Lambda}}
\def\Q{{\rm Q}}
\def\Z{{\rm Z}}
\def\H{{\rm H}}
\def\Aut{{\rm Aut}}
\def\F{{\mathbb F}}
\def\OO{{\rm O}}
\def\Mat{{\rm Mat}}
\def\GL{{\rm GL}}
\def\Pin{{\rm Pin}}
\def\Spin{{\rm Spin}}
\newcommand{\q}{\mathfrak{q}}
\newcommand{\bb}{\mathfrak{b}}
\newcommand{\I}{\mathrm{I}}
\def\mod{{\rm \;mod\; }}
\def\im{{\rm im}}
\newcommand{\bigslant}[2]{{\raisebox{.2em}{$#1$}\left/\raisebox{-.2em}{$#2$}\right.}}
\begin{document}

\twocolumn[
\icmltitle{GLGENN: A Novel Parameter-Light Equivariant Neural Networks \\
 Architecture Based on Clifford Geometric Algebras}




\begin{icmlauthorlist}
\icmlauthor{Ekaterina Filimoshina}{yyy,skoltech}
\icmlauthor{Dmitry Shirokov}{yyy,comp}
\end{icmlauthorlist}

\icmlaffiliation{yyy}{HSE University, Moscow, Russia}
\icmlaffiliation{skoltech}{ Skolkovo Institute of Science and Technology, Moscow, Russia}
\icmlaffiliation{comp}{ Institute for Information Transmission Problems of the Russian Academy of Sciences, Moscow, Russia}

\icmlcorrespondingauthor{Ekaterina Filimoshina}{filimoshinaek@gmail.com}
\icmlcorrespondingauthor{Dmitry Shirokov}{dm.shirokov@gmail.com}

\icmlkeywords{Machine Learning, ICML, Deep Learning, Geometric Deep Learning, Neural Networks, Equivariant Neural Networks, Equivariance, Clifford Algebras, Geometric Algebras, Lipschitz Groups, Spin Groups, Orthogonal Groups, Pseudo-orthogonal Groups, Orthogonal Transformations, Pseudo-orthogonal Transformations, Reflections, Rotations, Pseudo-orthogonal Groups Equivariance}

\vskip 0.3in
]



\printAffiliationsAndNotice{} 

\newcommand{\C}{C \kern -0.1em \ell}

\begin{abstract}
We propose, implement, and compare with competitors a new architecture of equivariant neural networks based on geometric (Clifford) algebras: Generalized Lipschitz Group Equivariant Neural Networks (GLGENN). These networks are equivariant to all pseudo-orthogonal transformations, including rotations and reflections, of a vector space with any non-degenerate or degenerate symmetric bilinear form. We propose a weight-sharing parametrization technique that takes into account the fundamental structures and operations of geometric algebras. Due to this technique, GLGENN architecture is parameter-light and has less tendency to overfitting than baseline equivariant models. GLGENN outperforms or matches competitors on several benchmarking equivariant tasks, including estimation of an equivariant function and a convex hull experiment, while using significantly fewer optimizable parameters.
\end{abstract}

\vspace{-5mm}
\section{Introduction}
\vspace{-1mm}

Equivariant neural networks are a class of neural networks that explicitly incorporate symmetries into their architecture, making them well-suited for tasks that inherently require equivariance to transformations with respect to some group's action (e.g., rotations, permutations, translations, etc.).
Equivariant neural networks were introduced in \citealp{eq1} and have since been extensively developed and applied in a wide range of tasks in computer and natural sciences. These applications include
 modeling dynamical systems \cite{emlpO5}, particle physics \cite{cgenn,emlpO5}, analyzing molecular properties and protein structures \cite{mol1,topology,engraph,pepe2023,se3transf}, estimating arterial wall-shear stress \cite{tf}, processing tasks involving point clouds \cite{eq2,se3transf}, motion capture \cite{topology}, robotic planning \cite{tf}, etc.
  Early works addressing equivariance with respect to pseudo-orthogonal transformations (pseudo-orthogonal groups) include \citealp{eq1,equivO1,equivO2,eq2,equivO3,equivO4,emlpO5}.

\begin{figure}[ht]
\begin{center}
\centerline{\includegraphics[width=\columnwidth]{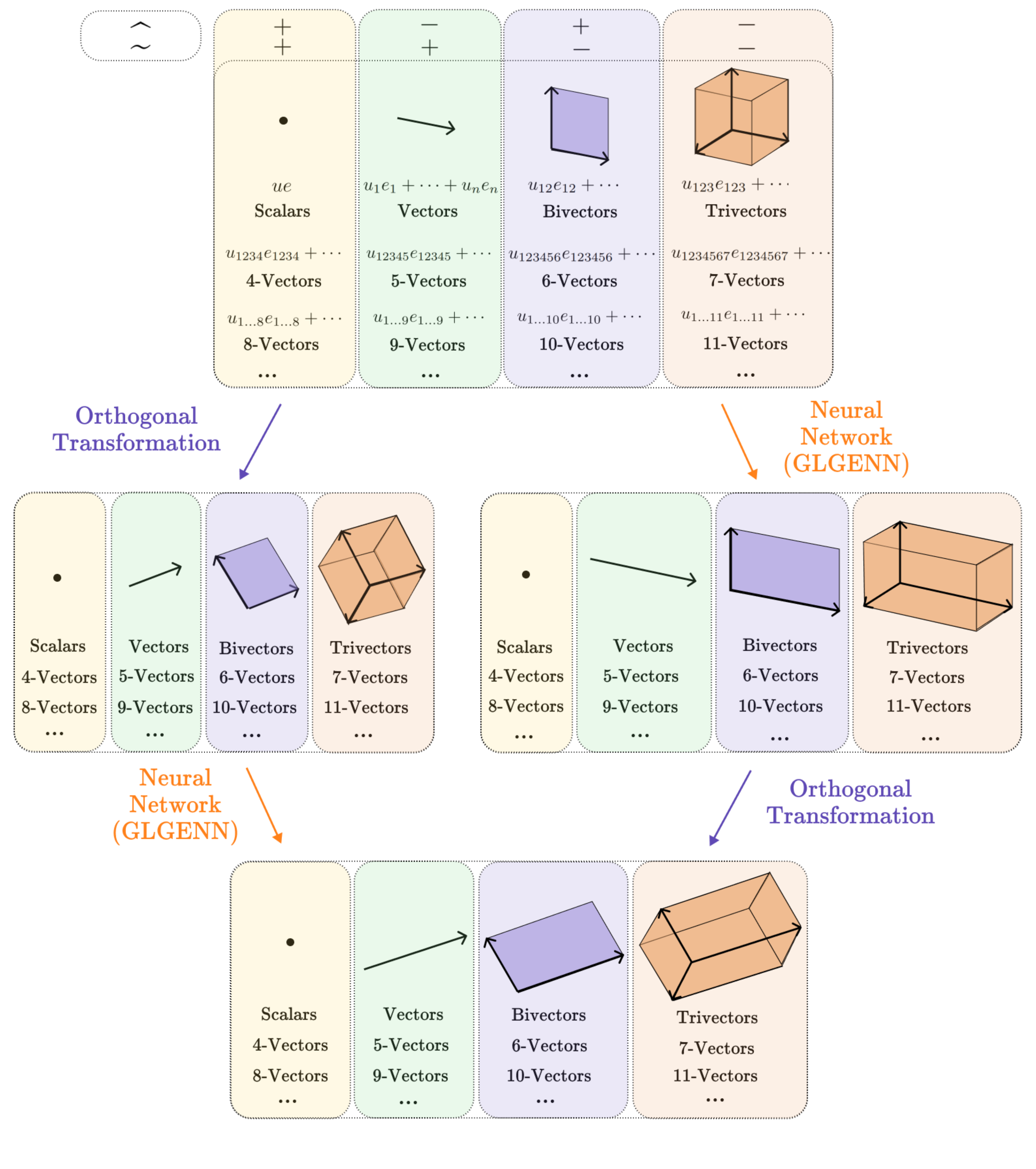}}
\vskip -0.2in
\caption{GLGENN is an architecture of neural networks equivariant with respect to any pseudo-orthogonal transformation. Inputs and outputs are represented as multivectors (elements of geometric algebras), which encode various geometric quantities such as scalars, vectors, oriented areas (bivectors) and volumes (trivectors), and higher-dimensional objects (4-vectors, etc.). GLGENN are parameter-light, since they operate in a unified manner across  $4$ fundamental subspaces of geometric algebras defined by the grade involution ($\widehat{\;\;}$) and reversion ($\widetilde{\;\;}$); they processes geometric objects in groups with a step size of $4$.}
\label{figure_glgenn}
\end{center}
\end{figure}

This work introduces a novel neural network architecture that is equivariant with respect to any pseudo-orthogonal transformation, including rotations and reflections. Our approach is based on the well-known mathematical framework of (Clifford) geometric  algebras (GAs), which have found applications in various scientific fields, including physics \cite{phys,hestenes}, computer science \cite{cv1,gann5,ce},  engineering \cite{ce}, and other scientific fields. Works on applications of GAs in neural networks include \citealp{gann2,gann1,gann3,gann4,gann5}.

\vspace{-0.5mm}
In particular, GAs provide an efficient and elegant representation of pseudo-orthogonal transformations via spin groups, Lipschitz groups (sometimes also called Clifford groups in the literature), and twisted adjoint representations. Specifically, for any pseudo-orthogonal matrix, there exists a corresponding element of the Lipschitz group in GAs that induces the same transformation of vectors through the twisted adjoint action.
The transition from matrix formalism to GA formalism is highly beneficial and inspiring for applications, as it introduces a rich set of operations that are naturally defined within the GA framework.

\vspace{-0.5mm}
The idea of leveraging GAs to construct pseudo-orthogonal group equivariant neural networks was first proposed by the Amsterdam Machine Learning Lab (University of Amsterdam). 
Their pioneering work \citealp{cgenn} introduced Clifford Group Equivariant Neural Networks (CGENN) and served as the foundation for many subsequent research works. 
\citealp{tf} propose Geometric Algebra Transformer (GATr), which incorporates GA into ordinary transformer architecture and outperforms traditional non-geometric baselines in $N$-body modeling and robotic planning.  \citealp{zhdanov} introduce Clifford-Steerable Convolutional Neural Networks based on CGENN approach, demonstrating superior performance in tasks related to fluid dynamics and relativistic electrodynamics forecasting.  \citealp{topology} introduce Clifford Group Equivariant Simplicial Message Passing Networks, a method for steerable $\mathrm{E}(n)$-equivariant message passing on simplicial complexes.  
 \citealp{pepe2023} use CGENN to predict protein coordinates
to estimate the 3D structure of a protein.

\vspace{-1mm}
The above mentioned papers demonstrate the importance of Lipschitz groups in GA as a fundamental tool for building expressive state-of-the-art equivariant neural networks. 
However, one major challenge in the design of GA-based equivariant neural networks is overparameterization. This issue often leads to a tendency to overfit, particularly in cases where the training dataset is small -- a common scenario in natural science applications where such networks are typically employed. Additionally, the excessive number of parameters results in inefficient training times. 
The goal of this work is to present a novel, parameter-light equivariant neural network architecture, which balances between expressiveness of CGENN and parameter efficiency.
We call this architecture \emph{Generalized Lipschitz Group Equivariant Neural Networks (GLGENN)}, see Fig. \ref{figure_glgenn}. 
Our approach introduces a new parameter-light parametrization technique, which enables our model to either outperform or match existing equivariant models while significantly reducing the number of trainable parameters. 
The key idea is to design a novel weight-sharing approach \cite{weightsh} for GA-based neural networks  that respects the fundamental algebraic structures of GAs, thereby improving efficiency without sacrificing expressive power.\footnote{Application of the generalized Lipschitz groups instead of ordinary Lipschitz groups allows to achieve parameter efficiency. The generalized Lipschitz groups are important because they preserve the four fundamental subspaces of GAs under the significant operation of the twisted adjoint representation. We prove that equivariance of a mapping w.r.t. these groups, as well as the ordinary Lipschitz groups, implies its orthogonal groups equivariance. The key distinction is that the generalized Lipschitz groups contain ordinary Lipschitz groups as subgroups. As a result, the set of operations equivariant w.r.t. the generalized Lipschitz groups is a subset of the set of operations equivariant w.r.t. ordinary Lipschitz groups. This reduction in the number of ‘degrees of freedom’ encourages us to parametrize operations in layers in a more ‘economic’ way (there is a smaller number of parameters that we can place). Specifically, in all GLGENN layers, we employ such equivariant operations as projections of inputs-multivectors onto the four fundamental subspaces of GAs mentioned above, whereas CGENN relies on projections onto the subspaces of fixed grades. GLGENN and CGENN layers parameterize linear combinations, products, and normalizations of the corresponding projections.}

\vspace{-1mm}
The key contributions are as follows:
\vspace{-3mm}
\begin{itemize}
\vspace{-1mm}
    \item \textbf{Introduction of generalized Lipschitz groups.} We introduce and study a new class of Lie groups in arbitrary geometric algebra, which are related to pseudo-orthogonal groups and useful for construction of equivariant neural networks.
    \vspace{-2mm}
    \item \textbf{Design and implementation of GLGENN.} We construct a novel, parameter-light architecture of pseudo-orthogonal groups equivariant neural networks based on geometric algebras. Code is available at \href{https://github.com/katyafilimoshina/glgenn}{https://github.com/katyafilimoshina/glgenn}
    \vspace{-2mm}
    \item \textbf{Superior performance.}  GLGENN achieve state-of-the-art performance on benchmark equivariant tasks with significantly fewer trainable parameters.
\end{itemize}
\vspace{-3mm}
The paper is organized as follows. Section \ref{section:theoretical_background}  provides all necessary definitions  related to GAs and equivariant neural networks. In Section \ref{section:theoretical_results}, we present our main theoretical results and introduce and study generalized Lipschitz groups. Section \ref{section:methodology} applies these results for GLGENN construction. In Section \ref{section:experiments}, we evaluate GLGENN through experiments. All the mathematical details and proofs can be found in the appendix.

\section{Theoretical Background}\label{section:theoretical_background}
\vspace{-1mm}

\subsection{Geometric (Clifford) Algebras}\label{subsection_ga}
\vspace{-1mm}

Let us consider \emph{(Clifford) geometric algebra (GA)} \cite{hestenes,lounesto,p} $\C(V)=\C_{p,q,r}$, $p+q+r=n\geq1$, over a vector space $V$ with a symmetric bilinear form $\bb$ and the corresponding quadratic form $\q$, where $V$ can be real $\BR^{p,q,r}$ or complex  $\BC^{p+q,0,r}$. We use $\F$ to denote the field of real numbers $\BR$ in the first case and the field of complex numbers $\BC$ in the second case. In this work, we consider both the case of the non-degenerate GAs $\C_{p,q}:=\C_{p,q,0}$, $r=0$, and the case of the degenerate GAs $\C_{p,q,r}$, $r\neq0$.
We use $\Lambda_r$ to denote the subalgebra $\C_{0,0,r}$, which is the Grassmann (exterior) algebra \cite{phys,lounesto}.
A well-known example of geometric algebra is the spacetime algebra $\C_{1,3} =\C(\BR^{1,3})$, associated with the Minkowski space $\BR^{1,3}$, which plays a central role in relativistic physics. For the readers' convenience, in Appendix \ref{appendix_minkowski}, we illustrate all key theoretical concepts introduced in this section with examples in the setting of $\C_{1,3}$.

The identity element of $\C_{p,q,r}$ is denoted by 
$e\equiv1$, the generators are denoted by $e_a$, $a=1,\ldots,n$. The generators satisfy the following conditions:
$e_a e_b + e_b e_a = 2 \eta_{ab}e$ for any $a,b=1,\ldots,n$,
where $\eta=(\eta_{ab})$ is the diagonal matrix with $p$ times $+1$, $q$ times $-1$, and $r$ times $0$ on the diagonal in the real case $\C(\BR^{p,q,r})$ and $p+q$ times $+1$  and $r$ times $0$ on the diagonal in the complex case $\C(\BC^{p+q,0,r})$.

Let us consider the \emph{subspaces} $\C^k_{p,q,r}$ \emph{of fixed grades} $k=0,\ldots,n$. Their elements are linear combinations of the basis elements $e_{a_1\ldots a_k}:=e_{a_1}\cdots e_{a_k}$, $a_1<\cdots<a_k$. The grade-$0$ subspace can be denoted by $\C^0$ without the lower indices $p,q,r$, since it does not depend on the GA's signature.
We have $\C^{k}_{p,q,r}=\{0\}$ for $k<0$ and $k>n$.
We can represent any element (\emph{multivector}) $U\in\C_{p,q,r}$
as a sum of $n+1$ elements 
$U=\langle U\rangle_{0}+\cdots+\langle U\rangle_n,\; \langle U\rangle_k\in\C^{k}_{p,q,r}$, $k=0,\ldots,n$.
We call any operation of the form
\vspace{-2mm}
\begin{align}\label{def_conj}
    U \mapsto \sum_{k=0}^n \lambda_k \langle U\rangle_k,\quad \lambda_k=\pm1,
\end{align}
\vskip -0.2in
a \emph{conjugation operation} in $\C_{p,q,r}$.
Consider conjugation operations called \emph{grade involution} and \emph{reversion}. The grade involute of an element  $U\in\C_{p,q,r}$ is denoted by $\widehat{U}$ and the reversion is denoted by $\widetilde{U}$, and they are defined for an arbitrary $U\in\C_{p,q,r}$ as
\vspace{-2mm}
\begin{align}\label{def_inv_rev}
    \widehat{U} := \sum_{k=0}^n (-1)^k\langle U\rangle_k,\quad \widetilde{U} := \sum_{k=0}^n (-1)^{\frac{k(k-1)}{2}}\langle U\rangle_k.
\end{align}
\vskip -0.2in
The composition of grade involution and reversion is called \emph{Clifford conjugation}. The Clifford conjugate of $U\in\C_{p,q,r}$ is denoted by $\widehat{\widetilde{U}}$.
The grade involution defines the \emph{even} $\C^{(0)}_{p,q,r}$ and \emph{odd} $\C^{(1)}_{p,q,r}$ \emph{subspaces}:
\vspace{-2mm}
\begin{align}\label{def_even_odd}
\!\!\!\C^{(l)}_{p,q,r}:= \{U\in\C_{p,q,r}:\;\; \widehat{U}=(-1)^lU\},\quad l=0,1.
\end{align}
We can represent any element $U\in\C_{p,q,r}$ as a sum 
$U=\langle U\rangle_{(0)}+\langle U\rangle_{(1)},\quad \langle U\rangle_{(l)}\in\C^{(l)}_{p,q,r}$, $l=0,1$.
We use angle brackets $\langle \cdot \rangle_{(l)}$ to denote the operation of \emph{projection} of multivectors and sets onto $\C^{(l)}_{p,q,r}$.
The grade involution and reversion define four subspaces $\C^{\overline{0}}_{p,q,r}$,  $\C^{\overline{1}}_{p,q,r}$,  $\C^{\overline{2}}_{p,q,r}$, and  $\C^{\overline{3}}_{p,q,r}$ (they are called the \emph{subspaces of quaternion types} $0, 1, 2$, and $3$ respectively in  \citealp{quat2,quat3}):
\vspace{-2mm}
\begin{align}
\C^{\overline{k}}_{p,q,r}:=\{U\in\C_{p,q,r}:\;\; \widehat{U}=(-1)^k U,\nonumber
\\
\widetilde{U}=(-1)^{\frac{k(k-1)}{2}} U\},\quad k=0, 1, 2, 3.\label{qtdef}
\end{align}
\vskip -0.2in
In other words, $
\C^{\overline{k}}_{p,q,r}:=\C^k_{p,q,r}\oplus\C^{k+4}_{p,q,r}\oplus\C^{k+8}_{p,q,r}\oplus\cdots$ for $k=0,1,2,3$.
A discussion on the significance of the grade involution, reversion, and these subspaces is provided in Appendix \ref{appendix_grade_inv_rev}.
Note that the GA $\C_{p,q,r}$ can be represented as a direct sum of the subspaces $\C^{\overline{k}}_{p,q,r}$, $k=0, 1, 2, 3$, and viewed as $\mathbb{Z}_2\times\mathbb{Z}_2$-graded algebra with respect to the commutator and anticommutator \cite{b_lect}.
We can represent any element $U\in\C_{p,q,r}$ 
as a sum of $4$ elements: 
$
    U=\langle U\rangle_{\overline{0}}+\langle U\rangle_{\overline{1}}+\langle U\rangle_{\overline{2}}+\langle U\rangle_{\overline{3}}$,
where $\langle U\rangle_{\overline{m}}\in\C^{\overline{m}}_{p,q,r}$, $m=0,1,2,3$. 
The action of the grade involution, reversion, and Clifford conjugation on a multivector $U$ of the form above is summarized in Table \ref{table_qt}. Note that $\C^{\overline{k}}_{p,q,r}=\C^{k}_{p,q,r}$, $k=0, 1, 2, 3$, in the cases $n\leq 3$.

\begin{table}[t]
\caption{Signs of the projections of $U=\langle U\rangle_{\overline{0}}+\langle U\rangle_{\overline{1}}+\langle U\rangle_{\overline{2}}+\langle U\rangle_{\overline{3}}\in\C_{p,q,r}$ for the grade involution ($\widehat{\;\;\;}$), reversion ($\widetilde{\;\;\;}$), and Clifford conjugation ($\widehat{\widetilde{\;\;\;}}$) acting on it.}
\label{table_qt}
\vskip 0.15in
\begin{center}
\begin{small}
\begin{sc}
\begin{tabular}{c|cccc}
\toprule
$\C^{\overline{k}}_{p,q,r}$ & $k=0$ & $k=1$ & $k=2$ & $k=3$ \\
\midrule
$\widehat{\;\;\;}$ & $+$ & $-$ & $+$ & $-$    \\
$\widetilde{\;\;\;}$ & $+$ & $+$ & $-$ & $-$ \\
$\widehat{\widetilde{\;\;\;}}$ & $+$ & $-$ & $-$ & $+$  \\
\bottomrule
\end{tabular}
\end{sc}
\end{small}
\end{center}
\vskip -0.2in
\end{table}

\vspace{-1mm}
\subsection{Lipschitz Groups and Twisted Adjoint Representations}
\vspace{-1mm}

We use the upper index $\times$ to denote the subset $\H^{\times}$ of all invertible elements of any set $\H$.
Let us consider the \emph{adjoint representation} $\ad$ acting on the group of all invertible elements $\ad:\C^{\times}_{p,q,r}\rightarrow\Aut(\C_{p,q,r})$ as $T\mapsto\ad_T$, where $\ad_{T}:\C_{p,q,r}\rightarrow\C_{p,q,r}$:
\vspace{-1mm}
\begin{eqnarray}\label{ar}
\ad_{T}(U):=TU T^{-1},\quad U\in\C_{p,q,r},\;\; T\in\C^{\times}_{p,q,r}.
\end{eqnarray}
\vskip -0.1in

Also let us consider the \emph{twisted adjoint representation}. This notion is introduced by Atiyah, Bott, and Shapiro \cite{ABS} in a particular case, and there are two ways how to generalize it (see motivation in Appendix \ref{appendix_ad}). The first approach \cite{Choi,Harvey,LuSv} is to define it as the operation $\check{\ad}$ acting on the group of all invertible elements $\check{\ad}:\C^{\times}_{p,q,r}\rightarrow\Aut(\C_{p,q,r})$ as $T\mapsto\check{\ad}_T$ with $\check{\ad}_{T}:\C_{p,q,r}\rightarrow\C_{p,q,r}$:
\vspace{-1mm}
\begin{eqnarray}
\check{\ad}_{T}(U):=\widehat{T}U T^{-1},\quad U\in\C_{p,q,r},\;\; T\in\C^{\times}_{p,q,r}.\label{twa1}
\end{eqnarray}
The second approach \cite{HelmBook,Knus,W} is to define the twisted adjoint representation as the operation $\tilde{\ad}:\C^{\times}_{p,q,r}\rightarrow\Aut(\C_{p,q,r})$ acting as $T\mapsto\tilde{\ad}_T$ with $\tilde{\ad}_{T}:\C_{p,q,r}\rightarrow\C_{p,q,r}$:
\vspace{-1.5mm}
\begin{eqnarray}
\tilde{\ad}_{T}(U):=T\langle U\rangle_{(0)} T^{-1}+\widehat{T} \langle U\rangle_{(1)} T^{-1}\label{twa22}
\end{eqnarray}
\vskip -0.15in
for any $U\in\C_{p,q,r}$ and $T\in\C^{\times}_{p,q,r}$. 
The representation $\tilde{\ad}$ has the following properties, which we prove in Lemma \ref{app_lemma_prop} and apply in construction of equivariant layers in Section~\ref{section_mappings}. The properties of $\ad$ and $\check{\ad}$, are considered in Lemma~\ref{app_lemma_prop} as well.

\vskip -0.1in
\begin{lemma}\label{lemma_prop}
    Let $W\in\C^{\times}_{p,q,r}$, $U,V\in\C_{p,q,r}$, and $T\in(\C^{(0)\times}_{p,q,r}\cup\C^{(1)\times}_{p,q,r})\Lambda^{\times}_r$, where the notation $(\C^{(0)\times}_{p,q,r}\cup\C^{(1)\times}_{p,q,r})\Lambda^{\times}_r:=\{ab\;|\;a\in\C^{(0)\times}_{p,q,r}\cup\C^{(1)\times}_{p,q,r},\; b\in\Lambda^{\times}_r\}$ stands for the product of two groups. Then  $\tilde{\ad}_W$ satisfies:
    \vspace{-1.5mm}
    \begin{align}
        \!\!\!\!\!\tilde{\ad}_W(U+V)=\tilde{\ad}_W(U)+\tilde{\ad}_W(V),\quad\tilde{\ad}_W(c)=c\label{f_1_3}
    \end{align}
    \vskip -0.15in
      for any $c\in\C^0$.
   Moreover, $\tilde{\ad}_T$  satisfies  multiplicativity: 
   \vspace{-1.5mm}
    \begin{align}
        &\tilde{\ad}_T(UV)=\tilde{\ad}_T(U)\tilde{\ad}_T(V).\label{f_1_5}
    \end{align}
    \vskip -0.15in
\end{lemma}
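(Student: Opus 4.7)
The plan is to dispatch the additivity and scalar-fixing parts of (\ref{f_1_3}) quickly and then focus on the multiplicativity (\ref{f_1_5}), which is the main substance. Additivity is immediate: the even/odd projections $\langle\cdot\rangle_{(0)}$ and $\langle\cdot\rangle_{(1)}$ from (\ref{def_even_odd}) are linear in their argument, so $\tilde{\ad}_W$ is a sum of two linear maps in $U$. For $c\in\C^0$ we have $\langle c\rangle_{(0)}=c$ and $\langle c\rangle_{(1)}=0$, and since scalars are central in $\C_{p,q,r}$, $\tilde{\ad}_W(c)=WcW^{-1}=c$.

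For (\ref{f_1_5}) the approach is to re-express the twisted adjoint as $\tilde{\ad}_T(U)=T\phi_T(U)T^{-1}$, where $\phi_T(U):=\langle U\rangle_{(0)}+S_T\langle U\rangle_{(1)}$ and $S_T:=T^{-1}\widehat T$ (using $\widehat T=TS_T$). Since ordinary conjugation $V\mapsto TVT^{-1}$ is multiplicative, $\tilde{\ad}_T$ is multiplicative if and only if $\phi_T$ is an algebra homomorphism. Expanding $\phi_T(UV)-\phi_T(U)\phi_T(V)$ using the $\mathbb{Z}_2$-graded product rules $\langle UV\rangle_{(0)}=\langle U\rangle_{(0)}\langle V\rangle_{(0)}+\langle U\rangle_{(1)}\langle V\rangle_{(1)}$ and $\langle UV\rangle_{(1)}=\langle U\rangle_{(0)}\langle V\rangle_{(1)}+\langle U\rangle_{(1)}\langle V\rangle_{(0)}$, the difference collapses to $\bigl([S_T,\langle U\rangle_{(0)}]+\langle U\rangle_{(1)}-S_T\langle U\rangle_{(1)}S_T\bigr)\langle V\rangle_{(1)}$. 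Splitting by parity of $U$ yields two equivalent conditions: \textbf{(A)} $S_T$ commutes with every element of $\C^{(0)}_{p,q,r}$, and \textbf{(B)} $S_TXS_T=X$ for every $X\in\C^{(1)}_{p,q,r}$.

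Next I would analyze $S_T$ for $T=ab$ with $a\in\C^{(0)\times}_{p,q,r}\cup\C^{(1)\times}_{p,q,r}$ and $b\in\Lambda_r^\times$. Since $\widehat a=\pm a$, the $a$-factor cancels and $S_T=b^{-1}a^{-1}\widehat a\widehat b=\pm b^{-1}\widehat b$. Writing $b=b^{(0)}+b^{(1)}$ with $b^{(0)}\in\Lambda_r^{(0)}$ invertible (its scalar part coincides with that of $b$) and setting $c:=(b^{(0)})^{-1}b^{(1)}$, the graded commutativity of $\Lambda_r$ plus $e_a^2=0$ for all degenerate generators gives $c\in\Lambda_r^{(1)}$ and $c^2=0$. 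Hence $b=b^{(0)}(1+c)$, $b^{-1}=(1-c)(b^{(0)})^{-1}$, $\widehat b=b^{(0)}(1-c)$, and $S_T=\pm(1-2c)$.

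Both (A) and (B) then reduce to basis-level facts about $c\in\Lambda_r^{(1)}$. Writing $c$ as a sum of basis products of an odd number of degenerate generators, I would check case by case against a basis element $e_I$ of $\C_{p,q,r}$: either the supports share a degenerate index, in which case both $ce_I$ and $e_Ic$ vanish by $e_a^2=0$, or the supports are disjoint and $e_Je_I=(-1)^{|J||I|}e_Ie_J$ yields commutation when $|I|$ is even and anticommutation when $|I|$ is odd. This immediately gives (A) and the identity $\{c,U\}=0$ for every odd $U$. The remaining piece $cUc=0$ in (B) follows by the same pairing argument applied to triples $(J,K,J')$: swapping the two copies of $c$ makes each off-diagonal pair $e_Je_Ke_{J'}+e_{J'}e_Ke_J$ collapse to zero by the sign rule, while the diagonal $J=J'$ contribution vanishes by $c^2=0$. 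The main obstacle is precisely this mixed case, in which $T$ need not have a definite parity in $\C_{p,q,r}$, so the clean reductions $\tilde{\ad}_T=\ad_T$ (for $T$ even) or $\tilde{\ad}_T(U)=T\widehat U T^{-1}$ (for $T$ odd) no longer apply; the trick is to isolate the $\Lambda_r$-contribution through $S_T$ and exploit the nilpotent structure of the degenerate subalgebra.
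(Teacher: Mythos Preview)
Your proof is correct, but it takes a genuinely different route from the paper's. Both arguments hinge on the element $S_T=T^{-1}\widehat T$ and the two facts (A) $S_T$ commutes with $\C^{(0)}_{p,q,r}$ and (B) $S_TXS_T=X$ for odd $X$; the difference lies in how these are established. The paper expands $\tilde{\ad}_T(U)\tilde{\ad}_T(V)$ directly, then quotes an external characterization (Theorem~4.7 of \cite{OnSomeLie}) to conclude $S_T\in\Lambda_r^\times$, and finally invokes $\Lambda_r^\times=\ker(\tilde{\ad})$ (Lemma~\ref{app_lemma_ker}) to obtain (A) and (B) abstractly. You instead compute $S_T$ explicitly from the factorization $T=ab$: the definite-parity factor $a$ cancels, leaving $S_T=\pm b^{-1}\widehat b=\pm(1-2c)$ with $c\in\Lambda_r^{(1)}$ and $c^2=0$, after which (A) and (B) are verified by elementary sign-counting in the Grassmann algebra. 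Your argument is more self-contained and hands-on; the paper's is shorter but leans on two prior results. One small redundancy in your write-up: once you have $\{c,X\}=0$ for odd $X$, the vanishing of $cXc$ is immediate via $cXc=-Xc^2=0$, so the separate pairing argument for the triples $(J,K,J')$ is not needed.
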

\vskip -0.2in

Consider the \emph{Lipschitz groups} $\tilde{\Gamma}^1_{p,q,r}$ \cite{lg1,lounesto,Abl,br2,br1}:
\vspace{-1.5mm}
\begin{align}
    \tilde{\Gamma}^1_{p,q,r}:=\{T\in\C^{\times}_{p,q,r}: \quad \tilde{\ad}_T(\C^1_{p,q,r})
    \},\label{def_lg}
\end{align}
\vskip -0.15in
where $\tilde{\ad}_T(\C^1_{p,q,r})=\widehat{T}\C^{1}_{p,q,r}T^{-1}\subseteq\C^1_{p,q,r}$.
These groups are often considered in the literature in the case of the non-degenerate geometric algebra $\C_{p,q}$. The definition \eqref{def_lg} straightforwardly generalizes the common definition to the case of arbitrary $\C_{p,q,r}$.  The well-known spin groups (see details in Appendix \ref{appendix_ad}) in the non-degenerate \cite{lg1,lounesto,p} and degenerate \cite{Abl,crum_book,br2,br1,dereli} cases, which have various applications in physics, are normalized subgroups of $\tilde{\Gamma}^1_{p,q,r}$ defined using the following \emph{norm functions} $\psi,\chi:\C_{p,q,r}\rightarrow\C_{p,q,r}$ of the GA elements:
\vspace{-2mm}
\begin{align}\label{norm_funct}
\psi(U):=\widetilde{U}U,\qquad \chi(U):=\widehat{\widetilde{U}}U.
\end{align}

\vspace{-4mm}
\subsection{Equivariant Mappings and Neural Networks}
\vspace{-2mm}



    Suppose $G$ is a group and $\circ_X$ and $\circ_Y$ are its actions on the sets $X$ and $Y$ respectively. A \emph{function} (network) $L:X\rightarrow Y$ is called \emph{$G$-equivariant} (see, e.g., \citealp{equiv1,equiv2,yarotsky2022universal}) iff it commutes with these actions:
    \vspace{-1.5mm}
    \begin{align}
    L(g\circ_{X} x) = g \circ_{Y} L(x),\quad \forall g\in G,\quad \forall x\in X.
    \end{align}
    \vskip -0.1in
The definition means that we get the same output if we transform the input to the neural network or transform the output. 
We prove several general statements about equivariance with respect to an arbitrary group in Appendix \ref{appendix_equivariant}.

\vspace{-2mm}
\section{Theoretical Results}\label{section:theoretical_results}

\vspace{-2mm}

\subsection{Generalized Lipschitz Groups}
\vspace{-1mm}

In this section, we introduce and study the generalized Lipschitz groups $\tilde{\Gamma}^{\overline{k}}_{p,q,r}$, $k=0,1,2,3$, in the case of arbitrary degenerate or non-degenerate geometric algebra $\C_{p,q,r}$. 
For GLGENN, we are mainly interested in 
$\tilde{\Gamma}^{\overline{1}}_{p,q,r}$, however other groups $\tilde{\Gamma}^{\overline{k}}_{p,q,r}$, $k=0,2,3$, are necessary to prove the main statements about $\tilde{\Gamma}^{\overline{1}}_{p,q,r}$ and serve as auxiliary tools.
The (ordinary) Lipschitz groups $\tilde{\Gamma}^{1}_{p,q,r}$ \eqref{def_lg} preserve the subspace $\C^1_{p,q,r}$ of the first grade under the twisted adjoint representation $\tilde{\ad}$ \eqref{twa1}. 
The \emph{generalized Lipschitz groups} preserve the subspaces $\C^{\overline{k}}_{p,q,r}$, $k=0,1,2,3$, determined by the grade involution and reversion \eqref{qtdef}, under the same representation $\tilde{\ad}$:
\vspace{-1.5mm}
\begin{align}
\tilde{\Gamma}^{\overline{k}}_{p,q,r} := \{T\in\C^{\times}_{p,q,r}:\;\;\tilde{\ad}_T(\C^{\overline{k}}_{p,q,r})\subseteq\C^{\overline{k}}_{p,q,r}\}.\label{gamma_ov_tk}
\end{align}
\vskip -0.15in
The groups $\tilde{\Gamma}^{\overline{k}}_{p,q,r}$, $k=0,1,2,3$, can be considered as generalizations of Lipschitz groups because of the following theorem.
\vspace{-1mm}
\begin{theorem}\label{thm_subgroup}
The (ordinary) Lipschitz groups are subgroups of the generalized Lipschitz groups and coincide with some of them in the low-dimensional case:
\vspace{-1.5mm}
\begin{align}
    &\tilde{\Gamma}^{1}_{p,q,r}\subseteq \tilde{\Gamma}^{\overline{1}}_{p,q,r}\subseteq\tilde{\Gamma}^{\overline{k}}_{p,q,r},\quad k=0,1,2,3,\quad \forall n;\label{subg1}
    \\
    &\tilde{\Gamma}^{1}_{p,q,r}=\tilde{\Gamma}^{\overline{1}}_{p,q,r},\qquad n\leq 4.\label{subg2}
\end{align}
\vskip -0.2in
\end{theorem}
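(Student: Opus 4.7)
The plan is to split the theorem into three pieces.

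First, the equality \eqref{subg2} for $n\le 4$ is immediate: all $\C^k_{p,q,r}$ with $k\ge 5$ vanish, so $\C^{\overline{1}}_{p,q,r}=\C^1_{p,q,r}$ and the defining conditions of $\tilde{\Gamma}^1_{p,q,r}$ and $\tilde{\Gamma}^{\overline{1}}_{p,q,r}$ coincide.

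Second, I plan to prove separately an auxiliary structural lemma stating that $\tilde{\Gamma}^{\overline{1}}_{p,q,r}\subseteq(\C^{(0)\times}_{p,q,r}\cup\C^{(1)\times}_{p,q,r})\Lambda^\times_r$. In the non-degenerate case and for $\tilde{\Gamma}^1_{p,q}$ this is the classical Atiyah-Bott-Shapiro-type claim that the Lipschitz group consists of homogeneous-parity invertible elements; for the larger group and the degenerate setting, I would write $T=T_0+T_1$ and, from the constraint $\widehat{T}vT^{-1}\in\C^{\overline{1}}_{p,q,r}$ for all $v\in\C^1_{p,q,r}$, show that any mixed-parity part of $T$ is absorbed into the Grassmann factor $\Lambda^\times_r$. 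With this lemma in hand, Lemma \ref{lemma_prop} makes $\tilde{\ad}_T$ a multiplicative algebra automorphism of $\C_{p,q,r}$, which is the key input for the rest.

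Third, for $T\in\tilde{\Gamma}^1_{p,q,r}$ the restriction $\tilde{\ad}_T|_{\C^1_{p,q,r}}$ preserves the bilinear form (from $\tilde{\ad}_T(v)^2=\tilde{\ad}_T(v^2)=v^2$), so by the universal property of the Clifford construction it extends uniquely to a grade-preserving algebra automorphism of $\C_{p,q,r}$; by uniqueness this extension is $\tilde{\ad}_T$ itself, which therefore preserves every $\C^k_{p,q,r}$, hence every $\C^{\overline{k}}_{p,q,r}$, and in particular the first inclusion in \eqref{subg1} follows. For the harder case $T\in\tilde{\Gamma}^{\overline{1}}_{p,q,r}$, where $\C^1_{p,q,r}$ need not be preserved, I show instead that $\tilde{\ad}_T$ commutes with both the grade involution $\widehat{\cdot}$ and the reversion $\widetilde{\cdot}$, which by \eqref{qtdef} is exactly preservation of every $\C^{\overline{k}}_{p,q,r}$. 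Commutation with $\widehat{\cdot}$ is a direct calculation from homogeneity of $T$. For reversion, both $\widetilde{\cdot}\circ\tilde{\ad}_T$ and $\tilde{\ad}_T\circ\widetilde{\cdot}$ are anti-homomorphisms of $\C_{p,q,r}$, so global agreement reduces to agreement on the generating subspace $\C^1_{p,q,r}\subseteq\C^{\overline{1}}_{p,q,r}$: for $v\in\C^1_{p,q,r}$, Table \ref{table_qt} gives $\widetilde{v}=v$, while the hypothesis places $\tilde{\ad}_T(v)$ in $\C^{\overline{1}}_{p,q,r}$ whose elements are also fixed by reversion, so $\widetilde{\tilde{\ad}_T(v)}=\tilde{\ad}_T(v)=\tilde{\ad}_T(\widetilde{v})$.

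The main obstacle I anticipate is the structural lemma: the classical argument for $\tilde{\Gamma}^1_{p,q}$ uses smallness of $\C^1_{p,q}$ decisively, and adapting it to the enlarged $\C^{\overline{1}}_{p,q,r}$ together with a nontrivial Grassmann radical requires a careful parity analysis of $T=T_0+T_1$ modulo $\Lambda^\times_r$. Once that is in place, the commutation argument above is essentially formal.
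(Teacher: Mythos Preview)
Your approach is genuinely different from the paper's and contains a nice idea, but it has a real gap. The paper never argues via the universal property or via commutation of $\tilde{\ad}_T$ with $\widehat{\cdot}$ and $\widetilde{\cdot}$; instead it introduces auxiliary groups $\check{\Q}^{\overline{k}}_{p,q,r}$, $\Q^{\overline{k}}_{p,q,r}$ defined by the norm conditions $\widetilde{T}T,\widehat{\widetilde{T}}T\in\check{\Z}^{k\times}_{p,q,r}$ (resp.\ $\Z^{k\times}_{p,q,r}$), proves $\tilde{\Gamma}^{\overline{k}}_{p,q,r}$ coincides with these (Theorems \ref{app_maintheo_q}, \ref{app_maintheo_checkq} via Lemma \ref{app_lemma_for_AB}), and then reads off every inclusion in \eqref{subg1} from explicit inclusions among the centralizers $\check{\Z}^{1}_{p,q,r}\subseteq\Z^{2}_{p,q,r}\subseteq\Z^{4}_{p,q,r}$ etc. The first inclusion $\tilde{\Gamma}^1_{p,q,r}\subseteq\tilde{\Gamma}^{\overline{1}}_{p,q,r}$ (Theorem \ref{thm_lgsub}) is obtained by applying reversion and Clifford conjugation to the relation $\widehat{T}vT^{-1}\in\C^1_{p,q,r}$ and reading off directly that $\widetilde{T}T,\widehat{\widetilde{T}}T\in\check{\Z}^{1}_{p,q,r}=\Lambda_r$, i.e.\ $T\in\check{\Q}^{\overline{1}}_{p,q,r}=\tilde{\Gamma}^{\overline{1}}_{p,q,r}$. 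Your anti-homomorphism argument for reversion (two anti-automorphisms agreeing on the generating space $\C^1_{p,q,r}$ because $\C^{\overline{1}}_{p,q,r}$ is fixed by $\widetilde{\cdot}$) is clean and, once multiplicativity is available, gives $\tilde{\Gamma}^{\overline{1}}_{p,q,r}\subseteq\tilde{\Gamma}^{\overline{k}}_{p,q,r}$ without any centralizer computations; that is a genuine simplification over the paper's route. (Your remark that commutation with $\widehat{\cdot}$ follows ``from homogeneity of $T$'' is slightly off, since $T$ need only be homogeneous up to a $\Lambda^\times_r$ factor; but the same generator argument you use for $\widetilde{\cdot}$ works verbatim for $\widehat{\cdot}$.)

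The gap is the structural lemma itself. Everything in your Part~3 rests on multiplicativity of $\tilde{\ad}_T$, which by Lemma \ref{lemma_prop} needs $T\in(\C^{(0)\times}_{p,q,r}\cup\C^{(1)\times}_{p,q,r})\Lambda^\times_r$. You state this for $\tilde{\Gamma}^{\overline{1}}_{p,q,r}$ but then apply it to $T\in\tilde{\Gamma}^1_{p,q,r}$, which is circular unless you prove it independently for $\tilde{\Gamma}^1_{p,q,r}$; and your sketch (``write $T=T_0+T_1$ and absorb the mixed-parity part into $\Lambda^\times_r$'') is not a proof. In fact even the classical non-degenerate statement $\tilde{\Gamma}^1_{p,q}\subseteq\C^{(0)\times}_{p,q}\cup\C^{(1)\times}_{p,q}$ is usually proved by first showing $\widetilde{T}T,\widehat{\widetilde{T}}T\in\F^\times$ and then deducing $\widehat{T}=\pm T$; and the paper's proof of the general structural lemma (Theorem \ref{lemma_chQ_eq}) likewise goes through the norm-function description $\tilde{\Gamma}^{\overline{1}}_{p,q,r}=\check{\Q}^{\overline{1}}_{p,q,r}$ and then a short manipulation giving $\widehat{T^{-1}}T\in\Lambda^\times_r$. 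So the very machinery you are trying to bypass is, in both the classical and degenerate settings, exactly what makes the structural lemma go through. Unless you can supply a genuinely norm-free parity argument, your plan does not close, and the most economical fix is to import the paper's Theorem \ref{thm_lgsub} (for $\tilde{\Gamma}^1_{p,q,r}$) and Theorem \ref{lemma_chQ_eq} (for $\tilde{\Gamma}^{\overline{1}}_{p,q,r}$) and then run your commutation argument on top of them.
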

\vspace{-7.5mm}
\begin{proof}
    The proof is based on the equivalent definitions of $\tilde{\Gamma}^{\overline{k}}_{p,q,r}$, $k=0,1,2,3$, and is provided in Appendix \ref{appendix_glg} (see Corollary \ref{coroll_gen}, Remark \ref{rem_lg}, and Theorem \ref{thm_lgsub}).
\end{proof}
\vspace{-2mm}
In this work, we construct neural networks that are equivariant with respect to the action $\tilde{\ad}$ of the generalized Lipschitz groups $\tilde{\Gamma}^{\overline{1}}_{p,q,r}$ \eqref{gamma_ov_tk}. To prove the main statements for the design of the layers, we need Theorem \ref{maintheo_checkq} about these groups.

Consider the following groups preserving the subspaces $\C^{\overline{k}}_{p,q,r}$, $k=0,1,2,3$, under the adjoint representation $\ad$~\eqref{ar} and twisted adjoint representation $\check{\ad}$ \eqref{twa1} respectively:
\vspace{-5mm}
\begin{align}
    \!\!\!\Gamma^{\overline{k}}_{p,q,r} := \{T\in\C^{\times}_{p,q,r}:\quad \ad_T(\C^{\overline{k}}_{p,q,r})\subseteq\C^{\overline{k}}_{p,q,r}\},\label{gamma_ov_k}
    \\
   \!\!\!\check{\Gamma}^{\overline{k}}_{p,q,r} := \{T\in\C^{\times}_{p,q,r}:\quad \check{\ad}_T(\C^{\overline{k}}_{p,q,r})\subseteq\C^{\overline{k}}_{p,q,r}\},\label{gamma_ov_chk}
\end{align}
\vskip -0.15in
where $\ad_T(\C^{\overline{k}}_{p,q,r})=T\C^{\overline{k}}_{p,q,r}T^{-1}$ and $\check{\ad}_T(\C^{\overline{k}}_{p,q,r})=\widehat{T}\C^{\overline{k}}_{p,q,r}T^{-1}$.
Further, we show that these groups can be defined in an equivalent way, using only the norm functions $\psi$ and $\chi$ \eqref{norm_funct} applied in the theory of spin groups.

Let us introduce the families of Lie groups ${\Q}^{\overline{1}}_{p,q,r}$, ${\Q}^{\overline{2}}_{p,q,r}$, ${\Q}^{\overline{3}}_{p,q,r}$, and ${\Q}^{\overline{0}}_{p,q,r}$:
\vspace{-3mm}
\begin{align}
   &{\Q}^{\overline{k}}_{p,q,r}:=\{T\in\C^{\times}_{p,q,r}:\quad \widetilde{T}T,\widehat{\widetilde{T}}T\in{\Z}^{k\times}_{p,q,r}\},\label{qk}
    \\
   &{\Q}^{\overline{0}}_{p,q,r}:=\{T\in\C^{\times}_{p,q,r}:\quad\widetilde{T}T,\widehat{\widetilde{T}}T\in\Z^{4\times}_{p,q,r}\},\label{q0}
\end{align}
\vskip -0.2in
and $\check{\Q}^{\overline{1}}_{p,q,r}$, $\check{\Q}^{\overline{2}}_{p,q,r}$, $\check{\Q}^{\overline{3}}_{p,q,r}$, $\check{\Q}^{\overline{0}}_{p,q,r}$:
\vspace{-3mm}
\begin{eqnarray}
    \!\!\!\!\!\!\!\!\!\!\!\!\!\!\!\!\!\!&&\check{\Q}^{\overline{k}}_{p,q,r}:=\{T\in\C^{\times}_{p,q,r}:\quad \widetilde{T}T,\widehat{\widetilde{T}}T\in\check{\Z}^{k\times}_{p,q,r}\},\label{chqk}
    \\
    \!\!\!\!\!\!\!\!\!\!\!\!\!\!\!\!\!\!&&\check{\Q}^{\overline{0}}_{p,q,r}:=\{T\in\C^{\times}_{p,q,r}:\quad\widetilde{T}T,\widehat{\widetilde{T}}T\in\langle\Z^{4}_{p,q,r}\rangle_{(0)}^{\times}\},\label{chq0}
\end{eqnarray}
\vskip -0.2in
  with $k=1,2,3$,
where the sets 
\vspace{-1mm}
\begin{align*}
    \Z^m_{p,q,r}:= \{X\in\C_{p,q,r}:\;\; XV=VX,\;\; \forall V\in\C^m_{p,q,r}\}
\end{align*}
\vskip -0.15in
are the \emph{centralizers} of the subspaces $\C^{m}_{p,q,r}$ in $\C_{p,q,r}$,
and 
\vspace{-1mm}
\begin{align*}
    \check{\Z}^m_{p,q,r} :=  \{X\in\C_{p,q,r}:\;\; \widehat{X}V=VX,\;\; \forall V\in\C^m_{p,q,r}\}
\end{align*}
\vskip -0.15in
are the \emph{twisted centralizers} of $\C^{m}_{p,q,r}$ in $\C_{p,q,r}$. These sets are studied in detail in \citealp{cen}.
We provide explicit forms of $\Z^m_{p,q,r}$ and $\check{\Z}^m_{p,q,r}$, $m\leq4$, in Remark \ref{app_cases_we_use}.

\begin{theorem}\label{maintheo_checkq}
In degenerate and non-degenerate geometric algebras $\C_{p,q,r}$, we have for $k=1,2,3$,
\vspace{-1.5mm}
\begin{align}
&\Gamma^{\overline{k}}_{p,q,r}=\Q^{\overline{k}}_{p,q,r},\quad\Gamma^{\overline{0}}_{p,q,r}=\Q^{\overline{0}}_{p,q,r},\label{thq_1}
\\
&\check{\Gamma}^{\overline{1}}_{p,q,r}=\check{\Q}^{\overline{1}}_{p,q,r}\subseteq\check{\Gamma}^{\overline{3}}_{p,q,r}=\check{\Q}^{\overline{3}}_{p,q,r},\label{thq_3}
\\
&\check{\Gamma}^{\overline{2}}_{p,q,r}=\check{\Q}^{\overline{2}}_{p,q,r},\quad \check{\Gamma}^{\overline{0}}_{p,q,r}=\check{\Q}^{\overline{0}}_{p,q,r}.\label{thq_4}
\end{align}
\vskip -0.15in
Moreover, for $ m=0,1,2,3$,
\vspace{-1.5mm}
\begin{eqnarray} &\Gamma^{\overline{1}}_{p,q,r}\subseteq\Gamma^{\overline{m}}_{p,q,r}\subseteq\Gamma^{\overline{0}}_{p,q,r},\label{thq_5}
    \\
    &\check{\Gamma}^{\overline{m}}_{p,q,r}\subseteq\Gamma^{\overline{0}}_{p,q,r},\quad  \check{\Gamma}^{\overline{1}}_{p,q,r}\subseteq\Gamma^{\overline{2}}_{p,q,r}.\label{thq_6}
\end{eqnarray}
\vskip -0.15in
\end{theorem}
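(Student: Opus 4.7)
The strategy is to translate the subspace-preservation conditions defining $\Gamma^{\overline{k}}_{p,q,r}$ and $\check{\Gamma}^{\overline{k}}_{p,q,r}$ into algebraic conditions on the norm elements $\psi(T)=\widetilde{T}T$ and $\chi(T)=\widehat{\widetilde{T}}T$, using Table~\ref{table_qt} as the bridge. Specifically, $U\in\C^{\overline{k}}_{p,q,r}$ iff $\widehat{U}=(-1)^k U$ and $\widetilde{U}=(-1)^{k(k-1)/2} U$ both hold. Hence $T\in\Gamma^{\overline{k}}_{p,q,r}$ iff, for every $U\in\C^{\overline{k}}_{p,q,r}$, the two identities $\widehat{TUT^{-1}}=(-1)^k TUT^{-1}$ and $\widetilde{TUT^{-1}}=(-1)^{k(k-1)/2} TUT^{-1}$ hold. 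Exploiting that $\widehat{\cdot}$ is an automorphism and $\widetilde{\cdot}$ is an anti-automorphism, the reversion condition unwinds to $(\widetilde{T}T)\,U=U\,(\widetilde{T}T)$, while the combination of the reversion and grade-involution conditions unwinds to the analogous commutation relation for $\widehat{\widetilde{T}}T$. This places $\psi(T)$ and $\chi(T)$ in a centralizer.

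Next, I will match the resulting commutation relations with the explicit description of $\Z^m_{p,q,r}$ and $\check{\Z}^m_{p,q,r}$ recorded in Remark~\ref{app_cases_we_use} to identify, for each $k\in\{0,1,2,3\}$, the correct single-grade centralizer $\Z^{k\times}_{p,q,r}$ (respectively $\Z^{4\times}_{p,q,r}$ in the $k=0$ case) appearing in definitions \eqref{qk}--\eqref{q0}. This yields the equalities in \eqref{thq_1}. For the twisted statements \eqref{thq_3} and \eqref{thq_4}, I repeat the same analysis with $TUT^{-1}$ replaced by $\widehat{T}UT^{-1}$: the grade-involution identity now picks up an extra sign, so the commutation relations become the twisted commutation relations defining $\check{\Z}^{k\times}_{p,q,r}$. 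The slight asymmetry that $\check{\Gamma}^{\overline{1}}_{p,q,r}=\check{\Q}^{\overline{1}}_{p,q,r}\subseteq\check{\Gamma}^{\overline{3}}_{p,q,r}=\check{\Q}^{\overline{3}}_{p,q,r}$ (rather than equality) will follow from the containment $\check{\Z}^{1}_{p,q,r}\subseteq\check{\Z}^{3}_{p,q,r}$.

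For the inclusions \eqref{thq_5} and \eqref{thq_6}, once the group equalities are established, the problem reduces to tracking inclusions among the (twisted) centralizers. The chain $\Z^{1\times}_{p,q,r}\subseteq\Z^{m\times}_{p,q,r}\subseteq\Z^{4\times}_{p,q,r}$ for $m\in\{0,2,3\}$ immediately gives \eqref{thq_5}: elements satisfying the strictest $k=1$ commutation condition automatically satisfy the weaker ones. For \eqref{thq_6}, I will use the relation between ordinary and twisted centralizers (from \citealp{cen}) to compare $\check{\Q}^{\overline{m}}_{p,q,r}$ with $\Q^{\overline{0}}_{p,q,r}$ and $\check{\Q}^{\overline{1}}_{p,q,r}$ with $\Q^{\overline{2}}_{p,q,r}$; geometrically, passing between $\ad$ and $\check{\ad}$ shifts parity, so a twisted-adjoint preservation of one subspace corresponds to an ordinary-adjoint preservation of a parity-shifted companion subspace.

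The main obstacle I anticipate is the gap between the centralizer of the whole multi-grade subspace $\C^{\overline{k}}_{p,q,r}$ (which is what the unwinding of the definition directly produces) and the single-grade centralizers $\Z^{k\times}_{p,q,r}$ that actually appear in \eqref{qk}--\eqref{chq0}. Reconciling these uses the structural results of \citealp{cen}; care is especially needed in the degenerate case $r\neq 0$, where the nilpotent $\Lambda_r$-factor must be tracked through every commutation step. A secondary bookkeeping challenge is that the sign $(-1)^{k(k-1)/2}$ changes behavior across $k=0,1,2,3$, so the four cases have to be handled with matching centralizer indices rather than by a single uniform argument.
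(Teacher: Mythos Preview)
Your proposal is correct and follows essentially the same route as the paper: reduce the subspace-preservation conditions to the requirement that $\psi(T)=\widetilde{T}T$ and $\chi(T)=\widehat{\widetilde{T}}T$ lie in the (twisted) centralizer $\Z^{\overline{k}}_{p,q,r}$ (resp.\ $\check{\Z}^{\overline{k}}_{p,q,r}$), then invoke the identification $\Z^{\overline{k}}_{p,q,r}=\Z^{k}_{p,q,r}$, $\Z^{\overline{0}}_{p,q,r}=\Z^{4}_{p,q,r}$ from \citealp{cen}, and finally read off the inclusions from the explicit centralizer containments in Remark~\ref{app_cases_we_use}. The only cosmetic difference is that the paper packages the first reduction by citing Lemma~\ref{app_lemma_for_AB} (the decomposition $\C^{\overline{k}}_{p,q,r}=\C^{\overline{k\,m}}_{p,q,r}\cap\C^{\overline{k\,l}}_{p,q,r}$ with $m=k-1$, $l=k+1$ mod $4$), whereas you unwind the reversion and Clifford-conjugation eigenvalue conditions directly; these are the same computation.
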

\vspace{-7.5mm}
\begin{proof}
    The proof relies on the relation between the preservation of the subspaces $\C^{\overline{k}}_{p,q,r}$, $k=0,1,2,3$, under $\tilde{\ad}$ and the values of the norm functions $\psi$ and $\chi$ \eqref{norm_funct}. It is provided in Appendix \ref{appendix_glg} (see Theorems \ref{app_maintheo_q}, \ref{app_maintheo_checkq} and Remark \ref{app_rem_rel}).
\end{proof}
\vspace{-2.5mm}
The generalized Lipschitz groups $\tilde{\Gamma}^{\overline{k}}_{p,q,r}$ \eqref{gamma_ov_tk}, $k=0,1,2,3$, are related to the groups ${\Gamma}^{\overline{k}}_{p,q,r}$ \eqref{gamma_ov_k} and $\check{\Gamma}^{\overline{k}}_{p,q,r}$ \eqref{gamma_ov_chk}:
\vspace{-1.5mm}
\begin{align}\label{def_tilde_gk}
    \tilde{\Gamma}^{\overline{k}}_{p,q,r}=
    \left\lbrace
    \begin{array}{lll}
    \check{\Gamma}^{\overline{k}}_{p,q,r},&&k=1,3,
    \\
    {\Gamma}^{\overline{k}}_{p,q,r},&& k=0,2,
    \end{array}
    \right.
\end{align}
\vskip -0.15in
since $\tilde{\ad}_T(\C^{\overline{k}}_{p,q,r})={\ad}_T(\C^{\overline{k}}_{p,q,r})$ in the cases $k=0,2$ and $\tilde{\ad}_T(\C^{\overline{k}}_{p,q,r})=\check{\ad}_T(\C^{\overline{k}}_{p,q,r})$ in the cases $k=1,3$.

As a corollary of Theorem \ref{maintheo_checkq} and \eqref{def_tilde_gk}, the generalized Lipschitz groups $\tilde{\Gamma}^{\overline{k}}_{p,q,r}$ \eqref{gamma_ov_tk} satisfy
\vspace{-0.25cm}
\begin{align}
    &\tilde{\Gamma}^{\overline{1}}_{p,q,r} \!= \!\check{\Q}^{\overline{1}}_{p,q,r} \subseteq \tilde{\Gamma}^{\overline{2}}_{p,q,r} \!= \!\Q^{\overline{2}}_{p,q,r} \subseteq \tilde{\Gamma}^{\overline{0}}_{p,q,r} \!=\!\Q^{\overline{0}}_{p,q,r},\label{genall1}
    \\
    &\tilde{\Gamma}^{\overline{1}}_{p,q,r} \subseteq \tilde{\Gamma}^{\overline{3}}_{p,q,r} = \check{\Q}^{\overline{3}}_{p,q,r}.\label{genall2}
\end{align}
\vskip -0.15in
Note that the group $\tilde{\Gamma}^{\overline{1}}_{p,q,r}$ can be regarded as the most significant among the generalized Lipschitz groups $\tilde{\Gamma}^{\overline{k}}_{p,q,r}$, $k=0,1,2,3$, because it contains the (ordinary) Lipschitz group as a subgroup and is itself a subgroup of all other generalized Lipschitz groups (see Theorem \ref{thm_subgroup} and  \eqref{genall1}-\eqref{genall2}). We prove that its elements are of a special form:
\begin{theorem}\label{thm_subgroupP}
We have
$\tilde{\Gamma}^{\overline{1}}_{p,q,r}\subseteq (\C^{(0)\times}_{p,q,r}\cup\C^{(1)\times}_{p,q,r})^{\times}\Lambda^{\times}_r$.
\end{theorem}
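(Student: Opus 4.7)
The plan is to use the norm-function characterization $\tilde{\Gamma}^{\overline{1}}_{p,q,r}=\check{\Q}^{\overline{1}}_{p,q,r}$ supplied by Theorem \ref{maintheo_checkq} and \eqref{genall1}, which rewrites $T\in\tilde{\Gamma}^{\overline{1}}_{p,q,r}$ as $\widetilde{T}T,\,\widehat{\widetilde{T}}T\in\check{\Z}^{1\times}_{p,q,r}$. The first step is to identify this twisted centralizer explicitly. A direct sign computation on a basis monomial $e_{i_1\ldots i_k}$ shows that $\widehat{e_{i_1\ldots i_k}}v=v\,e_{i_1\ldots i_k}$ holds for every vector $v$ iff every index $i_j$ corresponds to a degenerate generator: the presence of any non-degenerate $e_{i_j}$ would, when tested against $v=e_{i_j}$, force $\eta_{i_j i_j}=0$. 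Hence $\check{\Z}^1_{p,q,r}=\Lambda_r$ and $\check{\Z}^{1\times}_{p,q,r}=\Lambda^{\times}_r$, consistent with the explicit centralizer list in Remark \ref{app_cases_we_use}.

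Second, I would split $T=T_{(0)}+T_{(1)}$ into even and odd parts. Reversion preserves the $\mathbb{Z}_2$-grading, so $\widehat{\widetilde{T}}=\widetilde{T_{(0)}}-\widetilde{T_{(1)}}$, and combining the two relations $\widetilde{T}T,\,\widehat{\widetilde{T}}T\in\Lambda_r$ additively gives $\widetilde{T}T+\widehat{\widetilde{T}}T=2\widetilde{T_{(0)}}T\in\Lambda_r$ and $\widetilde{T}T-\widehat{\widetilde{T}}T=2\widetilde{T_{(1)}}T\in\Lambda_r$. Since $\Lambda_r$ is itself a $\mathbb{Z}_2$-graded subspace of $\C_{p,q,r}$, projecting onto even and odd parts keeps us inside $\Lambda_r$, giving
\begin{align*}
\widetilde{T_{(0)}}T_{(0)},\ \widetilde{T_{(1)}}T_{(1)} &\in \Lambda_r\cap\C^{(0)}_{p,q,r},\\
\widetilde{T_{(0)}}T_{(1)},\ \widetilde{T_{(1)}}T_{(0)} &\in \Lambda_r\cap\C^{(1)}_{p,q,r}.
\end{align*}

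Finally, I would exploit that $\widetilde{T}T\in\Lambda^{\times}_r$ only requires its scalar part to be nonzero. The two odd cross-terms contribute nothing to this scalar, so $\langle\widetilde{T_{(0)}}T_{(0)}\rangle_0+\langle\widetilde{T_{(1)}}T_{(1)}\rangle_0\neq 0$, and at least one of these even diagonal products, say $\widetilde{T_{(0)}}T_{(0)}$, lies in $\Lambda^{\times}_r$. Then $(\widetilde{T_{(0)}}T_{(0)})^{-1}\widetilde{T_{(0)}}$ is a left inverse of $T_{(0)}$ in $\C_{p,q,r}$, which upgrades to a two-sided inverse by finite-dimensionality; moreover $T_{(0)}^{-1}T_{(1)}=(\widetilde{T_{(0)}}T_{(0)})^{-1}\widetilde{T_{(0)}}T_{(1)}$ is a product of two elements of $\Lambda_r$, the first invertible there, hence lies in $\Lambda_r$. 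The factorization $T=T_{(0)}\bigl(1+T_{(0)}^{-1}T_{(1)}\bigr)$ then exhibits $T$ as a product of $T_{(0)}\in\C^{(0)\times}_{p,q,r}$ and an element of $\Lambda^{\times}_r$ (whose scalar part is $1$). The case when $\widetilde{T_{(1)}}T_{(1)}$ is the invertible diagonal is completely symmetric and yields a factorization through $\C^{(1)\times}_{p,q,r}$.

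I expect the identification $\check{\Z}^1_{p,q,r}=\Lambda_r$ to be the main obstacle: it is the step that converts the algebraic hypothesis $T\in\tilde{\Gamma}^{\overline{1}}_{p,q,r}$ into a Grassmann-valued constraint on the norms $\widetilde{T}T$ and $\widehat{\widetilde{T}}T$, and requires careful sign bookkeeping separating degenerate from non-degenerate generators within each monomial. Once this is in place, the rest of the argument reduces to an elementary factorization inside $\Lambda_r$ together with the fact that an invertible scalar part suffices for invertibility in the Grassmann subalgebra.
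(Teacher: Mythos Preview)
Your argument is correct. Both you and the paper start from the same place---the characterization $\tilde{\Gamma}^{\overline{1}}_{p,q,r}=\check{\Q}^{\overline{1}}_{p,q,r}$ from Theorem~\ref{maintheo_checkq}, giving $\widetilde{T}T,\ \widehat{\widetilde{T}}T\in\check{\Z}^{1\times}_{p,q,r}=\Lambda^{\times}_r$---but then diverge. The paper's proof (Theorem~\ref{lemma_chQ_eq}) manipulates these two relations to obtain $\widehat{T^{-1}}T=\widehat{U^{-1}}W\in\Lambda^{\times}_r$ and then invokes an external result (Theorem~4.7 of \citealp{OnSomeLie}), which states that $(\C^{(0)\times}_{p,q,r}\cup\C^{(1)\times}_{p,q,r})\Lambda^{\times}_r=\{T\in\C^{\times}_{p,q,r}:\widehat{T^{-1}}T\in\Lambda^{\times}_r\}$. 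You instead construct the factorization by hand: splitting $T=T_{(0)}+T_{(1)}$, extracting the four products $\widetilde{T_{(i)}}T_{(j)}\in\Lambda_r$, and using invertibility of one diagonal block to write $T=T_{(0)}(1+T_{(0)}^{-1}T_{(1)})$ (or the odd analogue). Your route is longer but entirely self-contained, and in effect supplies an independent proof of the direction of the cited result that the paper needs. One minor comment: the identification $\check{\Z}^1_{p,q,r}=\Lambda_r$, which you flag as the main obstacle, is already recorded in Remark~\ref{app_cases_we_use} \eqref{ch_cc_1} and can simply be cited; the genuine new content in your argument is the explicit factorization step.
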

\vspace{-5.5mm}
\begin{proof}
    The proof is based on the equivalent definition of the group $(\C^{(0)\times}_{p,q,r}\cup\C^{(1)\times}_{p,q,r})^{\times}\Lambda^{\times}_r$ and is provided in  Theorem~\ref{lemma_chQ_eq}.
\end{proof}
\vspace{-2.5mm}
The \textbf{key takeaways} from this section for constructing the generalized Lipschitz groups $\tilde{\Gamma}^{\overline{1}}_{p,q,r}$-equivariant layers are as follows: (1) the ordinary Lipschitz groups are subgroups of the generalized Lipschitz groups $\tilde{\Gamma}^{\overline{k}}_{p,q,r}$ (Theorem \ref{thm_subgroup}), (2) elements of the generalized Lipschitz groups $\tilde{\Gamma}^{\overline{1}}_{p,q,r}$ preserve not only the subspace $\C^{\overline{1}}_{p,q,r}$, but also all other subspaces $\C^{\overline{m}}_{p,q,r}$, $m=0,2,3$, under the twisted adjoint representation $\tilde{\ad}$ (see \eqref{genall1}-\eqref{genall2}), and (3) elements of $\tilde{\Gamma}^{\overline{1}}_{p,q,r}$ have a special form (Theorem \ref{thm_subgroupP}).

\vspace{-1mm}

\subsection{Pseudo-orthogonal Groups, Complex Orthogonal Groups, and Generalized Lipschitz Groups Equivariance}

\vspace{-1mm}

Let us consider the degenerate and non-degenerate pseudo-orthogonal group (in the real case $V=\BR^{p,q,r}$) or complex orthogonal group (in the complex case $V=\BC^{p+q,0,r}$) denoted by  $\OO(V,\q)$, which is the Lie group of all linear transformations of an $n$-dimensional vector space $V$ that leave invariant a quadratic form $\q$ of signature $(p, q,r)$ if $V$ is real and $(p+q,0,r)$ if $V$ is complex (see Appendix \ref{appendix_orthogonal}):
\begin{align}\label{def_opq}
   \!\!\! \OO(V,\q)&:= \{\Phi:V\rightarrow V:\quad \mbox{$\Phi$ is linear, invertible},\nonumber
    \\
    \!\!\!&\quad\quad\q(\Phi(v))=\q(v),\quad \forall v\in V\}.
\end{align}
\vskip -0.1in
When considering both the real and complex cases, we refer to the group $\OO(V,\q)$ as the orthogonal group.
Consider the following subgroup of the orthogonal group, which leaves invariant the \emph{radical subspace} $\Lambda^1_r:=\C^{1}_{0,0,r}$:
\vspace{-1.5mm}
\begin{eqnarray}
    \OO_{\Lambda^{1}_r}(V,\q) := \{\Phi\in\OO(V,q):\quad \Phi|_{\Lambda^1_r} = \mathrm{id}_{\Lambda^1_r}\}.\label{def_restr_opq}
\end{eqnarray}
\vskip -0.1in
We have the following relation between the Lipschitz group $\tilde{\Gamma}^1_{p,q,r}$ and the corresponding restricted orthogonal group $\OO_{\Lambda^{1}_r}(V,\q)$:
\vspace{-2.5mm}
\begin{align}\label{relog}
    \tilde{\ad}^1:\quad \bigslant{\tilde{\Gamma}^1_{p,q,r}}{\Lambda_r^{\times}}\cong\OO_{\Lambda^{1}_r}(V,\q),
\end{align}
\vskip -0.2in
where $\tilde{\ad}^1:\C^{\times}_{p,q,r}\rightarrow\Aut(\C_{p,q,r})$ is $\tilde{\ad}$ restricted to $V$ (a real $\BR^{p,q,r}$ or complex $\BC^{p+q,0,r}$ vector space, which can be identified with the space of vectors $\C_{p,q,r}^1$, see page~\pageref{subsection_ga}), i.e. it acts as $T\mapsto\tilde{\ad}^1_T:\C^1_{p,q,r}\rightarrow\C_{p,q,r}$ defined as $\tilde{\ad}^1_T(v):=\widehat{T}vT^{-1}$ for any vector  $v\in\C^1_{p,q,r}$.
   From \eqref{relog}, we have that for any $T\in\tilde{\Gamma}^1_{p,q,r}$ there exists $\Phi\in\OO_{\Lambda^1_r}(V,\q)$ such   that $\tilde{\ad}^1_T=\Phi$ and vice versa. We prove \eqref{relog} in Theorem \ref{app_thm_relog}. In the particular case $\C_{p,q}$, the statement \eqref{relog} is well-known (see, for example,  \citealp{lg1}), and the similar statements are proved in \citealp{crum_book,cgenn}.

The relation \eqref{relog} implies that $\OO_{\Lambda^1_r}(V,\q)$ acts on the whole $\C_{p,q,r}$ in a well-defined way  \cite{cgenn}.
Namely, for an arbitrary element $x=\sum_i u_i v_{i,1}\cdots v_{i,k_i}\in\C_{p,q,r}$ with $v_{i,j}\in V$, $u_i\in\F$, for any $T\in\tilde{\Gamma}^1_{p,q,r}$ with the corresponding $\Phi=\tilde{\ad}^1_T$, we have
\vspace{-2.5mm}
\begin{align}
    \tilde{\ad}_T(x) &= \sum_{i} u_i \tilde{\ad}^1_T (v_{i,1})\cdots \tilde{\ad}^1_T(v_{i,k_i})
    \\
    &=\sum_{i} u_i \Phi (v_{i,1})\cdots \Phi(v_{i,k_i})=:\Phi(x).\label{phiact}
\end{align}
\vskip -0.2in
The following theorem discusses the relation between $\OO_{\Lambda^1_r}(V,\q)$- and $\tilde{\Gamma}^{\overline{1}}_{p,q,r}$-equivariance. 
\begin{theorem}\label{theorem_orth_lg_glg}
    If a mapping $f:\C_{p,q,r}\rightarrow\C_{p,q,r}$ is equivariant with respect to any group $\H$ that contains the Lipschitz group $\tilde{\Gamma}^{1}_{p,q,r}$ as a subgroup, then $f$ is equivariant with respect to the corresponding orthogonal group. That is, if
    $\tilde{\ad}_T(f(x)) = f(\tilde{\ad}_T(x))$ for any $T\in \H$ and $x\in\C_{p,q,r}$,
    then $
        f(\Phi(x)) = \Phi(f(x))$ for any $\Phi\in\OO_{\Lambda^1_r}(V,\q)$ and  $x\in\C_{p,q,r}$,
    where $\Phi$ acts on $x$ in a sense \eqref{phiact}.
\end{theorem}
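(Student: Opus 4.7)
The plan is to leverage the isomorphism \eqref{relog} to lift any orthogonal transformation $\Phi$ to a Lipschitz group element $T$, and then push the hypothesis $\H$-equivariance through using the extended action described by \eqref{phiact}.

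First, I would fix $\Phi \in \OO_{\Lambda^1_r}(V,\q)$ and $x \in \C_{p,q,r}$. By the isomorphism \eqref{relog}, the map $\tilde{\ad}^1: \tilde{\Gamma}^1_{p,q,r}/\Lambda_r^{\times} \to \OO_{\Lambda^1_r}(V,\q)$ is surjective, so there exists $T \in \tilde{\Gamma}^1_{p,q,r}$ with $\tilde{\ad}^1_T = \Phi$ as maps on $V = \C^1_{p,q,r}$. Since by hypothesis $\tilde{\Gamma}^1_{p,q,r} \subseteq \H$, this $T$ also lies in $\H$, so the assumed $\H$-equivariance of $f$ applies: $\tilde{\ad}_T(f(x)) = f(\tilde{\ad}_T(x))$.

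Second, I need to identify $\tilde{\ad}_T$ (acting on all of $\C_{p,q,r}$) with $\Phi$ (extended to $\C_{p,q,r}$ via \eqref{phiact}). Writing $x = \sum_i u_i v_{i,1}\cdots v_{i,k_i}$ with $u_i \in \F$ and $v_{i,j} \in V$, and using the additivity and multiplicativity properties of $\tilde{\ad}_T$ from Lemma \ref{lemma_prop} (noting that $T \in \tilde{\Gamma}^1_{p,q,r} \subseteq (\C^{(0)\times}_{p,q,r} \cup \C^{(1)\times}_{p,q,r})\Lambda^{\times}_r$ by Theorem \ref{thm_subgroupP}, which is exactly the hypothesis required for \eqref{f_1_5}), together with $\tilde{\ad}_T(c) = c$ for scalars, I get
\begin{align*}
\tilde{\ad}_T(x) = \sum_i u_i \tilde{\ad}^1_T(v_{i,1}) \cdots \tilde{\ad}^1_T(v_{i,k_i}) = \sum_i u_i \Phi(v_{i,1}) \cdots \Phi(v_{i,k_i}) = \Phi(x),
\end{align*}
which is precisely the action \eqref{phiact}. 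The same identification applies with $x$ replaced by $f(x)$, giving $\tilde{\ad}_T(f(x)) = \Phi(f(x))$.

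Combining these with the $\H$-equivariance yields $f(\Phi(x)) = f(\tilde{\ad}_T(x)) = \tilde{\ad}_T(f(x)) = \Phi(f(x))$, which is the desired $\OO_{\Lambda^1_r}(V,\q)$-equivariance. The main conceptual step is recognizing that Theorem \ref{thm_subgroupP} places $T$ in the domain where Lemma \ref{lemma_prop} guarantees multiplicativity of $\tilde{\ad}_T$; without this, the identification $\tilde{\ad}_T(x) = \Phi(x)$ on arbitrary multivectors $x$ would not follow. Well-definedness of the extension \eqref{phiact} (independence of the chosen representation of $x$) is ensured by this multiplicativity itself, so no additional argument is required.
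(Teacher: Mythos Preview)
Your proof is correct and follows essentially the same approach as the paper: restrict $\H$-equivariance to $\tilde{\Gamma}^1_{p,q,r}$-equivariance, then use the surjectivity in \eqref{relog} to translate this into $\OO_{\Lambda^1_r}(V,\q)$-equivariance via the extended action \eqref{phiact}. The paper's proof is terser, simply asserting that equivariance with respect to $\tilde{\Gamma}^1_{p,q,r}$ is the same as equivariance with respect to $\OO_{\Lambda^1_r}(V,\q)$ by Theorem~\ref{app_thm_relog}; you spell out the identification $\tilde{\ad}_T(x)=\Phi(x)$ explicitly using the multiplicativity from Lemma~\ref{lemma_prop}, which is a useful clarification. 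One minor point: the inclusion $\tilde{\Gamma}^1_{p,q,r}\subseteq(\C^{(0)\times}_{p,q,r}\cup\C^{(1)\times}_{p,q,r})\Lambda^{\times}_r$ strictly speaking comes from combining Theorem~\ref{thm_subgroup} ($\tilde{\Gamma}^1_{p,q,r}\subseteq\tilde{\Gamma}^{\overline{1}}_{p,q,r}$) with Theorem~\ref{thm_subgroupP}, not from Theorem~\ref{thm_subgroupP} alone, though the paper itself uses the same shorthand in the proof of Theorem~\ref{app_thm_ker_im}.
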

\vspace{-5.5mm}
\begin{proof}
    The proof relies on the direct relation between $\tilde{\Gamma}^1_{p,q,r}$- and $\OO_{\Lambda^1_r}(V,\q)$-equivariance (see Theorem~\ref{app_theorem_orth_lg_glg}).
\end{proof}
\vspace{-2.5mm}
Since $\tilde{\Gamma}^1_{p,q,r}\subseteq\tilde{\Gamma}^{\overline{1}}_{p,q,r}$ by Theorem \ref{thm_subgroup}, \textbf{all generalized Lipschitz group $\tilde{\Gamma}^{\overline{1}}_{p,q,r}$-equivariant mappings are restricted orthogonal group $\OO_{\Lambda^1_r}(V,\q)$-equivariant as well}.

\vspace{-1mm}

\subsection{Generalized and Ordinary Lipschitz Groups Equivariant Mappings}\label{section_mappings}
\vspace{-1mm}

This section proves that several mappings are generalized Lipschitz groups $\tilde{\Gamma}^{\overline{1}}_{p,q,r}$-equivariant.


\begin{theorem}\label{lemma_pol}
Let $T\in(\C^{(0)\times}_{p,q,r}\cup\C^{(1)\times}_{p,q,r})\Lambda^{\times}_r$ and $F\in\F[T_1,\ldots, T_l]$ be a polynomial in $l$ variables with coefficients in $\F$. Consider $l$ multivectors $x_1,\ldots,x_l\in\C_{p,q,r}$. We have the following equivariance property with respect to the group $(\C^{(0)\times}_{p,q,r}\cup\C^{(1)\times}_{p,q,r})\Lambda^{\times}_r$:
\vspace{-1mm}
\begin{align}
    \tilde{\ad}_T(F(x_1,\ldots,x_l)) = F(\tilde{\ad}_T(x_1),\ldots,\tilde{\ad}_T(x_l)).
\end{align}
\vskip -0.2in
\end{theorem}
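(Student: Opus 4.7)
The plan is to reduce the statement to a monomial-by-monomial computation and then apply Lemma~\ref{lemma_prop}, whose hypotheses on $T$ are tailored exactly to the group appearing in the statement.

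First, I would write $F$ as a finite sum of monomials with scalar coefficients,
\begin{align*}
F(T_1,\ldots,T_l) = \sum_{\alpha} c_{\alpha}\, T_{i_1(\alpha)} T_{i_2(\alpha)} \cdots T_{i_{k(\alpha)}(\alpha)},\quad c_{\alpha}\in\F,
\end{align*}
so that $F(x_1,\ldots,x_l) = \sum_{\alpha} c_{\alpha}\, x_{i_1(\alpha)} \cdots x_{i_{k(\alpha)}(\alpha)}$. The goal becomes showing $\tilde{\ad}_T$ commutes both with addition (of any number of terms) and with the scalar-coefficient products, and that it commutes with finite products of multivectors. Each of these is a property provided by Lemma~\ref{lemma_prop} under our hypothesis on $T$.

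Next, I would apply additivity~\eqref{f_1_3} of $\tilde{\ad}_W$, which holds for \emph{any} invertible $W$ (and in particular for our $T$), by induction on the number of summands, giving
\begin{align*}
\tilde{\ad}_T(F(x_1,\ldots,x_l)) = \sum_{\alpha} \tilde{\ad}_T\bigl(c_{\alpha}\, x_{i_1(\alpha)} \cdots x_{i_{k(\alpha)}(\alpha)}\bigr).
\end{align*}
Each term is a product of the scalar $c_\alpha \in \F\subset\C^0$ with multivectors. Now I use the multiplicativity~\eqref{f_1_5}, valid because $T\in(\C^{(0)\times}_{p,q,r}\cup\C^{(1)\times}_{p,q,r})\Lambda^{\times}_r$, together with the scalar invariance $\tilde{\ad}_T(c_\alpha)=c_\alpha$ from~\eqref{f_1_3}, to conclude by induction on the monomial length $k(\alpha)$ that
\begin{align*}
\tilde{\ad}_T\bigl(c_{\alpha}\, x_{i_1(\alpha)} \cdots x_{i_{k(\alpha)}(\alpha)}\bigr) = c_{\alpha}\, \tilde{\ad}_T(x_{i_1(\alpha)})\cdots \tilde{\ad}_T(x_{i_{k(\alpha)}(\alpha)}).
\end{align*}
Reassembling the sum yields $F(\tilde{\ad}_T(x_1),\ldots,\tilde{\ad}_T(x_l))$, finishing the argument.

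The only subtle point — and the one I would flag as the main potential obstacle — is making sure multiplicativity is actually available: that is exactly why the hypothesis restricts $T$ to $(\C^{(0)\times}_{p,q,r}\cup\C^{(1)\times}_{p,q,r})\Lambda^{\times}_r$ rather than to arbitrary $\C^{\times}_{p,q,r}$, since~\eqref{twa22} mixes even and odd parts and $\tilde{\ad}$ is not a homomorphism on the full group of units. Once this is observed, the proof is a clean induction/linearity argument rather than a computation, and no properties beyond Lemma~\ref{lemma_prop} are needed.
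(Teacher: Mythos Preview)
Your proposal is correct and follows essentially the same approach as the paper: the paper's proof simply states that the result follows directly from the additivity and multiplicativity of $\tilde{\ad}_T$ for $T\in(\C^{(0)\times}_{p,q,r}\cup\C^{(1)\times}_{p,q,r})\Lambda^{\times}_r$ established in Lemma~\ref{lemma_prop}, and you have spelled out exactly that reduction in detail.
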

\vspace{-5.5mm}
\begin{proof}
    This fact follows directly  from additivity and multiplicativity of $\tilde{\ad}_T$ for $T\in(\C^{(0)\times}_{p,q,r}\cup\C^{(1)\times}_{p,q,r})\Lambda^{\times}_r$, which are proved in Lemma \ref{lemma_prop}. 
\end{proof}
Note that the statement that all polynomials are $\tilde{\Gamma}^{1}_{p,q}$-equivariant, which follows as a corollary of Theorem \ref{lemma_pol}, is well-known \cite{cgenn}.

\begin{theorem}\label{thm5}
      For any $x\in\C_{p,q,r}$ and $m=0,1,2,3$, we have the following equivariant property with respect to the generalized Lipschitz groups:
      \vspace{-1mm}
    \begin{align}
    \tilde{\ad}_T \big( \langle x\rangle_{\overline{m}} \big)= \langle\tilde{\ad}_T  (x)\rangle_{\overline{m}},\qquad \forall T\in\tilde{\Gamma}^{\overline{1}}_{p,q,r}.\label{f_2_2_}
    \end{align}
    \vskip -0.2in
\end{theorem}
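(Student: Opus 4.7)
The plan is to reduce the statement to two facts already available in the excerpt: (i) additivity of $\tilde{\ad}_T$ on elements of $(\C^{(0)\times}_{p,q,r}\cup\C^{(1)\times}_{p,q,r})\Lambda^{\times}_r$, and (ii) invariance of all four subspaces $\C^{\overline{m}}_{p,q,r}$ under $\tilde{\ad}_T$ whenever $T\in\tilde{\Gamma}^{\overline{1}}_{p,q,r}$. Once both are in hand, the result is immediate from uniqueness of the $\mathbb{Z}_2\times\mathbb{Z}_2$-graded decomposition of a multivector.

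First, I would fix $T\in\tilde{\Gamma}^{\overline{1}}_{p,q,r}$ and decompose an arbitrary $x\in\C_{p,q,r}$ as
\begin{equation*}
x=\langle x\rangle_{\overline{0}}+\langle x\rangle_{\overline{1}}+\langle x\rangle_{\overline{2}}+\langle x\rangle_{\overline{3}},\qquad \langle x\rangle_{\overline{m}}\in\C^{\overline{m}}_{p,q,r}.
\end{equation*}
By Theorem~\ref{thm_subgroupP}, $T\in(\C^{(0)\times}_{p,q,r}\cup\C^{(1)\times}_{p,q,r})\Lambda^{\times}_r$, so the additivity clause of Lemma~\ref{lemma_prop} applies and gives
\begin{equation*}
\tilde{\ad}_T(x)=\sum_{m=0}^{3}\tilde{\ad}_T\bigl(\langle x\rangle_{\overline{m}}\bigr).
\end{equation*}

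Next, I would invoke the chain of inclusions
$\tilde{\Gamma}^{\overline{1}}_{p,q,r}\subseteq\tilde{\Gamma}^{\overline{m}}_{p,q,r}$ for $m=0,2,3$, which is precisely the content of \eqref{genall1}--\eqref{genall2} (and is spelled out as key takeaway (2)). By the definition \eqref{gamma_ov_tk} of the generalized Lipschitz groups, this means $\tilde{\ad}_T(\C^{\overline{m}}_{p,q,r})\subseteq\C^{\overline{m}}_{p,q,r}$ for every $m=0,1,2,3$. Therefore each summand above lies in its own subspace: $\tilde{\ad}_T(\langle x\rangle_{\overline{m}})\in\C^{\overline{m}}_{p,q,r}$.

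Finally, because $\C_{p,q,r}=\C^{\overline{0}}_{p,q,r}\oplus\C^{\overline{1}}_{p,q,r}\oplus\C^{\overline{2}}_{p,q,r}\oplus\C^{\overline{3}}_{p,q,r}$ is a direct sum decomposition, the expansion of $\tilde{\ad}_T(x)$ above is its unique decomposition into quaternion-type components. Identifying terms of the same type yields $\langle\tilde{\ad}_T(x)\rangle_{\overline{m}}=\tilde{\ad}_T(\langle x\rangle_{\overline{m}})$ for each $m$, which is \eqref{f_2_2_}.

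The only nontrivial ingredient is the joint-preservation fact $\tilde{\Gamma}^{\overline{1}}_{p,q,r}\subseteq\tilde{\Gamma}^{\overline{m}}_{p,q,r}$ for all $m$; without it, additivity alone would only let one conclude that the projections commute onto $\C^{\overline{1}}_{p,q,r}$ but not onto the other three subspaces. Since this inclusion is already established in Theorem~\ref{maintheo_checkq} together with \eqref{def_tilde_gk}, no further obstacle remains, and the proof reduces to the three lines above.
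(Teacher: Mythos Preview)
Your proof is correct and follows essentially the same route as the paper's: decompose $x$ into its four quaternion-type pieces, use additivity of $\tilde{\ad}_T$, use that $T\in\tilde{\Gamma}^{\overline{1}}_{p,q,r}$ preserves each $\C^{\overline{m}}_{p,q,r}$ (the paper cites Theorem~\ref{maintheo_checkq} and \eqref{def_tilde_gk} directly, you cite the equivalent corollary \eqref{genall1}--\eqref{genall2}), and read off the projections from the direct-sum decomposition. One small remark: you invoke Theorem~\ref{thm_subgroupP} to justify additivity, but Lemma~\ref{lemma_prop} gives additivity of $\tilde{\ad}_W$ for \emph{every} $W\in\C^{\times}_{p,q,r}$, so that citation is unnecessary there (it would be needed only for multiplicativity).
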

      \vspace{-5.5mm}
\begin{proof}
    Suppose $T\in\tilde{\Gamma}^{\overline{1}}_{p,q,r}$ and $x=x_{(0)}+x_{(1)}\in\C_{p,q,r}$, where $x_{(0)}:=\langle x\rangle_{(0)}$ and $x_{(1)}:=\langle x\rangle_{(1)}$. For any $m=0,1,2,3$, we get
    \vspace{-1mm}
    \begin{align}
       &\!\!\! \langle\tilde{\ad}_T  (x)\rangle_{\overline{m}}=\langle T x_{(0)}T^{-1}+ \widehat{T}x_{(1)} T^{-1} \rangle_{\overline{m}} 
        \\
       & \!\!\! =
        \langle \sum_{i=0,2} T\langle x\rangle_{\overline{i}} T^{-1} + \sum_{i=1,3} \widehat{T}\langle x\rangle_{\overline{i}}T^{-1}\rangle_{\overline{m}}\label{f_4_1.5}
        \\
        &\!\!\! = \sum_{i=0,2}\langle T\langle x\rangle_{\overline{i}} T^{-1}\rangle_{\overline{m}} + \sum_{i=1,3}   \langle\widehat{T}\langle x\rangle_{\overline{i}}T^{-1}\rangle_{\overline{m}} \label{f_4_2}
        \\
       &\!\!\!  =\begin{cases}
            \langle T \langle x\rangle_{\overline{m}} T^{-1} \rangle_{\overline{m}} = T\langle x\rangle_{\overline{m}} T^{-1}, &\mbox{$m$ is even},
            \\
            \langle \widehat{T} \langle x\rangle_{\overline{m}} T^{-1} \rangle_{\overline{m}} = \widehat{T} \langle x\rangle_{\overline{m}} T^{-1},&\mbox{$m$ is odd},
        \end{cases}\label{f_4_3}
    \end{align}
    \vskip -0.15in
    where in \eqref{f_4_2} we apply linearity of projection $\langle \rangle_{\overline{m}}$, and in \eqref{f_4_3}, we use Theorem \ref{maintheo_checkq} and the notes below it. 
\end{proof}
In Theorem \ref{lemma_rev_cc}, we prove that several conjugation operations~\eqref{def_conj} in $\C_{p,q,r}$ are generalized Lipschitz group $\tilde{\Gamma}^{\overline{1}}_{p,q,r}$-equivariant. 
Note that it does not hold for any conjugation operation. However, we prove that any conjugation operation is (ordinary) Lipschitz group $\tilde{\Gamma}^{1}_{p,q,r}$-equivariant in Theorem~\ref{conj_eq}.

\vspace{-1mm}
\begin{theorem}\label{lemma_rev_cc}
The grade involution, reversion, and Clifford conjugation are $\tilde{\Gamma}^{\overline{1}}_{p,q,r}$-equivariant:
$
\tilde{\ad}_T(\widehat{x})=\widehat{\tilde{\ad}_T(x)},\; 
\tilde{\ad}_T(\widetilde{x})=\widetilde{\tilde{\ad}_T(x)},\;\tilde{\ad}_T(\widehat{\widetilde{x}})=\widehat{\widetilde{\tilde{\ad}_T(x)}}$
for any $T\in\tilde{\Gamma}^{\overline{1}}_{p,q,r}$ and $x\in\C_{p,q,r}$.
\end{theorem}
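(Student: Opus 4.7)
The plan is to reduce everything to the quaternion-type decomposition and to two results that have already been set up: Theorem \ref{thm5}, which tells us that $\tilde{\ad}_T$ commutes with projection onto $\C^{\overline{m}}_{p,q,r}$ for $T \in \tilde{\Gamma}^{\overline{1}}_{p,q,r}$, and Theorem \ref{thm_subgroupP}, which guarantees $T \in (\C^{(0)\times}_{p,q,r} \cup \C^{(1)\times}_{p,q,r})\Lambda^{\times}_r$ so that Lemma \ref{lemma_prop} yields additivity of $\tilde{\ad}_T$. The key observation that makes these two ingredients sufficient is that, by Table \ref{table_qt}, each of the three operations $\widehat{\;\;}$, $\widetilde{\;\;}$, $\widehat{\widetilde{\;\;}}$ acts on the quaternion-type subspace $\C^{\overline{m}}_{p,q,r}$ as multiplication by a scalar $\alpha_m, \beta_m, \gamma_m \in \{+1, -1\}$, respectively.

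Concretely, I would write $x = \sum_{m=0}^{3} \langle x\rangle_{\overline{m}}$ and, using the sign pattern just mentioned, expand
\begin{align*}
\widehat{x} = \sum_{m=0}^{3} \alpha_m \langle x\rangle_{\overline{m}}.
\end{align*}
Applying $\tilde{\ad}_T$ and pushing it through the sum by additivity yields $\tilde{\ad}_T(\widehat{x}) = \sum_{m=0}^{3} \alpha_m \tilde{\ad}_T(\langle x\rangle_{\overline{m}})$. Theorem \ref{thm5} then lets me replace $\tilde{\ad}_T(\langle x\rangle_{\overline{m}})$ by $\langle \tilde{\ad}_T(x)\rangle_{\overline{m}}$, at which point the right-hand side is precisely the Table-\ref{table_qt} expansion of $\widehat{\tilde{\ad}_T(x)}$. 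Repeating the same three-line argument verbatim with coefficients $\beta_m$ in place of $\alpha_m$ gives the reversion identity, and with $\gamma_m$ the Clifford-conjugation identity.

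There is no real obstacle at this stage, since the substantive work has been pushed into Theorem \ref{thm5} and into the structural characterization of $\tilde{\Gamma}^{\overline{1}}_{p,q,r}$ given by Theorem \ref{thm_subgroupP}. The point worth flagging, however, is why this argument does not extend to an arbitrary conjugation operation of the form \eqref{def_conj}: such an operation is constant-signed on each fixed-grade subspace $\C^k_{p,q,r}$, but $\tilde{\ad}_T$ in general mixes different grades inside a single block $\C^{\overline{m}}_{p,q,r} = \C^m_{p,q,r} \oplus \C^{m+4}_{p,q,r} \oplus \cdots$. The operations $\widehat{\;\;}$, $\widetilde{\;\;}$, $\widehat{\widetilde{\;\;}}$ are precisely the conjugations whose sign is constant on each quaternion-type block, which is exactly why Theorem \ref{thm5} alone is enough to conclude their equivariance; this is also why the paper separately treats the general-conjugation case in Theorem \ref{conj_eq} under the stronger hypothesis $T \in \tilde{\Gamma}^{1}_{p,q,r}$.
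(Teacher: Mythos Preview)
Your proposal is correct and follows essentially the same route as the paper: write each of $\widehat{\;\;}$, $\widetilde{\;\;}$, $\widehat{\widetilde{\;\;}}$ as a $\pm 1$-linear combination of the four projections $\langle\cdot\rangle_{\overline{m}}$ (Table~\ref{table_qt}), invoke Theorem~\ref{thm5} for the equivariance of each projection, and use linearity of $\tilde{\ad}_T$ to conclude; the paper packages this last step as an appeal to Lemma~\ref{app_lem_lin}, whereas you unwind it by hand. One small point: you do not actually need Theorem~\ref{thm_subgroupP} here, since Lemma~\ref{lemma_prop} already gives additivity of $\tilde{\ad}_W$ for \emph{every} $W\in\C^{\times}_{p,q,r}$ (the special form $(\C^{(0)\times}_{p,q,r}\cup\C^{(1)\times}_{p,q,r})\Lambda^{\times}_r$ is only required for multiplicativity), so that invocation is harmless but superfluous.
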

      \vspace{-5mm}
\begin{proof}
    Since the grade involute, reversion, and Clifford conjugate of an element $x\in\C_{p,q,r}$ 
    are linear combinations of the projections onto the subspaces $\C^{\overline{m}}_{p,q,r}$, $m=0,1,2,3$, (see Table \ref{table_qt}) which are $\tilde{\Gamma}^{\overline{1}}_{p,q,r}$-equivariant, we get the statement by Lemma \ref{app_lem_lin} and Theorem \ref{thm5}. 
\end{proof}

\vspace{-4mm}
\begin{theorem}\label{conj_eq}
Any conjugation operation in $\C_{p,q,r}$ is Lipschitz group $\tilde{\Gamma}^{1}_{p,q,r}$-equivariant:
      \vspace{-4mm}
\begin{align*}
    \tilde{\ad}_T \big(\sum_{k=0}^n \lambda_k \langle x\rangle_k\big)= \sum_{k=0}^n \lambda_k \langle \tilde{\ad}_T(x)\rangle_k,\quad\lambda_k=\pm1,
\end{align*}
\vskip -0.2in
for any $T\in\tilde{\Gamma}^{1}_{p,q,r}$ and $x\in\C_{p,q,r}$.
\end{theorem}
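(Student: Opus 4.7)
\medskip
\noindent\textbf{Proof plan.} The plan is to establish that, for $T\in\tilde{\Gamma}^{1}_{p,q,r}$, the twisted adjoint action $\tilde{\ad}_T$ preserves each individual grade subspace $\C^k_{p,q,r}$. Once this is known, projection $\langle\,\cdot\,\rangle_k$ commutes with $\tilde{\ad}_T$, and the conclusion follows by combining with additivity of $\tilde{\ad}_T$ from Lemma~\ref{lemma_prop}.

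First, I would record two structural facts that unlock the algebraic manipulations. By Theorem~\ref{thm_subgroup} we have $\tilde{\Gamma}^{1}_{p,q,r}\subseteq\tilde{\Gamma}^{\overline{1}}_{p,q,r}$, and Theorem~\ref{thm_subgroupP} then places $T$ inside $(\C^{(0)\times}_{p,q,r}\cup\C^{(1)\times}_{p,q,r})\Lambda^{\times}_r$. This is precisely the hypothesis required for Lemma~\ref{lemma_prop}, giving simultaneously additivity and multiplicativity of $\tilde{\ad}_T$.

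Next, I would prove the key grade-preservation claim $\tilde{\ad}_T(\C^k_{p,q,r})\subseteq\C^k_{p,q,r}$ for every $k$. By definition of $\tilde{\Gamma}^{1}_{p,q,r}$, we have $\tilde{\ad}_T(\C^1_{p,q,r})\subseteq\C^1_{p,q,r}$; moreover, by the isomorphism \eqref{relog}, the restricted action $\tilde{\ad}^1_T$ coincides with some $\Phi\in\OO_{\Lambda^1_r}(V,\q)$, and so preserves the bilinear form $\bb$. Hence the images $\tilde{\ad}_T(e_{a_1}),\ldots,\tilde{\ad}_T(e_{a_k})$ of the pairwise $\bb$-orthogonal generators $e_{a_1},\ldots,e_{a_k}$ remain pairwise $\bb$-orthogonal vectors. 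Applying multiplicativity to a basis element $e_{a_1\ldots a_k}=e_{a_1}\cdots e_{a_k}$ gives
\begin{align*}
\tilde{\ad}_T(e_{a_1\ldots a_k})=\tilde{\ad}_T(e_{a_1})\cdots\tilde{\ad}_T(e_{a_k}),
\end{align*}
and I would invoke the standard Clifford-algebra fact that the geometric product of $k$ mutually $\bb$-orthogonal vectors equals their wedge product and hence lies in $\C^k_{p,q,r}$. Extending by linearity, $\tilde{\ad}_T(\C^k_{p,q,r})\subseteq\C^k_{p,q,r}$.

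Finally, uniqueness of the grade decomposition forces $\langle\tilde{\ad}_T(x)\rangle_k=\tilde{\ad}_T(\langle x\rangle_k)$ for every $x\in\C_{p,q,r}$ and every $k$. Using additivity of $\tilde{\ad}_T$, we conclude
\begin{align*}
\tilde{\ad}_T\Bigl(\sum_{k=0}^n\lambda_k\langle x\rangle_k\Bigr)=\sum_{k=0}^n\lambda_k\,\tilde{\ad}_T(\langle x\rangle_k)=\sum_{k=0}^n\lambda_k\,\langle\tilde{\ad}_T(x)\rangle_k,
\end{align*}
which is the claim. The main obstacle I anticipate is the grade-preservation step: it requires carefully combining the orthogonality of $\tilde{\ad}^1_T$ on $V$ (from \eqref{relog}) with the multiplicativity on products of generators, and then invoking the identity between the geometric and the wedge product of mutually orthogonal vectors. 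Everything else is a routine assembly of additivity and uniqueness of the grade decomposition.
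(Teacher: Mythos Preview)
Your proof is correct and structurally identical to the paper's: both reduce the claim to the equivariance of the individual grade projections $\langle\,\cdot\,\rangle_k$ under $\tilde{\ad}_T$ for $T\in\tilde{\Gamma}^1_{p,q,r}$, and then assemble the result by linearity (the paper cites Lemma~\ref{app_lem_lin}; you use additivity from Lemma~\ref{lemma_prop} directly).

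The one substantive difference is in how the grade-preservation step $\tilde{\ad}_T(\C^k_{p,q,r})\subseteq\C^k_{p,q,r}$ is obtained. The paper simply cites this as Corollary~3.3 of \cite{cgenn}. You instead supply a self-contained argument: establish that $T\in(\C^{(0)\times}_{p,q,r}\cup\C^{(1)\times}_{p,q,r})\Lambda^{\times}_r$ via Theorems~\ref{thm_subgroup} and~\ref{thm_subgroupP}, invoke multiplicativity from Lemma~\ref{lemma_prop}, use \eqref{relog} to see that $\tilde{\ad}^1_T$ preserves $\bb$, and then apply the fact that a product of mutually $\bb$-orthogonal vectors has pure grade. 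This is exactly how the CGENN result is proved, so you are effectively reconstructing the cited corollary rather than taking a different route. Your version has the advantage of being self-contained within the present paper; the paper's version is shorter by outsourcing the work.
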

\vspace{-6mm}
\begin{proof}
    By definition \eqref{def_conj}, any conjugation operation is a linear combination of projections onto the subspaces $\C^k_{p,q,r}$, $k=0,\ldots,n$, which are $\tilde{\Gamma}^{1}_{p,q,r}$-equivariant (see Corollary 3.3 \citealp{cgenn}):
        $\tilde{\ad}_T(\langle x\rangle_k )= \langle \tilde{\ad}_T (x)\rangle_k$, 
    for any $T\in\tilde{\Gamma}^{1}_{p,q,r}$ and $x\in\C_{p,q,r}$.
    Then, we get the statement by Lemma \ref{app_lem_lin}.
\end{proof}

\vspace{-4mm}
\begin{theorem}\label{lem_norm}
   The norm functions $\psi$ and $\chi$ \eqref{norm_funct} are $\tilde{\Gamma}^{\overline{1}}_{p,q,r}$-equivariant: 
   \vspace{-1mm}
   \begin{align}
\!\!\!\tilde{\ad}_T\big({\psi(x)} \big)\!= \!\psi(\tilde{\ad}_T(x)),\;\;\tilde{\ad}_T\big({\chi(x)} \big)\!= \!\chi(\tilde{\ad}_T(x))
   \end{align}
   \vskip -0.2in
   for any $T\in\tilde{\Gamma}^{\overline{1}}_{p,q,r}$ and $x\in\C_{p,q,r}$.
\end{theorem}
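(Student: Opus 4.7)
The plan is to combine three ingredients that are already established earlier in the excerpt: (i) the multiplicativity of $\tilde{\ad}_T$ for $T$ in the product group $(\C^{(0)\times}_{p,q,r}\cup\C^{(1)\times}_{p,q,r})\Lambda^{\times}_r$ (Lemma \ref{lemma_prop}), (ii) the structural inclusion $\tilde{\Gamma}^{\overline{1}}_{p,q,r}\subseteq (\C^{(0)\times}_{p,q,r}\cup\C^{(1)\times}_{p,q,r})\Lambda^{\times}_r$ from Theorem \ref{thm_subgroupP}, and (iii) the $\tilde{\Gamma}^{\overline{1}}_{p,q,r}$-equivariance of reversion and Clifford conjugation from Theorem \ref{lemma_rev_cc}. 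The norm functions $\psi(x)=\widetilde{x}x$ and $\chi(x)=\widehat{\widetilde{x}}x$ are simply products of one of these conjugates with $x$ itself, so once multiplicativity is available the result should fall out immediately.

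First I would fix $T\in\tilde{\Gamma}^{\overline{1}}_{p,q,r}$ and invoke Theorem \ref{thm_subgroupP} to conclude that $T$ lies in $(\C^{(0)\times}_{p,q,r}\cup\C^{(1)\times}_{p,q,r})\Lambda^{\times}_r$. This is the crucial technical hook: without it, Lemma \ref{lemma_prop} would only give additivity of $\tilde{\ad}_T$, not multiplicativity, and for a general invertible $T$ the twisted adjoint $\tilde{\ad}_T$ is not a ring homomorphism. With multiplicativity in hand, I compute
\begin{align*}
\tilde{\ad}_T(\psi(x)) &= \tilde{\ad}_T(\widetilde{x}\,x) = \tilde{\ad}_T(\widetilde{x})\,\tilde{\ad}_T(x),\\
\tilde{\ad}_T(\chi(x)) &= \tilde{\ad}_T(\widehat{\widetilde{x}}\,x) = \tilde{\ad}_T(\widehat{\widetilde{x}})\,\tilde{\ad}_T(x).
\end{align*}

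Next I apply Theorem \ref{lemma_rev_cc} to swap $\tilde{\ad}_T$ with the conjugation operations, replacing $\tilde{\ad}_T(\widetilde{x})$ by $\widetilde{\tilde{\ad}_T(x)}$ and $\tilde{\ad}_T(\widehat{\widetilde{x}})$ by $\widehat{\widetilde{\tilde{\ad}_T(x)}}$. Substituting these into the two displays above yields
\begin{align*}
\tilde{\ad}_T(\psi(x)) &= \widetilde{\tilde{\ad}_T(x)}\cdot \tilde{\ad}_T(x) = \psi(\tilde{\ad}_T(x)),\\
\tilde{\ad}_T(\chi(x)) &= \widehat{\widetilde{\tilde{\ad}_T(x)}}\cdot \tilde{\ad}_T(x) = \chi(\tilde{\ad}_T(x)),
\end{align*}
which is exactly the claim.

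There is no real obstacle once the previous theorems are accepted: the argument is a two-line calculation. The only conceptual point worth flagging is that the same proof would fail for an arbitrary $T\in\C^{\times}_{p,q,r}$, since both multiplicativity of $\tilde{\ad}_T$ and equivariance of the conjugation operations genuinely require $T$ to lie in the generalized Lipschitz group; so the statement is tight with respect to the hypothesis $T\in\tilde{\Gamma}^{\overline{1}}_{p,q,r}$.
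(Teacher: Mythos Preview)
Your proof is correct and follows essentially the same route as the paper: invoke Theorem~\ref{thm_subgroupP} to place $T$ in $(\C^{(0)\times}_{p,q,r}\cup\C^{(1)\times}_{p,q,r})\Lambda^{\times}_r$, use Lemma~\ref{lemma_prop} for multiplicativity of $\tilde{\ad}_T$, and then apply the equivariance of reversion and Clifford conjugation from Theorem~\ref{lemma_rev_cc}. The only cosmetic difference is that the paper packages the final step via the abstract Lemma~\ref{app_mult_eq} (products of equivariant maps are equivariant under a multiplicative action), whereas you unroll that lemma by hand.
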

\vspace{-5mm}
\begin{proof}
Since $\tilde{\Gamma}^{\overline{1}}_{p,q,r}\subseteq(\C^{(0)\times}_{p,q,r}\cup\C^{(1)\times}_{p,q,r})\Lambda^{\times}_r$ (Theorem \ref{thm_subgroupP}), $\tilde{\ad}_T$ is multiplicative for any $T\in\tilde{\Gamma}^{\overline{1}}_{p,q,r}$ by Lemma~\ref{lemma_prop}.
The mappings $\psi(x)$ and $\chi(x)$ are $\tilde{\Gamma}^{\overline{1}}_{p,q,r}$-equivariant by Lemma \ref{app_mult_eq}, since they are products of the $\tilde{\Gamma}^{\overline{1}}_{p,q,r}$-equivariant mappings: $x\mapsto \widetilde{x}$, $x\mapsto\widehat{\widetilde{x}}$, and $x\mapsto x$
(Theorem \ref{lemma_rev_cc}).
\end{proof}

\vspace{-6mm}

\section{Methodology}\label{section:methodology}
\vspace{-1mm}

In this section, we describe the architecture of Generalized Lipschitz Groups Neural Networks layers.
In this and the following sections, we consider the real geometric algebra $\C(\BR^{p,q,r})$.
Code is available at \href{https://github.com/katyafilimoshina/glgenn}{https://github.com/katyafilimoshina/glgenn}.
We plan to continue developing this repository.

\vspace{-4mm}
\subsection{Geometric Algebra Data Embeddings}

\vspace{-2mm}
GLGENN, along with other GA-based equivariant neural networks such as CGENN \cite{cgenn}, can be applied to any data representable as multivectors. 
For example, consider the following task in $\BR^{0,3}$. We have data objects $x_1,\ldots,x_l$, where each $x_i$ contains a scalar (e.g. $a\in\BR$, representing mass, temperature, or a one-hot-encoded feature, etc.), a vector (e.g. $(b,c,d)\in \BR^{0,3}$, where $b,c,d\in\BR$, representing position, velocity, etc.), and a pseudoscalar (e.g., a signed volume value $f\in\BR$). These components can be embedded in GA as $ae+be_1+ce_2+de_3+fe_{123}\in\C_{0,3}$
and processed by our layers.
Additional grades beyond $0$, $1$, and $3$ can be utilized as well.\footnote{Note that in the case of $\C_{0,3}$, the equivariant model proposed in this paper has some connection with equivariant models based on irreducible representations (irreps) of $\OO(3)$ \cite{GEGNN}. Generally speaking, these are two different approaches: irreps of $\OO(3)$ follow the tensor approach, whereas the GAs operate with multivectors. However, there are connections between the two approaches: scalars, vectors, bivectors, and trivectors in GA are equivalent to $l=0$ irreps with even parity, $l=1$ irreps with odd parity, $l=1$ irreps with even parity, and $l=0$ irreps with odd parity, respectively. The dimensions are the same ($1$, $3$, $3$, and $1$, respectively), the behavior under rotations/reflections of the coordinate system matches, and the coupling operations (such as scalar multiplication, vector scalar and cross products in various cases) are also the same. Further study of this connection deserves a separate study and is not the topic of the current work, but seems important for building bridges between the two communities.}  
The output can be obtained by projecting the resulting multivectors onto the appropriate subspace based on the task. E.g., 
projecting onto $\C^0$ allows for scalar predictions (which will be both equivariant and invariant), or onto $\C^1_{p,q,r}$ if vector predictions are required.

\vspace{-3mm}
\subsection{Conjugation Operations Layers}
\vspace{-1.5mm}

These layers are based on the concept of conjugation operations in $\C_{p,q,r}$ \eqref{def_conj}.
Suppose $x_1,\ldots,x_l\in\C_{p,q,r}$ are input data multivectors, where $l$ is a number of input channels. 
For an input data multivector $x_{c_{in}}\in\C_{p,q,r}$, $c_{in}=1,\ldots,l$, a conjugation operations layer is constructed using
\vspace{-3mm}
\begin{align}\label{conj_layer}
    x_{c_{in}} \mapsto \sum_{k=0}^n \phi_{c_{in}k} \langle x_{c_{in}}\rangle_k,\quad \phi_{c_{in}k}=\pm1,
\end{align}
\vskip -0.15in
where $\phi_{c_{in}k}\in\{-1,1\}$ are optimizable coefficients. 
The conjugation operation layer is $\tilde{\Gamma}^1_{p,q,r}$-equivariant by Theorem \ref{conj_eq} and contains $l(n+1)$ optimizable parameters. 
We suggest the following method to make the optimization of conjugation layers with discrete parameters possible. We first apply a linear transformation of the projections of the input multivector as in \eqref{conj_layer} but with any parameters. Then we round them to $-1$ or $1$ either by directly applying 
$\phi_{c_{in}k}\mapsto\mbox{sgn}(\phi_{c_{in}k})$ or by firstly applying the sigmoid function, then scaling the value to $[-1,1]$, and applying sign function to the result. 
Conjugation operations, including $\widehat{\;\;}$,  $\widetilde{\;\;}$, and  $\widehat{\widetilde{\;\;}}$, play a significant role in GAs and their applications (see  Appendix \ref{appendix_grade_inv_rev} for a detailed discussion), making these layers highly promising for experiments.

\vspace{-4mm}
\subsection{$\C^{\overline{k}}_{p,q,r}$-Linear Layers}
\vspace{-3mm}

A $\C^{\overline{k}}_{p,q,r}$-linear layer is constructed using
\vspace{-4mm}
\begin{align}
    \langle y_{c_{out}}\rangle_{\overline{k}} := \sum_{c_{in}=0}^l \phi_{c_{out}c_{in}\overline{k}} \langle x_{c_{in}}\rangle_{\overline{k}},
\end{align}
\vskip -0.18in
where $y_{c_{out}} := \sum_{k=0}^{3}\langle y_{c_{out}}\rangle_{\overline{k}}$ is an output channel, $\phi_{c_{in}c_{out}\overline{k}}\in\BR$ are optimizable coefficients,  and $c_{in}$ and $c_{out}$ are used to denote the number of the input and the output channels respectively.
In other words, for any fixed $k=0,1,2,3$, the value $\langle y_{c_{out}}\rangle_{\overline{k}}$ is a linear combination of $\langle x_{c_{in}}\rangle_{\overline{k}}$, where $c_{in}=1,\ldots,l$. 
One $\C^{\overline{k}}_{p,q,r}$-linear layer is parametrized with $4lm$ optimizable coefficients, where $l$ and $m$ are the numbers of input and output channels respectively.
Such layers are $\tilde{\Gamma}^{\overline{1}}_{p,q,r}$-equivariant by Theorems \ref{lemma_pol} and  \ref{thm5}.

\vspace{-3mm}
\subsection{$\C^{\overline{k}}_{p,q,r}$-Geometric Product Layers}

\vspace{-1mm}
Now let us parameterize product interaction terms. We consider only the second-order interactions, because the higher-order interactions are indirectly modeled via multiple successive layers \cite{cgenn}. A second-order interaction term for the pair of multivectors $x_1$ and $x_2$ has the form $\langle \langle x_1\rangle_{\overline{i}} \langle x_2\rangle_{\overline{j}} \rangle_{\overline{k}}$, where $i,j,k=0,1,2,3$. All the terms from $\C^{\overline{k}}_{p,q,r}$ resulting from the interaction of $x_1$ and $x_2$ are parameterized as
\vspace{-2mm}
\begin{align}
    P (x_1,x_2)^{\overline{k}}:= \sum_{i=0}^3 \sum_{j=0}^3 \phi_{ijk}\langle \langle x_1\rangle_{\overline{i}} \langle x_2\rangle_{\overline{j}} \rangle_{\overline{k}},
\end{align}
\vskip -0.23in
where $\phi_{ijk}\in\BR$ are optimizable coefficients. $P (x_1,x_2)^{\overline{k}}$ is $\tilde{\Gamma}^{\overline{1}}_{p,q,r}$-equivariant by Theorem \ref{lemma_pol} and Theorem \ref{thm5}. 
We get $4^3$ parameters $\phi_{ijk}$ for each geometric product of a pair of multivectors. Since we have $l^2$ possible pairs of multivectors for $l$ input channels, parameterizing such number of coefficients can be computationally expensive. Therefore, if a number of channels $l$ is large, let us compute geometric products for less than $l^2$ pairs of multivectors by an approach proposed in \citealp{cgenn}. 
Firstly, we apply to $x_1,\ldots,x_l\in\C_{p,q,r}$ a linear map to get $y_1,\ldots,y_l\in\C_{p,q,r}$ and then compute the products of the pairs $x_i,y_i$ for $i=1,\ldots,l$, which we denote by $z_1,\ldots,z_l$ respectively. Using such a trick, the terms that will be multiplied get learned. Now, we need to calculate only $l$ geometric products. We get
$\langle z_{c_{out}}\rangle_{\overline{k}} := P(x_{c_{in}},y_{c_{in}})^{\overline{k}}$,
where $z_{c_{out}}:=\sum_{k=0}^3 \langle z_{c_{out}}\rangle_{\overline{k}}$  and $c_{in}=c_{out}=1,\ldots,l$. In total, for a $\C^{\overline{k}}_{p,q,r}$-geometric product layer, we get $4l^2 + 4^3l$ parameters for $l$ input channels.

\vspace{-3mm}
\subsection{$\C^{\overline{k}}_{p,q,r}$-Normalization Layers}
\vspace{-2mm}

For numerical stability of $\C^{\overline{k}}_{p,q,r}$-geometric product layers, we apply normalization to the four projections $\langle x_{c_{in}}\rangle_{\overline{0}}$, $\langle x_{c_{in}}\rangle_{\overline{1}}$, $\langle x_{c_{in}}\rangle_{\overline{2}}$, and $\langle x_{c_{in}}\rangle_{\overline{3}}$ for each multivector $x_{c_{in}}$, $c_{in}=1,\ldots,l$ before geometric product:
\vspace{-2mm}
\begin{eqnarray}\label{norm_}
    \langle x_{c_{in}}\rangle_{\overline{k}} \mapsto \frac{\langle x_{c_{in}}\rangle_{\overline{k}} }{\sigma(\phi_{c_{in}\overline{k}})(\langle\widetilde{\langle x_{c_{in}}\rangle_{\overline{k}}} \langle x_{c_{in}}\rangle_{\overline{k}}\rangle_0 - 1) + 1},
\end{eqnarray}
\vskip -0.15in
where $\sigma(x):=\frac{1}{1+e^{-x}}\in(0,1)$ is the logistic sigmoid function and $\phi_{c_{in}\overline{m}}\in\BR$ are optimizable parameters. 
The similar approach is applied in \citealp{cgenn}.
Note that $\langle\widetilde{\langle x_{c_{in}}\rangle_{\overline{k}}} \langle x_{c_{in}}\rangle_{\overline{k}}\rangle_0$ is $\tilde{\Gamma}^{\overline{1}}_{p,q,r}$-equivariant 
by  Theorems \ref{thm5} and \ref{lem_norm}.
Each normalization layer contains $4l$ optimizable parameters, where $l$ is the number of input channels.

\section{Experiments}\label{section:experiments}
\vspace{-1.5mm}

We demonstrate that GLGENN either outperforms or matches the performance of other equivariant models, while using significantly fewer optimizable parameters.
For CGENN \cite{cgenn}, the training setups are aligned with its public code release. For other models, we use the loss values from the corresponding code repository \cite{emlpO5}. All experimental details and discussions can be found in Appendix \ref{appendix_exp_details} and our code repository. For simplicity, the current examples focus on the case of the non-degenerate GA $\C_{p,q}$. Experiments involving $\C_{p,q,r}$ will be explored in future research.
We construct our models to closely resemble the CGENN architecture, replacing the $\C^k_{p,q}$-linear, $\C^k_{p,q}$-geometric product, and $\C^k_{p,q}$-normalization layers from \citealp{cgenn} with the same number of GLGENN's $\C^{\overline{k}}_{p,q}$ counterparts (see Section \ref{section:methodology}).


\vskip -0.15in
\begin{figure*}[t]
\vskip -0.1in
\begin{center}
\centerline{\includegraphics[width=2\columnwidth]{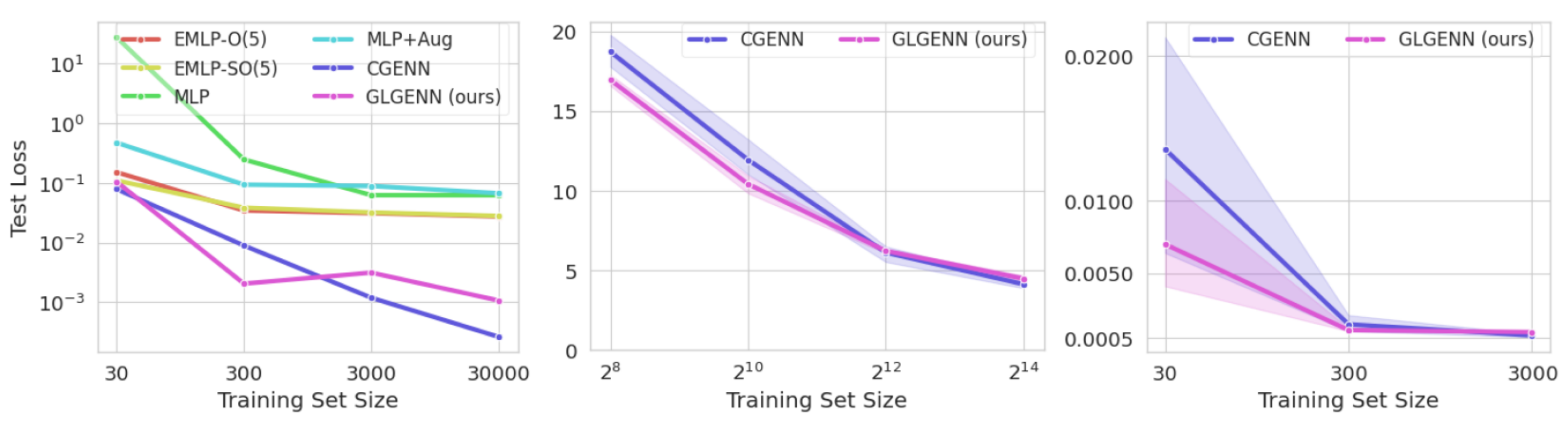}}
\vskip -0.15in
\caption{\textbf{{Left:}} $\OO(5,0)$-Regression. \textbf{{Middle:}} $\OO(5,0)$-Convex Hull, 16 points. The shaded regions depict 95\% confidence intervals taken over 5 runs. \textbf{{Right:}} $\OO(5,0)$-$N$-Body.  The shaded regions depict 95\% confidence intervals taken over 3 runs.}
\label{figure_all}
\end{center}
\vskip -0.4in
\end{figure*}

\subsection*{$\mathrm{O}(5,0)$-Regression Task}

\vspace{-2mm}
We consider an $\OO(5,0)$-invariant regression task  \cite{emlpO5}  to estimate the function $\sin(\|x_1\|)-\|x_2\|^3/2 +\frac{x_1^Tx_2}{\|x_1\|\|x_2\|}$, where $x_1,x_2\in\BR^{5,0}$ are vectors sampled from a standard Gaussian distribution. The loss function used is Mean squared error (MSE).
We evaluate the performance using four different training dataset sizes and compare against the ordinary MLP, MLP with augmentations, the $\OO(5,0)$- and $\mathrm{SO}(5,0)$-equivariant MLP
architectures proposed in \cite{emlpO5}, and with CGENN \cite{cgenn}. The results, presented in Tables \ref{tab1} and \ref{tab1_2} and Figure \ref{figure_all} (left), demonstrate that GLGENN achieves performance on a par with CGENN, while significantly outperforming the other models. Moreover, GLGENN attains these results with fewer parameters and reduced training time compared to CGENN. In this experiment, CGENN has approximately 1.8K parameters associated with GA layers, while GLGENN has around 0.6K parameters. Other models have approximately the same number of parameters as CGENN. In Tables \ref{tab1} and \ref{tab1_2}, MSE for CGENN and GLGENN are averaged over 5 runs. MSE for other models are averaged over 3 runs (from \citealp{emlpO5}). Number of iterations is the same for all algorithms.
Note that when CGENN has the same size as GLGENN (with $\approx0.6$K GA-associated parameters), we get the similar results (Table \ref{app_tab_new_same2}).

\vskip -0.15in
\begin{table}[b]
\vskip -0.4in
\caption{MSE ($\downarrow$) on the $\OO(5,0)$-Regression Experiment}
\label{tab1}
\vskip 0.05in
\begin{center}
\begin{small}
\begin{sc}
\begin{tabular}{|c|c|c|c|c|}
\hline
\textbf{Model}&\multicolumn{4}{|c|}{\textbf{\# of Training Samples}} \\
\textbf{} & \textbf{\textit{$3\cdot 10^1$}}& \textbf{\textit{$3\cdot 10^2$}}& \textbf{\textit{$3\cdot 10^3$}} & \textbf{\textit{$3\cdot 10^4$}} \\
\hline
\textbf{GLGENN} & 0.1055 & 0.0020&0.0031 & 0.0011 \\
MLP & 28.1011 & 0.2482 & 0.0623 & 0.0622 \\
MLP+Aug & 0.4758 & 0.0936 & 0.0889 & 0.0672 \\
EMLP-$\OO(5)$ & 0.152 & 0.0344 & 0.0310 & 0.0273 \\
EMLP-$\mathrm{SO}(5)$ &  0.1102 & 0.0384 & 0.032 & 0.0279 \\
CGENN & 0.0791 & 0.0089 & 0.0012 & 0.0003 \\
\hline
\end{tabular}
\end{sc}
\end{small}
\end{center}
\vskip -0.2in
\end{table}


\subsection*{$\mathrm{O}(5,0)$- and $\mathrm{O}(7,0)$-Convex Hull Volume Estimation}

\vskip -0.05in

\begin{table*}[t]
\caption{MSE ($\downarrow$) on the $\OO(5,0)$-Convex Hull Experiment ($K$ points). The average convex hull volume in the training dataset for $K=16$, $256$, $512$ points is $\approx11.4$, $\approx430.3$, and $\approx694$, respectively. The number of trainable parameters is as follows: for $K=16$, GLGENN $24.1$K vs. CGENN $58.8$K; for $K=256$, GLGENN $791$K vs. CGENN $1.72$M; for $K=512$, GLGENN $922$K vs. CGENN $1.75$M.}
\label{tab2}
\begin{center}
\begin{small}
\begin{sc}
\begin{tabular}{|c|c|c|c|c|c|c|c|c|c|c|c|}
\hline
\textbf{K} & \multicolumn{7}{|c|}{16} & \multicolumn{2}{|c|}{256} & \multicolumn{2}{|c|}{512} \\ \hline
\textbf{Model}&\multicolumn{7}{|c|}{\textbf{\# Train Samples}}  & \multicolumn{2}{|c|}{\textbf{\# Train Samples}}  & \multicolumn{2}{|c|}{\textbf{\# Train Samples}} \\
\textbf{} & \textbf{\textit{$2^8$}}& {\textbf{\textit{$2^{10}$}}}& \textbf{\textit{$2^{12}$}} &{\textbf{\textit{$2^{14}$}}} & \textbf{\textit{$2^{15}$}}& \textbf{\textit{$2^{16}$}}& \textbf{\textit{$2^{17}$}}  &  {\textbf{\textit{$2^{10}$}}} &  {\textbf{\textit{$2^{14}$}}} &  {\textbf{\textit{$2^{10}$}}} &  {\textbf{\textit{$2^{14}$}}} \\ 
\hline
\textbf{GLGENN} & 16.94 & 10.40 &  6.2 & 4.46  &  3.62 & 3.04 & 2.61 &  2908.16 & 2918.09  &  8538.86  & 4872.39 \\
CGENN & 18.71 & 11.93 &  6.1 & 4.11 &  3.23 & 2.52 & 2.08 & 5176.7 & 3384.62 & 14727.6 & 7212.44  \\
\hline
Gap & $-1.77$ & $-1.53$ & 0.1 & 0.35 & 0.39 & 0.52 & 0.53 & $-2268.54$ & $-466.53$ & $-6188.74$ & $-2340.05$ \\ \hline
\end{tabular}
\end{sc}
\end{small}
\end{center}
\vskip -0.25in
\end{table*}

In these experiments, the task is to estimate the volume of  a convex hull generated by $K$ points in $\BR^{5,0}$ and $\BR^{7,0}$ respectively. 
We first consider the setup with $K=16$ points, as  in \citealp{cgenn,topology}. The average convex hull volume in a training dataset is $\approx 11.4$.
The results, which are presented in Figure \ref{figure_all} (middle) and Table~\ref{tab2}, demonstrate that GLGENN either outperform or match CGENN, which itself outperforms ordinary MLP, Geometric Vector Perceptrons \cite{gvp}, and Vector Neurons \cite{vn}.
In Table \ref{tab2}, MSE for $K=16$ are averaged over 5 runs, number of iterations is the same for both algorithms.
Notably, as highlighted \citealp{cgenn}, CGENN tends to overfit when trained on small datasets. In contrast, GLGENN demonstrate a reduced tendency to overfitting. The reason might be that GLGENN achieve these superior results with more than twice the parameter efficiency, using significantly fewer parameters than CGENN.
Note that when CGENN is scaled down to approximately the same size as GLGENN (around $25$K trainable parameters), its performance deteriorates compared to its original configuration (see Table~\ref{app_tab_new_same1}). 
In the case $n=7$, we get similar results (see Appendix \ref{appendix_exp_details}).

\vskip -1.5mm

Following the suggestion of one of the anonymous reviewers, we further increase the difficulty and real-world relevance of the experiment by estimating the convex hull volumes of $K=256$ and $512$ points in $\BR^{5,0}$. In this setup, the average convex hull volumes in the training datasets are $\approx 430.3$ and $694$ respectively. GLGENN consistently outperforms the best-performing CGENN architecture, with the results summarized in Table~\ref{tab2}. Due to its significantly smaller number of trainable parameters, GLGENN also require less training time (see Table~\ref{tab_clocktime} in Appendix~\ref{appendix_exp_details}). 

\vspace{-3mm}
\subsection*{Combining GLGENN with Typical Activation Functions}
\vspace{-2mm}

Standard activation functions (e.g. SiLU or ReLU) can be applied to elements of $\C^0$ (scalars) without breaking equivariance. For simple tasks, the best performance is achieved by combining GLGENN (applied to all grade subspaces) with a standard neural network, such as MLP, acting on elements of $\C^0$. 
E.g., in the $\OO(5,0)$-Regression Task: (1) a standalone MLP with three ReLU-activated layers performs poorly; (2) GLGENN alone performs reasonably well but converges slower than (3) the combination of GLGENN (to all grades) + MLP (to scalars). 
With $300$ training samples, this improvement is shown in Figure~\ref{fig_activations} and Table~\ref{tab_activations}. 
The main limitation is that applying nonlinearities only to certain subspaces (e.g., scalars) prevents interaction between different grades (e.g., vectors and bivectors), effectively isolating them. In contrast, the nonlinearities introduced via geometric product layers inherently mix all grades, creating strong interactions.

\begin{figure}[t]
\vskip -0.05in
\begin{center}
\centerline{\includegraphics[width=0.9\columnwidth]{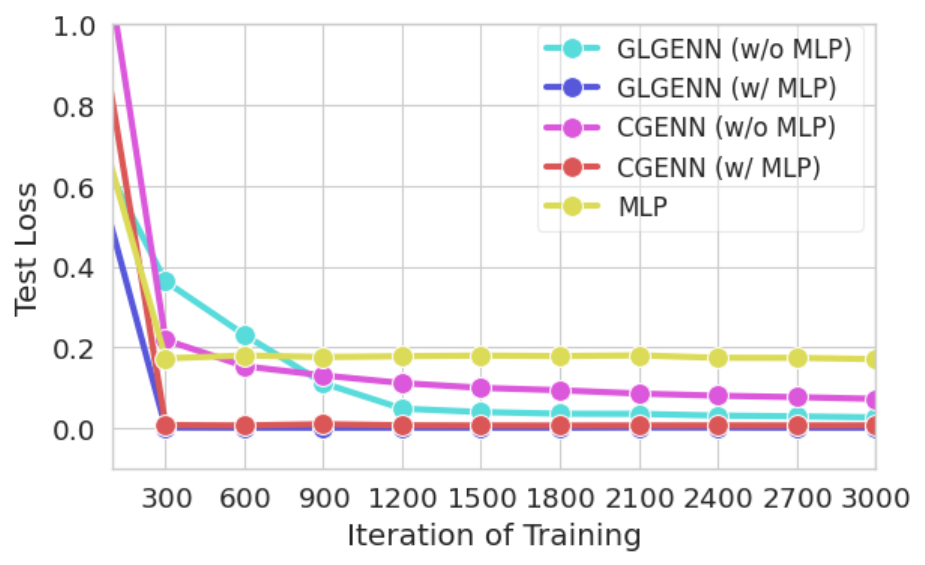}}
\vskip -0.15in
\caption{Combination of MLP with GLGENN and CGENN in $\OO(5,0)$-Regression.}
\label{fig_activations}
\end{center}
\vskip -0.4in
\end{figure}

\vspace{-3mm}
\subsection*{$\mathrm{O}(5,0)$-$N$-Body Experiment}
\vspace{-1mm}

In this benchmarking equivariant experiment \cite{cgenn,tf}, we consider a system of $N=5$ charged particles (bodies) with given masses, initial positions, and velocities. The objective is to predict the final positions of the bodies after the system evolves under Newtonian gravity for $1000$ Euler integration steps. Unlike standard low-dimensional setups, we simulate the system in  $\BR^{5,0}$ and embed the data into~$\C_{5,0}$. 
We construct a graph neural network (GNN) 
based on the message-passing paradigm \cite{nmp}, where bodies are treated as nodes in a graph, and their pairwise interactions are modeled as edges. For each edge, we compute a message and then aggregate at each node to update the particle states. The message and update networks are equivariant GLGENN. 
We compare against CGENN, which itself outperforms several state-of-the-art method (see Appendix \ref{appendix_exp_details}). To ensure a fair comparison, we use the best-performing CGENN architecture and then replace its layers with analogous GLGENN counterparts to obtain the GLGENN architecture, which automatically has two times fewer trainable parameters. The results are presented in Table \ref{tab_nbody} and Figure \ref{figure_all} (right).

\vspace{-4mm}
\section{Conclusions}\label{section:conclusions}
\vspace{-2mm}

We introduce a new equivariant neural networks architecture based on geometric algebras (GAs), called Generalized Lipschitz Groups Equivariant Neural Networks (GLGENN). 
These networks are equivariant with respect to pseudo-orthogonal transformations of a vector space with any non-degenerate or degenerate bilinear form. GLGENN are parameter-light, since they operate in a unified manner across four fundamental subpaces of GAs defined by the grade involution and reversion. GLGENN strike a balance between the expressiveness of Clifford Group Equivariant Neural Networks (CGENN) and parameter efficiency by respecting the fundamental algebraic structures of GAs\footnote{GLGENN goes beyond CGENN in the following: (1) Parameter-sharing approach for GA-based neural networks is presented for the first time. (2) New layers with parameter sharing are proposed, theoretical justification is provided (Section \ref{section:methodology}). (3) General idea is proposed that if we need orthogonal groups equivariance, then we may search for broader groups equivariance, such as new generalized Lipschitz groups, and get reasonable results (Theorem \ref{theorem_orth_lg_glg} and experiments). 
}. 



To develop GLGENN layers, we introduce and study the notion of generalized Lipschitz groups in GAs. We prove that the equivariance of a mapping with respect to these groups implies equivariance with respect to pseudo-orthogonal groups. As a result, GLGENN are equivariant with respect to a larger group than the pseudo-orthogonal group. 
However, empirically, we find that these networks are sufficiently expressive for tasks that require pseudo-orthogonal group equivariance. 
Experimental results demonstrate that GLGENN either outperform or match state-of-the-art models, while having significantly fewer trainable parameters and lower training time. Future research will focus on applying GLGENN to more complex real-world data experiments. Given their lower training time and promising results, we are optimistic about their potential.

 \section*{Acknowledgements}

The article was prepared within the framework of the project 'Mirror Laboratories' HSE University ('Quaternions, geometric algebras and applications').

The authors are grateful to the four anonymous reviewers for their careful reading of the paper and helpful comments on how to improve the presentation.



\section*{Impact Statement}

The broader impact of this work is in improvement of the efficiency and effectiveness of equivariant neural networks. The significance of these networks is substantial, especially in the field of natural sciences, where the tasks inherently involve equivariance with respect to pseudo-orthogonal transformations. Parameter-light equivariant architectures reduce the risk of model's overfitting in case of small training datasets, which is a common scenario in the natural science field. Efficient equivariant neural networks have great potential to stimulate breakthroughs and innovations in modeling and simulations of physical systems, robotics, material science molecular biology, geoscience, etc.




\nocite{langley00}

\bibliography{example_paper}
\bibliographystyle{icml2025}

\newpage
\appendix
\onecolumn



\section*{Brief Summary of Appendix}

In Appendix \ref{appendix_notation}, we provide an overview of notation that we use throughout the paper.

Appendix \ref{appendix_minkowski} provides an example of the main theoretical constructions considered in Section \ref{section:theoretical_background} using the geometric algebra $\C_{1,3}$ of the Minkowski space $\BR^{1,3}$.

Appendix \ref{appendix_grade_inv_rev} discusses the significance of grade involution and reversion in geometric algebras (GAs) $\C_{p,q,r}$. We show that these operations are in some sense unique in geometric algebras. This paragraph serves as motivation for considering the four subspaces of geometric algebras determined by the grade involution and the reversion. Our proposed architecture GLGENN actively operates with these subspaces in its structure.

In Appendix \ref{appendix_cen}, we consider the notion of centralizers and twisted centralizers of the subspaces of fixed grades and subspaces determined by the grade involution and reversion. We write out explicit forms of these sets. The centralizers and twisted centralizers play a crucial role in our main theoretical results on generalized Lipschitz groups.

Appendix \ref{appendix_ad} considers the (ordinary) Lipschitz groups, spin groups, and adjoint and twisted adjoint representations in arbitrary degenerate and non-degenerate geometric algebras. We prove several properties of these representations and find their kernels.

Appendix \ref{appendix_glg} presents and proves the key theoretical results necessary to constructing GLGENN. Specifically, we introduce and study generalized Lipschitz groups in arbitrary degenerate or non-degenerate geometric algebra. These groups preserve the subspaces determined by the grade involution and reversion under the twisted adjoint representation. We prove that these groups can be defined equivalently, using only the norm functions, which are applied in the theory of spin groups. This results allows us to prove that the (ordinary) Lipschitz groups are subgroups of the generalized Lipschitz groups. 

Appendix \ref{appendix_equivariant} proves several properties of equivariant mappings, which are applied in Section \ref{section:methodology}.

Appendix \ref{appendix_orthogonal} 
considers the relation between the well-known degenerate and non-degenerate pseudo-orthogonal (or complex orthogonal) groups and generalized Lipschitz groups. The obtained results enable us to prove that equivariance of a mapping with respect to the generalized Lipschitz groups implies its equivariance with respect to the corresponding pseudo-orthogonal group. This result is crucial for the construction of GLGENN, which are equivariant with respect to pseudo-orthogonal transformations.

Appendix \ref{appendix_exp_details} provides experimental details and discussions.

\section{Notation}\label{appendix_notation}

In Table \ref{table_notation},  we provide an overview of notation used throughout the paper.
We write out the notation, its meaning, and the place where it is mentioned for the first time.

\begin{table}[th!]
\caption{Summary of notation.}
\label{table_notation}
\vskip 0.15in
\begin{center}
\begin{small}
\begin{sc}
\begin{tabular}{p{3.2cm}|p{10cm}|p{2cm}} 
\toprule
Notation & Meaning & First mention \\
\midrule
$\C_{p,q,r}$ & (Clifford) Geometric algebra (GA) over the real $\BR^{p,q,r}$ or complex $\BC^{p+q,0,r}$ vector space & page \pageref{subsection_ga} \\
$\C_{p,q}$ & Non-degenerate geometric algebra $\C_{p,q,0}$ & page \pageref{subsection_ga} \\ 
$\Lambda_r$ & Grassmann subalgebra $\C_{0,0,r}$ of $\C_{p,q,r}$ &  page \pageref{subsection_ga} \\ \hline
$V$& Real $\BR^{p,q,r}$ or complex $\BC^{p+q,0,r}$ vector space & page \pageref{subsection_ga} \\ \hline
$\F$ & Field of real or complex numbers in the cases $\BR^{p,q,r}$ and $\BC^{p+q,0,r}$ respectively &  page \pageref{subsection_ga}  \\ \hline
$\eta$ & Matrix of a bilinear form $V\times V\rightarrow \F$ & page \pageref{subsection_ga} \\ \hline
$\bb$ & Symmetric bilinear form, $\bb:V\times V\rightarrow\F$ & page \pageref{subsection_ga}  \\ 
$\q$ & Quadratic form, $\q:V\rightarrow\F$ & page \pageref{subsection_ga}  \\ \hline
$e$ & Identity element & page \pageref{def_conj} \\
$e_1,\ldots,e_n$ & Generators of $\C_{p,q,r}$ & page \pageref{def_conj} \\ \hline
$\C^0$ & Subspace of grade $0$ &  page \pageref{def_conj} \\ 
$\C^k_{p,q,r}$ & Subspace of grade $k$, where $k=1,\ldots,n$ &  page \pageref{def_conj} \\ 
$\{\C^k_{p,q}\Lambd^l_r\}$ & Subspace of $\C_{p,q,r}$ spanned by the elements of the form $ab$, where $a\in \C^k_{p,q}$ and $b\in\Lambd^l_r$ & page \pageref{same_even} \\
$\langle\rangle_k$ & Projection onto the subspace of grade $k$ & page \pageref{def_conj} \\ \hline
$\widehat{U}$ & Grade involute of $U\in\C_{p,q,r}$ & formula \eqref{def_inv_rev} \\ 
$\widetilde{U}$ & Reversion of $U\in\C_{p,q,r}$ & formula \eqref{def_inv_rev} \\ 
$\widehat{\widetilde{U}}$ & Clifford conjugate of $U\in\C_{p,q,r}$ & page \pageref{def_inv_rev} \\ \hline
$\C^{(0)}_{p,q,r}$, $\C^{(1)}_{p,q,r}$ & Even and odd subspaces of $\C_{p,q,r}$ & formula \eqref{def_even_odd} \\ 
$(\C^{(0)\times}_{p,q,r}\cup\C^{(1)\times}_{p,q,r})\Lambda^{\times}_r$ & Product of two groups: $\{ab\;|\;a\in\C^{(0)\times}_{p,q,r}\cup\C^{(1)\times}_{p,q,r},\; b\in\Lambda^{\times}_r\}$ & Lemma \ref{lemma_prop} \\
$\langle\rangle_{(l)}$ & Projection onto the subspace $\C^{(l)}_{p,q,r}$, where $l=0,1$ & page \pageref{def_even_odd} \\  \hline
$\C^{\overline{m}}_{p,q,r}$ & Subspaces determined by grade involution and reversion & formula \eqref{qtdef} \\ 
$\langle\rangle_{\overline{m}}$ & Projection onto the subspace $\C^{\overline{m}}_{p,q,r}$, where $m=0,1,2,3$ & page \pageref{qtdef} \\ \hline
$\H^{\times}$ & Subset of all invertible elements of a set $\H$ & page \pageref{qtdef} \\ \hline
$\ad$ & Adjoint representation $\ad_{T}(U)=TUT^{-1}$ & formula \eqref{ar} \\ 
$\check{\ad}$ & Twisted adjoint representation $\check{\ad}_{T}(U)=\widehat{T}UT^{-1}$ & formula  \eqref{twa1} \\ 
$\tilde{\ad}$ & Twisted adjoint representation $\tilde{\ad}_{T}(U)=T\langle U\rangle_{(0)}T^{-1}+ \widehat{T}\langle U\rangle_{(1)}T^{-1}$ & formula \eqref{twa22} \\
$\tilde{\ad}^1$ & $\tilde{\ad}$ restricted to $\C^1_{p,q,r}$ & page \pageref{relog}\\ \hline
$\tilde{\Gamma}^{1}_{p,q,r}$ & Ordinary Lipschitz groups & formula \eqref{def_lg} \\ \hline
$\psi$, $\chi$ & Norm functions of $\C_{p,q,r}$ elements & formula \eqref{norm_funct} \\ \hline
$\tilde{\Gamma}^{\overline{k}}_{p,q,r}$ & Generalized Lipschitz groups, $k=0,1,2,3$ & formula \eqref{gamma_ov_tk} \\ \hline
$\Gamma^{\overline{k}}_{p,q,r}=\Q^{\overline{k}}_{p,q,r}$, & Groups preserving the subspaces $\C^{\overline{k}}_{p,q,r}$, $k=0,1,2,3$, under $\ad$ & formulas \eqref{gamma_ov_k} and  \eqref{qk}--\eqref{q0} \\ 
$\check{\Gamma}^{\overline{k}}_{p,q,r}=\check{\Q}^{\overline{k}}_{p,q,r}$  & Groups preserving the subspaces $\C^{\overline{k}}_{p,q,r}$, $k=0,1,2,3$, under $\check{\ad}$  & formulas \eqref{gamma_ov_chk} and  \eqref{chqk}--\eqref{chq0} \\ \hline
$\Z_{p,q,r}$ & Center of $\C_{p,q,r}$ & formula \eqref{def_cen}\\
$\Z^k_{p,q,r}$, $\check{\Z}^k_{p,q,r}$ & Centralizers and twisted centralizers respectively of $\C^k_{p,q,r}$, $k=0,\ldots,n$ & page \pageref{chq0} \\
$\Z^{\overline{k}}_{p,q,r}$, $\check{\Z}^{\overline{k}}_{p,q,r}$ &  Centralizers and twisted centralizers respectively of $\C^{\overline{k}}_{p,q,r}$, $k=0,1,2,3$ & formulas \eqref{def_CC_ov}--\eqref{def_chCC_ov} \\ \hline
$\GL(n,\F)$ & General linear group acting on an $n$-dimensional vector space over $\F$ & formula \eqref{def_opq}\\
$\OO(V,\q)$ & Orthogonal group & formula \eqref{def_opq} \\
$\OO_{\Lambda^1_r}(V,\q)$ & Subgroup of $\OO(V,\q)$ that leaves invariant $\Lambda^1_r$ & formula \eqref{def_restr_opq} \\
\bottomrule
\end{tabular}
\end{sc}
\end{small}
\end{center}
\vskip -0.1in
\end{table}

\section{Example of Geometric Algebra $\C_{1,3}$ of Minkowski Space $\BR^{1,3}$}\label{appendix_minkowski}

In this section, we provide an example of the key theoretical concepts and constructions considered in Section \ref{section:theoretical_background}. Let us consider the Minkowski space $V=\BR^{1,3}$ equipped with the Minkowski metric $\eta = \mathrm{diag}(1,-1,-1,-1)$. In this case, we have $p=1$, $q=3$, $r=0$, and $n=p+q=4$. The corresponding geometric  (Clifford) algebra is $\C_{1,3}=\C(\BR^{1,3})$, which is the real non-degenerate $2^4=16$-dimensional algebra with the identity element $e$ and generators $e_1,e_2,e_3,e_4$ that satisfy:
\begin{eqnarray}
    e_a e_b + e_b e_a = 2\eta_{ab} e,\quad a,b=1,\ldots,4,
\end{eqnarray}
which implies that the distinct generators anticommute, the first generator squared gives $+e$, and the other three generators squared give $-e$:
\begin{eqnarray}
    e_a e_b = -e_b e_a,\quad a,b=1,\ldots 4,\quad  a\neq b;\qquad (e_1)^2=+e,\qquad (e_2)^2=(e_3)^2=(e_4)^2=-e.
\end{eqnarray}

The basis elements of the algebra $\C_{1,3}$ are constructed as ordered products of distinct generators, such as $e_{123}=e_1e_2e_3$ and, in general, $e_{a_1\ldots a_k}= e_{a_1}\cdots e_{a_k}$ for $a_1,\ldots, a_k \in\{1,2,3,4\}$ and $a_1<\cdots<a_k$. In the case of $\C_{1,3}$, we have five non-trivial subspaces of fixed grades: 
\begin{eqnarray}
    &\C^0_{1,3} = \mathrm{span}\{e\}\equiv \BR,\quad \C^1_{1,3} =\mathrm{span}\{e_1,e_2,e_3,e_4\}\equiv V,\quad \C^2_{1,3} =\mathrm{span}\{e_{12},e_{13},e_{14},e_{23},e_{24},e_{34}\}, 
    \\
    &\C^3_{1,3} =\mathrm{span}\{e_{123},e_{124},e_{134},e_{234}\},\quad \C^4_{1,3} = \mathrm{span} \{e_{1234}\},
\end{eqnarray}
which we call the subspaces of scalars, vectors, bivectors, trivectors, and $4$-vectors respectively. 
An arbitrary element $U\in\C_{1,3}$ can be written as
\begin{eqnarray}\label{arb_U}
U= ue + \sum_{a=1,\ldots,4} u_{a}e_a + \sum_{\substack{a,b=1,\ldots,4,\\ a<b}} u_{ab}e_{ab} + \sum_{\substack{a,b,c=1,\ldots,4,\\ a<b<c}} u_{abc}e_{abc} +  u_{1234}e_{1234},\qquad u,\ldots,u_{1234}\in\BR.
\end{eqnarray}
The even $\C^{(0)}_{1,3}$ and odd $\C^{(1)}_{1,3}$ subspaces are defined as
\begin{eqnarray}
    \C^{(0)}_{1,3} = \C^0_{1,3}\oplus\C^2_{1,3}\oplus\C^4_{1,3},\qquad \C^{(1)}_{1,3} = \C^1_{1,3}\oplus\C^3_{1,3}.
\end{eqnarray}
In $\C_{1,3}$, the four subspaces determined by the grade involution $\widehat{\;}$ and the reversion $\widetilde{\;}$ (the subspaces of quaternion types) are defined as 
\begin{eqnarray}
\C^{\overline{0}}_{1,3} = \C^0_{1,3}\oplus\C^4_{1,3},\quad \C^{\overline{1}}_{1,3} = \C^1_{1,3},\quad \C^{\overline{2}}_{1,3} = \C^2_{1,3},\quad \C^{\overline{3}}_{1,3} = \C^3_{1,3}
\end{eqnarray}
and are very similar to the subspaces of fixed grades because of small $n=4$.

Let us consider the following example on how the projections  $\langle\rangle_k$ onto the subspaces of fixed grades $\C^k_{1,3}$, $k=0,\ldots,4$, the projections  $\langle\rangle_{(l)}$ onto the even and odd subspaces $\C^{(l)}_{1,3}$, $l=0,1$, and the projections  $\langle\rangle_{\overline{m}}$ onto $\C^{\overline{m}}_{1,3}$, $m=0,1,2,3$ act on the element $W= e +2 e_{1} + 3 e_{2} + 4 e_{234} + 5 e_{1234}$:
\begin{eqnarray}
    &\langle W\rangle_{0} = e,\quad \langle W\rangle_1=\langle W\rangle_{\overline{1}} = 2e_1+3e_2,\quad \langle W\rangle_2=\langle W\rangle_{\overline{2}} = 0,\quad \langle W\rangle_3=\langle W\rangle_{\overline{3}}=4e_{234},\quad \langle W\rangle_4 = 5e_{1234},
    \\
   & \langle W\rangle_{\overline{0}} =\langle W\rangle_{(0)} = e+ 5e_{1234},\quad \langle W\rangle_{(1)} =  2e_1 + 3e_2 + 4e_{234}.
\end{eqnarray}
For an arbitrary $U\in\C_{1,3}$ \eqref{arb_U}, we have 
\begin{eqnarray}
    \widehat{U} = \langle U \rangle_{(0)} -  \langle U \rangle_{(1)},\qquad \widetilde{U} = \langle U\rangle_{\overline{0}} + \langle U\rangle_{\overline{1}} - \langle U\rangle_{\overline{2}} - \langle U\rangle_{\overline{3}},\qquad \widehat{\widetilde{U}} = \langle U\rangle_{\overline{0}} - \langle U\rangle_{\overline{1}} - \langle U\rangle_{\overline{2}} + \langle U\rangle_{\overline{3}}.
\end{eqnarray}

Now let us consider the example on how the adjoint representation $\ad$ \eqref{ar} and twisted adjoint representations $\check{\ad}$ \eqref{twa1} and $\tilde{\ad}$ \eqref{twa22} act. Let us consider the element $T=e_1+e_{123}\in\C^{\times}_{1,3}$, which is invertible because $\frac{1}{2}(e_1+e_{123})(e_1-e_{123})=e$. Suppose $U=e+e_4\in\C_{1,3}$. We have
\begin{eqnarray}
    \ad_{T}(U) &=& TUT^{-1}= T(e+e_4)T^{-1} = TT^{-1} + Te_4T^{-1}= e+ \frac{1}{2}(e_1+e_{123})e_4(e_1-e_{123}) 
    \\
    &=& e + \frac{1}{2}(e_1+e_{123})(-e_1+e_{123})e_4 = e - \frac{1}{2}(e_1+e_{123})(e_1-e_{123})e_4 = e-e_4,
    \\
    \check{\ad}_T(U) &=& \widehat{T}U T^{-1}= - TUT^{-1} = -e+e_4,
    \\
    \tilde{\ad}_T(U) &=& T\langle U\rangle_{(0)} T^{-1} + \widehat{T} \langle U \rangle_{(1)}T^{-1} = TeT^{-1} - Te_4T^{-1} = e + e_4.
\end{eqnarray}
Note that the results are different for different operations $\ad$, $\check{\ad}$, $\tilde{\ad}$.

The Lipschitz group in the case $\C_{1,3}$ has the form
\begin{eqnarray}
    \tilde{\Gamma}^1_{1,3} = \{T\in\C^{\times}_{1,3}:\quad \widehat{T}\C^1_{1,3}T^{-1}\subseteq\C^1_{1,3}\}.
\end{eqnarray}
A simple example of an element of $\tilde{\Gamma}^1_{1,3}$ is any basis element $e_{a_1\ldots a_k}$, since $e_{a_1\ldots a_k}^{-1}=\pm e_{a_1\ldots a_k}$, where the sign depends on $a_1,\ldots,a_k$ and $k$, and $\widehat{e_{a_1\ldots a_k}} e_{a} e_{a_1\ldots a_k}^{-1} = \pm e_{a}\in\C^1_{1,3}$ for any $a=1,\ldots,4$. A slightly more complicated example of the Lipschitz group $\tilde{\Gamma}^1_{1,3}$ element is $T=e_1+e_{123}\in\C^{\times}_{1,3}$ considered in the paragraph above. We have:
\begin{eqnarray}
    \widehat{T}e_1T^{-1} = -\frac{1}{2}(e_1+e_{123}) e_1 (e_1-e_{123}) = -e_1,\quad \widehat{T}e_2T^{-1} = e_3,\quad \widehat{T}e_3T^{-1} = e_2,\quad \widehat{T}e_4T^{-1} = e_4.
\end{eqnarray}
Therefore, $\widehat{T}e_aT^{-1}\in\C^1_{1,3}$ for $a=1,\ldots,4$; thus, $T\in\tilde{\Gamma}^1_{1,3}$.

\section{Grade Involution and Reversion as Fundamental Operations in $\C_{p,q,r}$}\label{appendix_grade_inv_rev}

This work introduces the neural networks that are equivariant with respect to the groups $\tilde{\Gamma}^{\overline{1}}_{p,q,r}$ \eqref{def_tilde_gk} preserving the subspace $\C^{\overline{1}}_{p,q,r}$ \eqref{qtdef} determined by the grade involution and reversion under the twisted adjoint representation $\tilde{\ad}$. These operations can be considered as fundamental operations in geometric algebras $\C_{p,q,r}$. In this section, we discuss their significance.

In Subsection \ref{subsection_ga}, the grade involute and reversion of an arbitrary $U\in\C_{p,q,r}$ are defined in the following way respectively:
\begin{align}\label{app_def_inv_rev}
    \widehat{U} := \sum_{k=0}^n (-1)^k\langle U\rangle_k,\qquad \widetilde{U} := \sum_{k=0}^n (-1)^{\frac{k(k-1)}{2}}\langle U\rangle_k.
\end{align}
These mappings have the following well-known properties, which we apply for many times in our work:
\begin{align}
    \widehat{(UV)}=\widehat{U}\widehat{V},\qquad \widetilde{(UV)}=\widetilde{V}\widetilde{U},\qquad \forall U,V\in\C_{p,q,r}.
\end{align}
Let us show that these operations are in some sense unique in $\C_{p,q,r}$. For this purpose, we need to consider another equivalent definition of geometric  algebras $\C_{p,q,r}$. More detailed discussion of definitions and statements of this section can be found, for example, in \citealp{lounesto,HelmBook}.

The geometric (Clifford) algebra $\C(V)$ is an associative algebra with the unity $e$ over a vector space $V$ with a symmetric bilinear form $\bb:V\times V\rightarrow\F$ and the corresponding quadratic form $\q:V\rightarrow\F$, with a linear map $j:V\rightarrow\C(V)$ such that 
\begin{align}
    j^2(x)=\q(x)e,\qquad \forall x\in V,
\end{align}
and for any other associative algebra $A$ over $V$ with the unity $e_A$ and any linear map $k:V\rightarrow A$, such that $k^2(x)=\q(x)e_A$ for any $x\in V$, there exists a unique algebra homomorphism $f:\C(V)\rightarrow A$, such that $f\circ j=k$, i.e. the corresponding diagram is commutative:
\begin{center}
  \begin{tikzcd}
    V \arrow{r}{j} \arrow[swap]{dr}{k} & \C(V) \arrow{d}{f} \\
     & A
  \end{tikzcd}
\end{center}


\begin{theorem}
In any geometric algebra $\C(V)$, there is a unique antiautomorphism $\tau:\C(V)\rightarrow\C(V)$, such that
\begin{align}
    \tau(xy)=\tau(y)\tau(x),\quad\tau\circ\tau=\mathrm{id},\quad \tau(j(v)) = j(v),\qquad \forall x,y\in\C(V),\quad \forall v\in V.
\end{align}
\end{theorem}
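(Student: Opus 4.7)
The plan is to apply the universal property of $\C(V)$ in a systematic way, using the opposite algebra as a bookkeeping device. First, I would form $\C(V)^{\mathrm{op}}$, which has the same underlying vector space and unit as $\C(V)$ but the reversed product $a \cdot_{\mathrm{op}} b := ba$. The map $k := j : V \to \C(V)^{\mathrm{op}}$ is linear, and since
\begin{equation*}
k(v) \cdot_{\mathrm{op}} k(v) = j(v)\, j(v) = \q(v)\, e,
\end{equation*}
the universal property supplies a unique algebra homomorphism $\tau : \C(V) \to \C(V)^{\mathrm{op}}$ with $\tau \circ j = j$. Viewed as a linear endomorphism of the underlying vector space $\C(V)$, the map $\tau$ is then an algebra antihomomorphism in the original product, i.e.\ $\tau(xy) = \tau(y)\tau(x)$, and it satisfies $\tau(j(v)) = j(v)$ for all $v \in V$.

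Next I would establish the involutivity $\tau \circ \tau = \mathrm{id}$. The composition of two algebra antihomomorphisms $\C(V) \to \C(V)$ is an algebra homomorphism, and $\tau \circ \tau$ fixes $j(v)$ for every $v \in V$. The identity map is another algebra homomorphism $\C(V) \to \C(V)$ with the same restriction to $j(V)$, so applying the universal property a second time — now with target $\C(V)$ itself and the linear map $j : V \to \C(V)$ — the uniqueness clause forces $\tau \circ \tau = \mathrm{id}$. Uniqueness of $\tau$ as an antiautomorphism with the stated properties is immediate: any candidate $\tau'$ gives, via the opposite-algebra translation, an algebra homomorphism $\C(V) \to \C(V)^{\mathrm{op}}$ extending $j$, and such a homomorphism is unique by the universal property, so $\tau' = \tau$.

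I do not anticipate a serious obstacle; the whole argument is a formal double application of the universal property. The only point that needs care is the bookkeeping between homomorphisms into $\C(V)$ and homomorphisms into $\C(V)^{\mathrm{op}}$, so that multiplicativity and antimultiplicativity are not confused — the existence of $\tau$ uses the nontrivial target $\C(V)^{\mathrm{op}}$, while both the involution property and the uniqueness reduce to a second invocation of the universal property with the trivial target $\C(V)$.
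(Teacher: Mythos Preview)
Your proposal is correct and follows essentially the same approach as the paper: both invoke the universal property of $\C(V)$ with target the opposite algebra $\C(V)^{o}$ to obtain $\tau$. Your version is in fact more complete, since the paper's proof stops after producing the homomorphism $\tau:\C(V)\to\C(V)^{o}$ via the diagram, whereas you explicitly verify the involutivity $\tau\circ\tau=\mathrm{id}$ and the uniqueness clause by a second application of the universal property.
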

\begin{proof}
Let us consider the opposite algebra $\C(V)^{o}$, in which the product of $x$ and $y$ is given by $yx$. Then there exists a unique algebra homomorphism $\tau:\C(V)\rightarrow\C(V)^{o}$ as in the diagram below
\begin{center}
  \begin{tikzcd}
    V \arrow{r}{j} \arrow[swap]{dr}{j} & \C(V) \arrow{d}{\tau} \\
     & \C(V)^{o}
  \end{tikzcd}
\end{center}
This completes the proof.
\end{proof}

Since in our case, $V$ has finite dimension $n$ with a basis $e_1,\ldots,e_n$, the mapping $\tau$  is completely determined by its action on the basis elements. Namely, $\tau$ is defined by 
\begin{align}
    \tau(e_i) = e_i,\qquad \tau(e_{i_1}e_{i_2}\cdots e_{i_k})=e_{i_k}e_{i_{k-1}}\cdots e_{i_1},\qquad \tau(e)=e,
\end{align}
where $1\leq i_1<i_2<\cdots <i_k\leq n$.  The mapping $\tau$ coincides with the reversion $\widetilde{\;\;}$ defined as in \eqref{app_def_inv_rev}.

\begin{theorem}
    In any geometric algebra $\C(V)$, there is a unique automorphism $\alpha:\C(V)\rightarrow\C(V)$, such that
    \begin{align}
    \alpha(xy)=\alpha(x)\alpha(y),\qquad\alpha\circ\alpha = \mathrm{id},\qquad \alpha(j(v))=-j(v),\qquad\forall x,y\in\C(V),\quad \forall v\in V.
    \end{align}
\end{theorem}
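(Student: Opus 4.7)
The plan is to mirror the previous theorem's universal-property argument, but work inside $\C(V)$ itself rather than the opposite algebra, since $\alpha$ is required to be an algebra homomorphism (order-preserving) rather than an antihomomorphism. First, I would introduce the auxiliary linear map $k:V\rightarrow\C(V)$ defined by $k(v):=-j(v)$. Then $k^2(v)=(-j(v))(-j(v))=j(v)^2=\q(v)e$, so $k$ satisfies the universal condition for the target algebra $A=\C(V)$ with unit $e$.

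By the universal property of $\C(V)$, there exists a unique algebra homomorphism $\alpha:\C(V)\rightarrow\C(V)$ with $\alpha\circ j=k$, i.e.\ $\alpha(j(v))=-j(v)$ for every $v\in V$. This gives existence of a homomorphism satisfying $\alpha(xy)=\alpha(x)\alpha(y)$ and $\alpha(j(v))=-j(v)$; uniqueness of $\alpha$ among algebra homomorphisms with the last property is immediate from the uniqueness clause of the universal property.

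Next I would verify $\alpha\circ\alpha=\mathrm{id}_{\C(V)}$. Composing, $\alpha\circ\alpha$ is again an algebra homomorphism $\C(V)\rightarrow\C(V)$, and on vectors it acts by $(\alpha\circ\alpha)(j(v))=\alpha(-j(v))=-\alpha(j(v))=j(v)$. Applying the universal property a second time with the choice $k=j$ (which is trivially satisfied by $\mathrm{id}_{\C(V)}$), the uniqueness clause forces $\alpha\circ\alpha=\mathrm{id}_{\C(V)}$. This simultaneously establishes that $\alpha$ is bijective, hence an automorphism (not merely an endomorphism). Combined with the previous paragraph, this delivers all the claimed properties.

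There is essentially no hard step; the only point requiring care is the argument for $\alpha\circ\alpha=\mathrm{id}$, which should be done via the uniqueness clause of the universal property rather than by a basis-by-basis calculation, so that the proof stays basis-free and valid for the degenerate case $r\neq 0$ as well. Concretely on the standard basis this $\alpha$ acts by $e_{i_1}\cdots e_{i_k}\mapsto (-1)^k e_{i_1}\cdots e_{i_k}$, so it coincides with the grade involution $\widehat{\;\;}$ defined in \eqref{app_def_inv_rev}, which confirms consistency with the rest of the paper.
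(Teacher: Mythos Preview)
Your proof is correct and follows essentially the same approach as the paper: both invoke the universal property of $\C(V)$ applied to the linear map $v\mapsto -j(v)$ to obtain the unique algebra homomorphism $\alpha$. The only minor difference is in verifying $\alpha\circ\alpha=\mathrm{id}$: the paper argues by writing an arbitrary element as a sum of products of vectors and applying $\alpha$ twice, whereas you (slightly more elegantly) invoke the uniqueness clause of the universal property a second time; both arguments are valid.
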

\begin{proof}
    Let us consider the linear mapping $\alpha_0:V\rightarrow\C(V)$ defined as $\alpha_0(v)=-j(v)$ for any $v\in V$. We get a unique homomorphism $\alpha$ as in the diagram below
    \begin{center}
  \begin{tikzcd}
    V \arrow{r}{j} \arrow[swap]{dr}{\alpha_0} & \C(V) \arrow{d}{\alpha} \\
     & \C(V)
  \end{tikzcd}
  \end{center}
  Furthermore, any $x\in\C(V)$ can be represented as
  \begin{align}
      x=\sum x_1\cdots x_m,\qquad x_1,\ldots,x_m\in j(V),
  \end{align}
  and since $\alpha(x_j)=-x_j$ for any $j=1,\ldots,m$, we get $\alpha\circ\alpha=\mathrm{id}$. This completes the proof.
\end{proof}

Again, since $V$ has finite dimension, the mapping $\alpha$ is completely determined by its action on the basis elements. Namely, $\alpha$ is defined by 
\begin{align}
\alpha(e_i) = -e_i,\qquad \alpha(e_{i_1}e_{i_2}\cdots e_{i_k}) = (-1)^k e_{i_1}e_{i_2}\cdots e_{i_k},\qquad \alpha(e)=e,
\end{align}
where $1\leq i_1<i_2<\cdots <i_k\leq n$. The mapping $\alpha$ coincides with the grade involution $\widehat{\;\;}$ defined as in \eqref{app_def_inv_rev}.

\section{Centralizers and Twisted Centralizers}\label{appendix_cen}

We denote the \emph{center} of the algebra $\C_{p,q,r}$ by $\Z_{p,q,r}$. It is a well-known fact \cite{crum_book} that the center has the following form
\begin{align}\label{def_cen}
    \Z_{p,q,r} = \begin{cases}
        (\Lambda^{(0)}_r\oplus\C^n_{p,q,r})^{\times},\quad \mbox{$n$ is odd};
        \\
        \Lambda^{(0)\times}_r,\quad\mbox{$n$ is even}.
    \end{cases}
\end{align}

Consider the \emph{centralizers} (see, for example, \citealp{Garling2011,Isaacs2009}) of the subspaces of fixed grades $\C^{m}_{p,q,r}$, $m=0,\ldots,n$, in $\C_{p,q,r}$:
\begin{align}
    \Z^m_{p,q,r}:= \{X\in\C_{p,q,r}:\;\; XV=VX,\;\; \forall V\in\C^m_{p,q,r}\}.\label{app_def_z}
\end{align}
The centralizer $\Z^m_{p,q,r}$ contains all the elements of $\C_{p,q,r}$ that commute with any grade-$m$ element. The center of the Clifford algebra $\C_{p,q,r}$ is the centralizer of
the entire Clifford algebra $\C_{p,q,r}$. 

Similarly, we consider the \emph{twisted centralizers} of $\C^{m}_{p,q,r}$, $m=0,\ldots,n$, in $\C_{p,q,r}$:
\begin{align}
    \check{\Z}^m_{p,q,r} :=  \{X\in\C_{p,q,r}:\;\; \widehat{X}V=VX,\;\; \forall V\in\C^m_{p,q,r}\}.\label{app_def_chz}
\end{align}
Note that ${\Z}^{m}_{p,q,r}=\check{\Z}^{m}_{p,q,r}=\C_{p,q,r}$ for $m<0$ and $m>n$. The particular case $\check{\Z}^{1}_{p,q,r}$ is considered in the papers \citealp{cgenn,br1,cen,OnSomeLie}. 

Note that the projections $\langle\Z^m_{p,q,r}\rangle_{(0)}$ and $\langle\check{\Z}^m_{p,q,r}\rangle_{(0)}$ of $\Z^{m}_{p,q,r}$ and $\check{\Z}^m_{p,q,r}$ respectively onto the even subspace $\C^{(0)}_{p,q,r}$ coincide by definition:
\begin{eqnarray}
    \langle\Z^m_{p,q,r}\rangle_{(0)}=\langle\check{\Z}^m_{p,q,r}\rangle_{(0)},\qquad \forall m=0,1,\ldots,n.\label{same_even}
\end{eqnarray}

The paper \citealp{cen} studies explicit forms of the sets  ${\Z}^{m}_{p,q,r}$ and  $\check{\Z}^{m}_{p,q,r}$ in the case of arbitrary $m=0,\ldots,n$. In Remark \ref{app_cases_we_use}, for the readers' convenience, we write out several particular cases of these sets, which we further use.
A few words about the notation in this remark and below. The spaces $\C^k_{p,q}$ and $\Lambd^k_r$, $k=0,\ldots,n$, are regarded as subspaces of $\C_{p,q,r}$. 
By $\{\C^k_{p,q}\Lambd^l_r\}$, we denote the subspace of $\C_{p,q,r}$ spanned by the elements of the form $ab$, where $a\in \C^k_{p,q}$ and $b\in\Lambd^l_r$.

\begin{remark}[Explicit forms of centralizers and twisted centralizers]\label{app_cases_we_use}
We have:
\begin{eqnarray}
\!\!\!\!\!\!\!\!\!\!\!\!\!\!\!\!\!\!&&\Z^1_{p,q,r}=\Z_{p,q,r}=\left\lbrace
    \begin{array}{lll}
    \Lambda^{(0)}_r\oplus\C^{n}_{p,q,r},&&\mbox{$n$ is odd},
    \\
    \Lambda^{(0)}_r,&&\mbox{$n$ is even};
     \end{array}
    \right.\label{cc_1}
\\
\!\!\!\!\!\!\!\!\!\!\!\!\!\!\!\!\!\!&&\Z^2_{p,q,r}=
\left\lbrace
    \begin{array}{lll}
\Lambda_r\oplus\C^{n}_{p,q,r},&& r\neq n,
\\
\Lambda_r,&& r=n;
\end{array}
    \right.\label{cc_2}
\\
\!\!\!\!\!\!\!\!\!\!\!\!\!\!\!\!\!\!&&\Z^3_{p,q,r}\!=\!\left\lbrace
            \begin{array}{lll}\label{cc_3}
            \Lambda^{(0)}_r\!\oplus\!\Lambda^{n-2}_r\oplus \{\C^{1}_{p,q}(\Lambda^{n-3}_r\oplus\Lambda^{n-2}_r)\}
            \oplus \{\C^{2}_{p,q}\Lambda^{n-3}_r\}\oplus\C^{n}_{p,q,r}, &&\mbox{$n$ is odd},
            \\
            \Lambda^{(0)}_r\!\oplus\!\Lambda^{n-1}_r\oplus \{\C^{1}_{p,q}\Lambda^{\geq n-2}_r\}\oplus \{\C^{2}_{p,q}\Lambda^{n-2}_r\}, &&\mbox{$n$ is even};
            \end{array}
            \right.
\\
\!\!\!\!\!\!\!\!\!\!\!\!\!\!\!\!\!\!&&\Z^4_{p,q,r}\!=\!
\left\lbrace
\begin{array}{lll}\label{cc_4}
\Lambda_r\oplus\{\C^{1}_{p,q}(\Lambda^{n-3}_r\oplus\Lambda^{n-2}_r)\}\oplus \{\C^{2}_{p,q}(\Lambda^{n-4}_r\oplus\Lambda^{n-3}_r)\}\oplus\C^{n}_{p,q,r},&& r\neq n,
\\
\Lambda_r, && r=n;
\end{array}
\right.
\end{eqnarray}
and:
\begin{eqnarray}
    \!\!\!\!\!\!\!\!\!\!\!\!\!\!\!\!\!\!&&\check{\Z}^1_{p,q,r}=\Lambda_r;\label{ch_cc_1}
    \\
    \!\!\!\!\!\!\!\!\!\!\!\!\!\!\!\!\!\!&&\check{\Z}^2_{p,q,r}=
    \left\lbrace
    \begin{array}{lll}\label{ch_cc_2}
    \Lambda^{(0)}_r\oplus\Lambda^{n}_r\oplus \{\C^{1}_{p,q}\Lambda^{n-1}_r\},&&\!\!\!\!\!\!\mbox{$n$ is odd},
    \\
    \Lambda^{(0)}_r\oplus\Lambda^{n-1}_r\oplus \{\C^{1}_{p,q}\Lambda^{n-2}_r\}\oplus\C^{n}_{p,q,r},&&\!\!\!\!\!\!\mbox{$n$ is even},\;\; r\neq n;
    \\
    \Lambda^{(0)}_r\oplus\Lambda^{n-1}_r,&&\!\!\!\!\!\!\mbox{$n$ is even},\;\; r=n;
    \end{array}
    \right.
    \\
    \!\!\!\!\!\!\!\!\!\!\!\!\!\!\!\!\!\!&&\check{\Z}^3_{p,q,r}=\Lambda_r\oplus \{\C^{1}_{p,q}\Lambda^{\geq n-2}_r\}\oplus \{\C^{2}_{p,q}\Lambda^{\geq n-3}_r\}.\label{ch_cc_3}
\end{eqnarray}
\end{remark}

\begin{remark}
In the particular case of the non-degenerate geometric algebras $\C_{p,q}$, we have
\begin{align}
   & \Z^1_{p,q} = \Z_{p,q}=
    \begin{cases}
        \C^0\oplus\C^n_{p,q},&\mbox{$n$ is odd},
        \\
        \C^0, &\mbox{$n$ is even};
    \end{cases}
    \qquad\Z^2_{p,q} = \C^0\oplus\C^n_{p,q};
    \\
    & \Z^3_{p,q} = 
    \begin{cases}
        \C_{p,q},&\mbox{$n=2,3$},
        \\
        \C^0, &\mbox{$n\neq2,3$};
    \end{cases}
    \qquad\Z^4_{p,q} = 
    \begin{cases}
        \C_{p,q},&\mbox{$n=2,3$},
        \\
        \C^{(0)}_{p,q},&\mbox{$n=4$},
        \\
        \C^0\oplus\C^n_{p,q}, &\mbox{$n\neq2,3,4$};
    \end{cases}
    \\
    &\check{\Z}^1_{p,q}=\C^0,\qquad 
    \check{\Z}^2_{p,q} = \begin{cases}
        \C_{p,q}, &\mbox{$n=1,2$},
        \\
        \C^0, &\mbox{$n\neq1$ is odd},
        \\
        \C^0\oplus\C^n_{p,q}, &\mbox{$n\neq2$ is even};
    \end{cases}\qquad
    \check{\Z}^3_{p,q} = \begin{cases}
    \C_{p,q},&\mbox{$n=1,2$},
    \\
    \C^{(0)}_{p,q}, &\mbox{$n=3$},
    \\
    \C^0,&\mbox{$n\geq4$}.
    \end{cases}
\end{align}
\end{remark}

Using the definition of the even  subspace $\C^{(0)}_{p,q,r}$ and explicit forms of centralizers of fixed grades, presented in \citealp{cen}, the following lemma can be obtained.
\begin{lemma}\label{app_lemma_XVVX}
We have
\begin{align}
&\{X\in\C_{p,q,r}:\quad XV=VX, \quad \forall V\in\C^{(0)}_{p,q,r}\}=
\left\lbrace
\begin{array}{lll}\label{XVVX_r}
\Lambd_r\oplus\C^n_{p,q,r},&& r\neq n,
\\
\Lambd_n,&& r=n,
\end{array}
\right.
\\
&\{X\in\C_{p,q,r}:\quad \widehat{X}V=VX, \quad\forall V\in\C^{(0)}_{p,q,r}\}=
\left\lbrace
\begin{array}{lll}\label{XVVX_r_check}
\Lambd^{(0)}_r,&\mbox{$n$ is odd};\; \mbox{$r=n$ is even};
\\
\Lambd^{(0)}_r\oplus\C^n_{p,q,r},&\mbox{$n$ is even},\; r\neq n.
\end{array}
\right.
\end{align}
\end{lemma}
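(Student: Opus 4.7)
The plan is to reduce both identities to the centralizer $\Z^2_{p,q,r}$ of the bivector subspace, whose explicit form is already given in Remark \ref{app_cases_we_use}. The key structural fact I would first establish is that the even subalgebra $\C^{(0)}_{p,q,r}$ is generated as an algebra with identity by $\C^2_{p,q,r}$. Indeed, for any even basis monomial $e_{a_1\cdots a_{2k}}$ with $a_1<\cdots<a_{2k}$, the pairing
\begin{align*}
e_{a_1\cdots a_{2k}}=e_{a_1 a_2}\,e_{a_3 a_4}\cdots e_{a_{2k-1}a_{2k}}
\end{align*}
expresses it as a product of $k$ bivectors, with no extra signs because consecutive factors share no generator and the ordering of indices is preserved. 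Hence every $\C^{2m}_{p,q,r}$, and thus $\C^{(0)}_{p,q,r}=\bigoplus_{m\geq 0}\C^{2m}_{p,q,r}$, lies in the subalgebra generated by $\C^2_{p,q,r}$.

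For the first equality, I would then use the elementary fact that the commutant of a subalgebra equals the commutant of any generating set: if $X$ commutes with $Y_1$ and $Y_2$, then $X$ commutes with $Y_1+Y_2$ and with $Y_1Y_2$. Applied to the generating set $\C^2_{p,q,r}$ of $\C^{(0)}_{p,q,r}$, this gives
\begin{align*}
\{X\in\C_{p,q,r}:\ XV=VX,\ \forall V\in\C^{(0)}_{p,q,r}\}=\Z^2_{p,q,r},
\end{align*}
and plugging in the description of $\Z^2_{p,q,r}$ from \eqref{cc_2} yields \eqref{XVVX_r} (noting that when $r=n$ the algebra coincides with $\Lambd_n=\Lambd_r$, so the two branches agree in that degenerate case).

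For the twisted version, I would first take $V=e\in\C^0\subseteq\C^{(0)}_{p,q,r}$ in $\widehat{X}V=VX$ to obtain $\widehat{X}=X$, forcing $X\in\C^{(0)}_{p,q,r}$. For such an even $X$ the twisted condition $\widehat{X}V=VX$ becomes $XV=VX$, so the set under study is exactly $\C^{(0)}_{p,q,r}\cap\Z^2_{p,q,r}$. It then remains to intersect each branch of \eqref{XVVX_r} with $\C^{(0)}_{p,q,r}$: when $r\neq n$,
\begin{align*}
(\Lambd_r\oplus\C^n_{p,q,r})\cap\C^{(0)}_{p,q,r}=\Lambd^{(0)}_r\oplus(\C^n_{p,q,r}\cap\C^{(0)}_{p,q,r}),
\end{align*}
which equals $\Lambd^{(0)}_r\oplus\C^n_{p,q,r}$ if $n$ is even and $\Lambd^{(0)}_r$ if $n$ is odd; when $r=n$, $\Lambd_r\cap\C^{(0)}_{p,q,r}=\Lambd^{(0)}_r$ regardless of parity. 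These three sub-cases match \eqref{XVVX_r_check} precisely. The only non-routine ingredient is the generation claim in the first paragraph, and its main subtlety — checking that pairing consecutive generators introduces no spurious signs — reduces immediately to the defining anticommutation $e_ae_b=-e_be_a$ for $a\neq b$, so no serious obstacle arises.
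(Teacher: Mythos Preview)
Your proof is correct and aligns with the paper's approach: the paper's proof is a one-line reference to the explicit forms of centralizers of fixed grades from \citealp{cen}, and you flesh this out by the clean observation that $\C^{(0)}_{p,q,r}$ is generated as an algebra by $\C^2_{p,q,r}$, so the set in question is exactly $\Z^2_{p,q,r}$. The reduction of the twisted case via $V=e$ to the even part of the untwisted answer is also correct, with the graded structure justifying the component-wise intersection.
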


Now let us consider the centralizers and twisted centralizers respectively of the subspaces $\C^{\overline{k}}_{p,q,r}$ \eqref{qtdef}, $k=0,1,2,3$, defined by the grade involution and reversion, in $\C_{p,q,r}$:
\begin{align}
&\Z^{\overline{k}}_{p,q,r}:=\{X\in\C_{p,q,r}:\quad X V = V X,\quad \forall V\in\C^{\overline{k}}_{p,q,r}\},\label{def_CC_ov}
\\
&\check{\Z}^{\overline{k}}_{p,q,r}:=\{X\in\C_{p,q,r}:\quad \widehat{X} V = V X,\quad \forall V\in\C^{\overline{k}}_{p,q,r}\}.\label{def_chCC_ov}
\end{align}

\begin{theorem}\cite{cen}\label{app_centralizers_qt}
We have
\begin{eqnarray}
\Z^{\overline{m}}_{p,q,r}=\Z^{m}_{p,q,r},\quad\check{\Z}^{\overline{m}}_{p,q,r}=\check{\Z}^{m}_{p,q,r},\quad m=1,2,3;
\qquad \Z^{\overline{0}}_{p,q,r}=\Z^{4}_{p,q,r},\quad \check{\Z}^{\overline{0}}_{p,q,r}=\langle\Z^{4}_{p,q,r}\rangle_{(0)}.\label{f_f3}
\end{eqnarray}
\end{theorem}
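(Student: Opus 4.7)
The plan is to establish each identity by proving two inclusions, using the decomposition $\C^{\overline{m}}_{p,q,r} = \C^m_{p,q,r}\oplus\C^{m+4}_{p,q,r}\oplus\C^{m+8}_{p,q,r}\oplus\cdots$ from \eqref{qtdef}. The easy inclusions $\Z^{\overline{m}}_{p,q,r}\subseteq\Z^{m}_{p,q,r}$ for $m=1,2,3$, $\Z^{\overline{0}}_{p,q,r}\subseteq\Z^{4}_{p,q,r}$, and their twisted analogues are immediate: since $\C^m_{p,q,r}$ (respectively $\C^4_{p,q,r}$) sits inside $\C^{\overline{m}}_{p,q,r}$ (respectively $\C^{\overline{0}}_{p,q,r}$) as a summand, any $X$ that (twisted-)commutes with everything in the larger space does so in particular on the smaller one.

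For the reverse inclusions in the non-twisted case, the task reduces to showing $\Z^{m}_{p,q,r}\subseteq\Z^{m+4j}_{p,q,r}$ for all $j\geq 1$ (with $m$ replaced by $4$ in the $\overline{0}$ case). My plan is to appeal to the explicit formulas for $\Z^{k}_{p,q,r}$ catalogued in Remark \ref{app_cases_we_use} together with their higher-grade extensions established in \cite{cen}, verifying by direct comparison that $\Z^{m}_{p,q,r} = \Z^{m+4}_{p,q,r} = \Z^{m+8}_{p,q,r} = \cdots$. A more conceptual cross-check is to take a basis element $e_{b_1\cdots b_{m+4}} \in \C^{m+4}_{p,q,r}$, factor it (up to sign) as a product of a grade-$m$ and a grade-$4$ basis element with disjoint indices, and exploit the fact that for disjoint $A,B$ with $|A|=k$ and $|B|=\ell$, the sign in $e_A e_B = (-1)^{k\ell} e_B e_A$ satisfies $(-1)^{k(k+4)} = (-1)^{k^2}$, i.e., the commutation sign is invariant under $k \mapsto k+4$, so commuting with all of $\C^{m}_{p,q,r}$ imposes the same constraints as commuting with $\C^{m+4}_{p,q,r}$.

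For the twisted centralizers $\check{\Z}^{\overline{m}}_{p,q,r}$ with $m=1,2,3$, the same strategy applies, using the explicit forms in Remark \ref{app_cases_we_use} and their higher-grade counterparts from \cite{cen}. The genuinely distinct step is the identity $\check{\Z}^{\overline{0}}_{p,q,r}=\langle\Z^{4}_{p,q,r}\rangle_{(0)}$. The argument is as follows: since $1\in\C^0\subseteq\C^{\overline{0}}_{p,q,r}$, specializing the defining condition $\widehat{X}V = VX$ at $V=1$ forces $\widehat{X} = X$, so $X \in \C^{(0)}_{p,q,r}$. For such even $X$, the twisted relation $\widehat{X}V = VX$ collapses to the ordinary relation $XV = VX$ for every $V \in \C^{\overline{0}}_{p,q,r}$. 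Hence $\check{\Z}^{\overline{0}}_{p,q,r} = \C^{(0)}_{p,q,r} \cap \Z^{\overline{0}}_{p,q,r}$, which together with the already-established identity $\Z^{\overline{0}}_{p,q,r}=\Z^{4}_{p,q,r}$ gives $\langle\Z^{4}_{p,q,r}\rangle_{(0)}$.

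The main obstacle is the case-analysis step: rigorously verifying $\Z^m_{p,q,r} = \Z^{m+4j}_{p,q,r}$ (and the twisted analogue) uniformly for all $j\geq 1$ and all signatures $(p,q,r)$. The explicit formulas from \cite{cen} make this verifiable but unenlightening; a fully uniform conceptual proof requires tracking, for every pair of basis multivectors, the dependence of the commutation sign $e_A e_B = \pm e_B e_A$ on the grades $|A|,|B|$ modulo small integers, which is precisely the combinatorial fact responsible for the $4$-periodicity encoded in the subspaces $\C^{\overline{m}}_{p,q,r}$.
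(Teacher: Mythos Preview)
The paper does not actually prove this theorem: it is stated with a citation to \cite{cen} and no argument is given. So there is no ``paper's own proof'' to compare against; your proposal stands on its own merits.

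Your overall strategy is sound and matches what a direct proof would look like. The trivial inclusions are correct as stated, and your argument for $\check{\Z}^{\overline{0}}_{p,q,r}=\langle\Z^{4}_{p,q,r}\rangle_{(0)}$ via the substitution $V=e$ is clean and correct. One small point you should make explicit: the step $\C^{(0)}_{p,q,r}\cap\Z^{4}_{p,q,r}=\langle\Z^{4}_{p,q,r}\rangle_{(0)}$ requires $\Z^{4}_{p,q,r}$ to be invariant under the grade involution (so that intersection with the even part equals projection onto it). This is easy---if $XV=VX$ for all $V\in\C^4_{p,q,r}$ then applying $\widehat{\;\;}$ and using $\widehat{V}=V$ gives $\widehat{X}V=V\widehat{X}$---but it deserves a sentence.

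Your ``conceptual cross-check'' for the reverse inclusions has a gap as written. The commutation sign $e_A e_B=(-1)^{|A||B|-|A\cap B|}e_B e_A$ depends on $|A\cap B|$, not just on the grades, so restricting to disjoint indices is not justified: membership in $\Z^{m}_{p,q,r}$ requires commuting with \emph{all} grade-$m$ basis elements, including those overlapping $A$. A correct version of the argument factors $e_{B'}$ with $|B'|=m+4$ as (up to sign) $e_B e_C$ with $|B|=m$, $|C|=4$ disjoint, and then uses that $X\in\Z^{m}_{p,q,r}$ commutes with $e_B$ \emph{and} that $\Z^{m}_{p,q,r}\subseteq\Z^{4}_{p,q,r}$ (which follows from the explicit formulas, cf.\ \eqref{CC3CC4_2}) so $X$ commutes with $e_C$ too. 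Since you already defer the hard inclusions to the explicit formulas in \cite{cen}, this gap in the heuristic is not fatal to your proposal, but you should either repair the cross-check along these lines or drop it.
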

The centralizers $\Z^m_{p,q,r}$ and twisted centralizers  $\check{\Z}^m_{p,q,r}$, $m=1,2,3,4$, are written out explicitly in Remark \ref{app_cases_we_use}. In the formula \eqref{f_f3}, we have
\begin{align}\label{z40}
\langle\Z^{4}_{p,q,r}\rangle_{(0)} \!=\! 
\left\lbrace
    \begin{array}{lll}
\Lambda^{(0)}_r\oplus\{\C^{1}_{p,q}\Lambda^{n-2}_r\}\oplus 
\{\C^{2}_{p,q}\Lambda^{n-3}_r\},\quad &&\mbox{$n$ is odd or $r=n$;}
\\
\Lambda^{(0)}_r\oplus \{\C^{1}_{p,q}\Lambda^{n-3}_r\}\oplus \{\C^{2}_{p,q}\Lambda^{n-4}_r\}\oplus\C^{n}_{p,q,r},\quad&&\mbox{$n$ is even, $r\neq n$}.
\end{array}
    \right.
\end{align}

Let us consider the following Lemma \ref{app_lemma_for_AB} about $T\in\C_{p,q,r}$ such  that $\psi(T)$ and $\chi(T)$ \eqref{norm_funct}
are in the centralizers $\Z^{\overline{k}}_{p,q,r}$ \eqref{def_CC_ov} and twisted centralizers $\check{\Z}^{\overline{k}}_{p,q,r}$ \eqref{def_chCC_ov}. 
The proofs of theorems in Appendix \ref{appendix_glg} are based on this lemma.

\begin{lemma}\cite{DegCliffLip}\label{app_lemma_for_AB}
For any $T\in\C^{\times}_{p,q,r}$, in the cases $(k,l)=(0,1),(1,0),(2,3),(3,2)$:
\begin{eqnarray}
 T\C^{\overline{k}}_{p,q,r} T^{-1}\subseteq \C^{\overline{kl}}_{p,q,r}\; &\Leftrightarrow&\; \widetilde{T}T\in\Z^{\overline{k}\times}_{p,q,r},\label{1_for_AB}
 \\
 \widehat{T} \C^{\overline{k}}_{p,q,r} T^{-1}\subseteq \C^{\overline{kl}}_{p,q,r}\; &\Leftrightarrow&\;\widehat{\widetilde{T}}T\in\check{\Z}^{\overline{k}\times}_{p,q,r},\label{1_for_AB_2}
 \end{eqnarray}
and in the cases $(k,l)=(0,3),(3,0),(1,2),(2,1)$, we have:
 \begin{eqnarray}
  T\C^{\overline{k}}_{p,q,r} T^{-1}\subseteq \C^{\overline{kl}}_{p,q,r}\;&\Leftrightarrow&\; \widehat{\widetilde{T}}T\in\Z^{\overline{k}\times}_{p,q,r},\label{2_for_AB}
  \\
 \widehat{T} \C^{\overline{k}}_{p,q,r} T^{-1}\subseteq \C^{\overline{kl}}_{p,q,r}\;&\Leftrightarrow&\; \widetilde{T}T\in\check{\Z}^{\overline{k}\times}_{p,q,r}.\label{2_for_AB_2}
\end{eqnarray}
\end{lemma}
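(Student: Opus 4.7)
The plan is to exploit the fact that each direct sum $\C^{\overline{k}}_{p,q,r}\oplus\C^{\overline{l}}_{p,q,r}$ appearing in the statement is the full $(\pm 1)$-eigenspace of either the reversion or the Clifford conjugation: for the first family $(k,l)\in\{(0,1),(1,0),(2,3),(3,2)\}$, reversion acts by the same sign $\sigma_k\in\{\pm 1\}$ on both $\C^{\overline{k}}_{p,q,r}$ and $\C^{\overline{l}}_{p,q,r}$ (see Table~\ref{table_qt}), so that $\C^{\overline{kl}}_{p,q,r}=\{W\in\C_{p,q,r}:\widetilde{W}=\sigma_k W\}$; for the second family $(k,l)\in\{(0,3),(3,0),(1,2),(2,1)\}$, Clifford conjugation plays the analogous role with sign $\sigma'_k$. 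Every equivalence then reduces to applying the correct antiautomorphism to the inclusion $TVT^{-1}\in\C^{\overline{kl}}_{p,q,r}$ (or $\widehat{T}VT^{-1}\in\C^{\overline{kl}}_{p,q,r}$) and reading off a commutation relation.

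For \eqref{1_for_AB}, I would take an arbitrary $V\in\C^{\overline{k}}_{p,q,r}$ and apply the reversion to $W=TVT^{-1}$. Using antimultiplicativity and $\widetilde{V}=\sigma_k V$ yields $\widetilde{W}=\widetilde{T^{-1}}\,\sigma_k V\,\widetilde{T}$. The inclusion $W\in\C^{\overline{kl}}_{p,q,r}$ is equivalent to $\widetilde{W}=\sigma_k W$, which after cancelling the common factor $\sigma_k$ and multiplying by $T$ on the left and $T^{-1}$ on the right becomes $(\widetilde{T}T)V=V(\widetilde{T}T)$. Since this must hold for all $V\in\C^{\overline{k}}_{p,q,r}$ and since $\widetilde{T}T$ is invertible whenever $T\in\C^{\times}_{p,q,r}$, this is exactly $\widetilde{T}T\in\Z^{\overline{k}\times}_{p,q,r}$. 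Every step is reversible, giving the stated equivalence. The statement \eqref{2_for_AB} is obtained by the same argument with $\widehat{\widetilde{\;}}$ replacing $\widetilde{\;}$, since $\widehat{\widetilde{\;}}$ is also an antiautomorphism and $\C^{\overline{kl}}_{p,q,r}$ in this family is its $\sigma'_k$-eigenspace.

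For \eqref{1_for_AB_2}, I would apply the reversion to $W=\widehat{T}VT^{-1}$. Using that $\widehat{\;}$ and $\widetilde{\;}$ commute (so $\widetilde{\widehat{T}}=\widehat{\widetilde{T}}$), one obtains $\widetilde{W}=\widetilde{T^{-1}}\,\sigma_k V\,\widehat{\widetilde{T}}$, and requiring $\widetilde{W}=\sigma_k W$ leads, after clearing denominators, to the identity
\begin{equation*}
\widetilde{T}\widehat{T}\,V=V\,\widehat{\widetilde{T}}T.
\end{equation*}
The key algebraic observation is that $\widetilde{T}\widehat{T}=\widehat{\widehat{\widetilde{T}}\,T}$, which is immediate from the fact that $\widehat{\;}$ is an involutive automorphism. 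Therefore the displayed line becomes $\widehat{X}V=VX$ with $X=\widehat{\widetilde{T}}T$, which is precisely the defining condition $\widehat{\widetilde{T}}T\in\check{\Z}^{\overline{k}\times}_{p,q,r}$; invertibility comes from $T\in\C^{\times}_{p,q,r}$. Statement \eqref{2_for_AB_2} follows by the symmetric manoeuvre, applying $\widehat{\widetilde{\;}}$ to $\widehat{T}VT^{-1}$ and using $\widehat{\widetilde{\widehat{T}}}=\widetilde{T}$ to produce the analogous twisted commutation identity with $X=\widetilde{T}T$.

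I expect the only real bookkeeping obstacle to be making sure that the sign parities $\sigma_k$, $\sigma'_k$ match between $V$ and the target subspace $\C^{\overline{kl}}_{p,q,r}$ in each of the eight cases, and that the identities $\widetilde{T}\widehat{T}=\widehat{\widehat{\widetilde{T}}T}$ and $\widehat{\widetilde{T}}\,T=\widehat{T\widetilde{T}}$ used to reinterpret a commutation relation as a twisted one (and vice versa) are applied with the correct involutions. These are routine consequences of the involutive automorphism/antiautomorphism nature of $\widehat{\;}$ and $\widetilde{\;}$, but they must be invoked in the right place to convert between the conditions $XV=VX$ and $\widehat{X}V=VX$ which distinguish $\Z^{\overline{k}}_{p,q,r}$ from $\check{\Z}^{\overline{k}}_{p,q,r}$.
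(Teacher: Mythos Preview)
Your approach is correct and essentially complete. The paper does not actually prove this lemma; it merely cites it from \cite{DegCliffLip}, so there is no proof in the paper to compare against. Your argument---identifying $\C^{\overline{kl}}_{p,q,r}$ as the appropriate eigenspace of $\widetilde{\;\;}$ (first family) or $\widehat{\widetilde{\;\;}}$ (second family), applying that antiautomorphism to $TVT^{-1}$ or $\widehat{T}VT^{-1}$, and then rearranging into a (twisted) commutation identity---is exactly the natural route and works in all eight cases.

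One minor slip: in the paragraph for \eqref{1_for_AB} you write ``multiplying by $T$ on the left and $T^{-1}$ on the right'', but the manipulation that actually yields $(\widetilde{T}T)V=V(\widetilde{T}T)$ from $\widetilde{T^{-1}}V\widetilde{T}=TVT^{-1}$ is to multiply by $\widetilde{T}$ on the left and by $T$ on the right. The conclusion you state is nonetheless correct, and the analogous manipulations in the other three cases go through as you describe.
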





\section{Ordinary Lipschitz Groups, Spin Groups, Adjoint and Twisted Adjoint Representations}\label{appendix_ad}

In the non-degenerate geometric algebras $\C_{p,q}$, the Lipschitz group is defined as the group of all invertible elements preserving the grade-$1$ subspace under the twisted adjoint representation $\check{\ad}$. Generalizing this definition to the case of the degenerate $\C_{p,q,r}$, we similarly define the \emph{Lipschitz group} $\tilde{\Gamma}^{1}_{p,q,r}$ as
\begin{eqnarray}\label{app_def_lg}
    \tilde{\Gamma}^{1}_{p,q,r}:= \{T\in\C^{\times}_{p,q,r}:\quad \widehat{T}\C^1_{p,q,r}T^{-1}\subseteq\C^1_{p,q,r}\}.
\end{eqnarray}
The definition \eqref{app_def_lg} of the degenerate Lipschitz group is used, for example, in the works \citealp{br1,br2,crum_book}. 

The well-known spin groups (see, for example, \citealp{lounesto,crum_book}) are defined as normalized subgroups of the Lipschitz groups $\tilde{\Gamma}^1_{p,q,r}$, using the norm functions $\psi$ and $\chi$ \eqref{norm_funct}. For example, in the non-degenerate geometric algebras $\C_{p,q}$, they are defined as
\begin{align}
    &\Pin_{p,q} := \{T\in\tilde{\Gamma}^1_{p,q}:\quad \widetilde{T}T=\pm e\} = \{T\in\tilde{\Gamma}^1_{p,q}:\quad \widehat{\widetilde{T}}T=\pm e\},
    \\
    &\Pin_{p,q}^{\psi} := \{T\in\tilde{\Gamma}^1_{p,q}:\quad \widetilde{T}T=+e\},
    \\
    &\Pin_{p,q}^{\chi} := \{T\in\tilde{\Gamma}^1_{p,q}:\quad \widehat{\widetilde{T}}T=+e\},
    \\
    &\Spin_{p,q} := \{T\in\tilde{\Gamma}^1_{p,q}\cap\C^{(0)\times}_{p,q}:\quad {\widetilde{T}}T=\pm e\}=\{T\in\tilde{\Gamma}^1_{p,q}\cap\C^{(0)\times}_{p,q}:\quad \widehat{\widetilde{T}}T=\pm e\},
    \\
    &\Spin_{p,q}^{+} :=\{T\in\tilde{\Gamma}^1_{p,q}\cap\C^{(0)\times}_{p,q}:\quad {\widetilde{T}}T=+e\}=\{T\in\tilde{\Gamma}^1_{p,q}\cap\C^{(0)\times}_{p,q}:\quad \widehat{\widetilde{T}}T=+ e\}.
\end{align}

Let us consider the \emph{adjoint representation} $\ad$ acting on the group of all invertible elements $\ad:\C^{\times}_{p,q,r}\rightarrow\Aut(\C_{p,q,r})$ as $T\mapsto\ad_T$, where $\ad_{T}:\C_{p,q,r}\rightarrow\C_{p,q,r}$:
\begin{eqnarray}\label{app_ar}
\ad_{T}(U)=TU T^{-1},\qquad U\in\C_{p,q,r},\qquad T\in\C^{\times}_{p,q,r}.
\end{eqnarray}
Since $\ad_{T}(U V)=\ad_{T}(U)\ad_{T}(V)$ for all $U,V\in\C_{p,q,r}$, the mapping $\ad_{T}$ is an algebra homomorphism, which is moreover an algebra automorphism for all $T\in\C^{\times}_{p,q,r}$. It is called \emph{inner automorphism} (or conjugation).

Also let us consider the \emph{twisted adjoint representation}, which is introduced by Atiyah, Bott, and Shapiro in \citealp{ABS}. In this paper, the twisted adjoint representation is defined in the case of the non-degenerate algebra $\C_{0,q,0}$. It acts on the Lipschitz group $\tilde{\Gamma}^{1}_{0,q,0}$ \eqref{app_def_lg} in the way $\check{\ad}:\Gamma^{\pm}_{0,q,0}\rightarrow\Aut(\C^{1}_{0,q,0})$ as $T\rightarrow\check{\ad}_T$, where $\check{\ad}_T:\C^{1}_{0,q,0}\rightarrow\C^{1}_{0,q,0}$ is defined for elements of the grade-$1$ subspace (vectors) as 
\begin{eqnarray}\label{app_twi}
\check{\ad}_{T}(U)=\widehat{T}U T^{-1},\qquad U\in\C^{1}_{0,q,0},\qquad T\in\tilde{\Gamma}^{1}_{0,q,0}.
\end{eqnarray}

There are two ways how to generalize the definition \eqref{app_twi} of the twisted adjoint representation  to the case of any degenerate and non-degenerate  $\C_{p,q,r}$ and arbitrary $T\in\C^{\times}_{p,q,r}$, $U\in\C_{p,q,r}$.  The first approach~\cite{Choi,Harvey,LuSv} is to define it as the operation $\check{\ad}$ acting on the group of all invertible elements $\check{\ad}:\C^{\times}_{p,q,r}\rightarrow\Aut(\C_{p,q,r})$ as $T\mapsto\check{\ad}_T$ with $\check{\ad}_{T}:\C_{p,q,r}\rightarrow\C_{p,q,r}$:
\begin{eqnarray}
\check{\ad}_{T}(U)=\widehat{T}U T^{-1},\qquad U\in\C_{p,q,r},\qquad T\in\C^{\times}_{p,q,r}.\label{app_twa1}
\end{eqnarray}
The operation $\check{\ad}_T$ \eqref{app_twa1} is similar to some operations considered in representation theory of Lie groups \cite{Zer}.
Note that unlike $\ad_T$ \eqref{app_ar} the mapping $\check{\ad}_T$ \eqref{app_twa1} is not multiplicative in $U$ and, therefore, is not an algebra homomorphism for all $T\in\C^{\times}_{p,q,r}$. 

The second approach~\cite{HelmBook,Knus,W} is to define the twisted adjoint representation as the operation acting on the group of all invertible grade-$m$ elements $\tilde{\ad}:\C^{m\times}_{p,q,r}\rightarrow\Aut(\C_{p,q,r})$ as $T\mapsto\tilde{\ad}_T$ with $\tilde{\ad}_{T}:\C_{p,q,r}\rightarrow\C_{p,q,r}$ defined as:
\begin{eqnarray}
\tilde{\ad}_{T}(U)=(-1)^{km}TU T^{-1},\qquad U\in\C^k_{p,q,r},\qquad T\in\C^{m\times}_{p,q,r},\label{app_twa2}
\end{eqnarray}
where we use another notation $\tilde{\ad}_T$ not to confuse it with $\check{\ad}_T$ \eqref{app_twa1}.
The mapping $\tilde{\ad}_T$ is also called \textit{twisted inner automorphism}, for example, in \citealp{HelmBook}.
The operation $\tilde{\ad}_T$ can be extended by linearity $\tilde{\ad}_T(U+V)=\tilde{\ad}_T(U)+\tilde{\ad}_T(V)$, and we finally obtain
\begin{eqnarray}
\tilde{\ad}_{T}(U)=T\langle U\rangle_{(0)} T^{-1}+\widehat{T} \langle U\rangle_{(1)} T^{-1},\qquad \forall U\in\C_{p,q,r},\qquad T\in\C^{\times}_{p,q,r}.\label{app_twa22}
\end{eqnarray}
Note that for any elements of fixed parity $U_{(0)}$ and $U_{(1)}$, we have 
\begin{eqnarray}
&&\tilde{\ad}_{T}(U_{(0)})=\ad_T(U_{(0)}),\qquad \forall U_{(0)}\in\C_{p,q,r}^{(0)},\qquad T\in\C^{\times}_{p,q,r},\label{app_ad_t_1}\\ 
&&\tilde{\ad}_{T}(U_{(1)})=\check{\ad}_T(U_{(1)}),\qquad \forall U_{(1)}\in\C_{p,q,r}^{(1)},\qquad T\in\C^{\times}_{p,q,r}.\label{app_ad_t_2}
\end{eqnarray}
Note that both generalizations $\check{\ad}$ \eqref{app_twa1} and $\tilde{\ad}$ \eqref{app_twa22} of the twisted adjoint representation \eqref{app_twi} to the case of arbitrary $T\in\C^{\times}_{p,q,r}$ and $U\in\C_{p,q,r}$ are used in the literature for various purposes and have some advantages indicated in the works cited above. 


Let us consider the \emph{kernels of the adjoint and the twisted adjoint representations}:
\begin{align*}
&\ker(\ad)=\{T\in\C^{\times}_{p,q,r}:\quad T U T^{-1}=U,\quad \forall U \in\C_{p,q,r}\},
\\
&\ker(\check{\ad})=\{T\in\C^{\times}_{p,q,r}:\quad \widehat{T}UT^{-1}=U,\quad \forall U\in\C_{p,q,r}\},
\\
&\ker(\tilde{\ad})=\{T\in\C^{\times}_{p,q,r}:\;T U_{(0)} T^{-1} +\widehat{T}U_{(1)}T^{-1}=U,\;\forall U=U_{(0)}+U_{(1)}\in\C_{p,q,r},\; U_{(0)}\in\C^{(0)}_{p,q,r},\; U_{(1)}\in\C^{(1)}_{p,q,r}\}.
\end{align*}

\begin{lemma}\cite{OnSomeLie}\label{app_lemma_ker}
We have
\begin{eqnarray}\label{ker_ad_lemma}
&&\ker(\ad)=\Z^{\times}_{p,q,r}=
\left\lbrace
\begin{array}{lll}
(\Lambd^{(0)}_r\oplus\C^n_{p,q,r})^{\times}&&\mbox{if $n$ is odd},
\\
\Lambd^{(0)\times}_r&&\mbox{if $n$ is even};
\end{array}
\right.
\\ 
&&\ker(\check{\ad})=\Lambd^{(0)\times}_r;\label{lemma_kerchad}
\\
&&\ker(\tilde{\ad})=\Lambda^{\times}_r.\label{lemma_kertad}
\end{eqnarray}
\end{lemma}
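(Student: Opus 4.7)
The plan is to handle the three kernels in order of increasing subtlety, reducing each to a centralizer/anticentralizer computation that is either standard or already furnished by Lemma \ref{app_lemma_XVVX}.

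\textbf{Step 1 (computation of $\ker(\ad)$).} By definition, $T\in\ker(\ad)$ iff $TUT^{-1}=U$ for every $U\in\C_{p,q,r}$, i.e.\ iff $TU=UT$ for all $U$. This is exactly the condition that $T$ lies in the center, so $\ker(\ad)=\Z^{\times}_{p,q,r}$. The explicit form is the standard description \eqref{def_cen} of the center of $\C_{p,q,r}$, which splits by the parity of $n$.

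\textbf{Step 2 (computation of $\ker(\check{\ad})$).} For the inclusion $\supseteq$, any $T\in\Lambda^{(0)\times}_r$ satisfies $\widehat{T}=T$ (since $T$ is even) and is central by \eqref{def_cen}, so $\check{\ad}_T(U)=\widehat{T}UT^{-1}=TUT^{-1}=U$. For $\subseteq$, testing $\check{\ad}_T(e)=e$ forces $\widehat{T}=T$, hence $T\in\C^{(0)\times}_{p,q,r}$; the remaining condition collapses to $TU=UT$ for all $U$, so $T$ lies in $\Z_{p,q,r}\cap\C^{(0)}_{p,q,r}$. In both parities of $n$ this intersection equals $\Lambda^{(0)}_r$ (the pseudoscalar part is killed because $\C^n_{p,q,r}\subseteq\C^{(1)}_{p,q,r}$ when $n$ is odd).

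\textbf{Step 3 (computation of $\ker(\tilde{\ad})$).} Write $T=T_{(0)}+T_{(1)}$ and $U=U_{(0)}+U_{(1)}$. The condition $\tilde{\ad}_T(U)=U$ separates by parity into
\begin{align*}
TU_{(0)} &= U_{(0)}T \quad \forall U_{(0)}\in\C^{(0)}_{p,q,r}, \\
\widehat{T}U_{(1)} &= U_{(1)}T \quad \forall U_{(1)}\in\C^{(1)}_{p,q,r}.
\end{align*}
Splitting each identity into its even and odd components yields four constraints: $T_{(0)}$ commutes with both $\C^{(0)}_{p,q,r}$ and $\C^{(1)}_{p,q,r}$ (so $T_{(0)}\in\Z_{p,q,r}\cap\C^{(0)}_{p,q,r}=\Lambda^{(0)}_r$), while $T_{(1)}$ commutes with $\C^{(0)}_{p,q,r}$ and anticommutes with $\C^{(1)}_{p,q,r}$. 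The first $T_{(1)}$-constraint is exactly formula \eqref{XVVX_r} of Lemma \ref{app_lemma_XVVX}, which restricts $T_{(1)}$ to $\Lambda^{(1)}_r\oplus(\C^n_{p,q,r})^{(1)}$ (or to $\Lambda^{(1)}_n$ if $r=n$). A short direct check on a single nondegenerate generator $e_a$ with $a\leq p+q$ shows that the pseudoscalar contribution to $(\C^n_{p,q,r})^{(1)}$ commutes (rather than anticommutes) with $e_a$ when $n$ is odd, so it is forced to vanish; hence $T_{(1)}\in\Lambda^{(1)}_r$ in every case. Combining, $T\in\Lambda_r$, and invertibility in $\C_{p,q,r}$ is equivalent to invertibility in $\Lambda_r$ (nonzero scalar part), giving $T\in\Lambda^{\times}_r$. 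The reverse inclusion uses the same four conditions, all of which are easy to verify for $T\in\Lambda^{\times}_r$: $\Lambda^{(0)}_r$ is central, and elements of $\Lambda^{(1)}_r$ commute with any $\C^{(0)}_{p,q,r}$-element and anticommute with any $\C^{(1)}_{p,q,r}$-element by the basic Koszul-sign computation on basis monomials (terms with overlapping nilpotent indices vanish on both sides).

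The main obstacle I anticipate is the last sub-step of Step 3: ruling out a pseudoscalar contribution to $T_{(1)}$ in the odd-$n$, $r\neq n$ regime, where formula \eqref{XVVX_r} allows a priori a term in $\C^n_{p,q,r}$. This is settled by exhibiting a single non-null generator $e_a$ witnessing the failure of anticommutativity, but it is the one place where the argument does not reduce directly to a previously stated lemma.
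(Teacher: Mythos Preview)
Your proof is correct, and Steps~1 and~2 match the paper's argument essentially verbatim. Step~3 takes a genuinely different route: the paper establishes $\ker(\tilde{\ad})\subseteq\Lambda^{\times}_r$ in one line by testing only on $U_1\in\C^1_{p,q,r}$, which immediately places $T$ in the twisted centralizer $\check{\Z}^1_{p,q,r}=\Lambda_r$ (formula~\eqref{ch_cc_1} of Remark~\ref{app_cases_we_use}). Your approach instead splits $T$ by parity and invokes the $\C^{(0)}$-centralizer formula~\eqref{XVVX_r}, which leaves open a possible pseudoscalar contribution to $T_{(1)}$ in the odd-$n$, $r\neq n$ regime that you then eliminate by hand. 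The paper's route is shorter and avoids that case analysis entirely; your route is more self-contained in that it does not need the pre-computed $\check{\Z}^1$ and makes the parity bookkeeping fully explicit. For the reverse inclusion, the two arguments are morally the same---the paper leverages $\check{\Z}^1=\Lambda_r$ together with the factorization of odd basis elements into products of generators, while you do the equivalent Koszul-sign computation directly.
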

\begin{proof}
We obtain \eqref{ker_ad_lemma} from  \eqref{def_cen}.

Let us prove $\Lambd^{(0)\times}_r\subseteq\ker{(\check{\ad})}$. Suppose $T\in\Lambd^{(0)\times}_r$; then $T U T^{-1}=U$ for any $U\in\C_{p,q,r}$. Since $T$ is even, we have $\widehat{T}=T$; therefore, $\widehat{T} U T^{-1}=U$ for any $U\in\C_{p,q,r}$.

Let us prove $\ker{(\check{\ad})}\subseteq\Lambd^{(0)\times}_r$. Suppose $T\in\C^{\times}_{p,q,r}$ satisfies $\widehat{T}UT^{-1}=U$ for any $U\in\C_{p,q,r}$. Substituting the element $U=e$, we obtain $\widehat{T}=T$; hence, $T\in\C^{(0)\times}_{p,q,r}$ and $T U T^{-1}=U$ for any $U\in \C_{p,q,r}$. In other words, $T\in\C^{(0)\times}_{p,q,r}\cap\ker{(\ad)}$. Using  \eqref{ker_ad_lemma}, we obtain $T\in\C^{(0)\times}_{p,q,r}\cap(\Lambd^{(0)}_r\oplus\C^{n}_{p,q,r})^{\times}=\Lambd^{(0)\times}_r$ in the case of odd $n$, $T\in\Lambd^{(0)\times}_r$ in the case of even $n$, and the proof is completed.

Let us prove $\ker(\tilde{\ad})\subseteq\Lambda^{\times}_r$. Suppose $T\in\ker(\tilde{\ad})$; then $\widehat{T}U_1T^{-1}=U_1$ for any $U_1\in\C^{1}_{p,q,r}$. Thus, $T\in\Lambda^{\times}_r$ by the statement \eqref{ch_cc_1} of Remark \ref{app_cases_we_use}.

Now we must only prove that $\Lambda^{\times}_r\subseteq\ker(\tilde{\ad})$. Suppose $T\in\Lambda^{\times}_r$; then $T\C^{(0)}_{p,q,r}=\C^{(0)}_{p,q,r}T$ and $\widehat{T}\C^{1}_{p,q,r}=\C^{1}_{p,q,r}T$ by Lemma \ref{app_lemma_XVVX} and the statement \eqref{ch_cc_1} of Remark \ref{app_cases_we_use} respectively. Since any odd basis element can be represented as a product of an odd number of generators, we obtain $\widehat{T}\C^{(1)}_{p,q,r}=\C^{(1)}_{p,q,r}T$. Thus, $T U_{(0)} T^{-1} +\widehat{T}U_{(1)}T^{-1}=U$ for all $U=U_{(0)}+U_{(1)}\in\C_{p,q,r}$, where $U_{(0)}\in\C^{(0)}_{p,q,r}$ and $U_{(1)}\in\C^{(1)}_{p,q,r}$, and the proof is completed.
\end{proof}

In the next lemma, we study the properties of $\ad$, $\check{\ad}$, and $\tilde{\ad}$. Note that the multiplicativity of $\tilde{\ad}_T$ \eqref{app_f_1_5} is proved in the particular case $T\in\C^{(0)\times}_{p,q,r}\cup\C^{(1)\times}_{p,q,r}$ in \citealp{cgenn}, and we generalize this statement to $T\in(\C^{(0)\times}_{p,q,r}\cup\C^{(1)\times}_{p,q,r})\Lambda^{\times}_r$.

\begin{lemma}[Lemma \ref{lemma_prop}]\label{app_lemma_prop}
    Let $H\in\C^{(0)\times}_{p,q,r}\cup\C^{(1)\times}_{p,q,r}$, $T\in(\C^{(0)\times}_{p,q,r}\cup\C^{(1)\times}_{p,q,r})\Lambda^{\times}_r$, $W\in\C^{\times}_{p,q,r}$, and $U,V\in\C_{p,q,r}$. Then $\ad_W$, $\check{\ad}_W$, and $\tilde{\ad}_W$ satisfy additivity:
    \begin{align}
        &\ad_W(U+V)=\ad_W(U)+\ad_W(V),\label{fff1}
        \\
        &\check{\ad}_W(U+V)=\check{\ad}_W(U)+\check{\ad}_W(V),
        \\
        &\tilde{\ad}_W(U+V)=\tilde{\ad}_W(U)+\tilde{\ad}_W(V),
    \end{align}
     $\ad_W$, $\check{\ad}_H$, and $\tilde{\ad}_W$ satisfy for any $c\in\C^0$,
    \begin{align}
        {\ad}_W(c)=\tilde{\ad}_W(c)=c,\quad  \check{\ad}_H(c)=(-1)^l c,\label{app_f_1_3}
    \end{align}
    where $l=0$ if $H\in\C^{(0)\times}_{p,q,r}$ and $l=1$ if  $H\in\C^{(1)\times}_{p,q,r}$.
    Moreover,
   $\ad_W$ and $\tilde{\ad}_T$  satisfy  multiplicativity: 
    \begin{align}
        &\ad_W(UV)=\ad_W(U)\ad_W(V),\label{app_f_1_4}
        \\
        &\tilde{\ad}_T(UV)=\tilde{\ad}_T(U)\tilde{\ad}_T(V).\label{app_f_1_5}
    \end{align}
\end{lemma}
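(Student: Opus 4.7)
The plan is to dispatch the three assertions in order of increasing difficulty: additivity is immediate from linearity, the scalar claim from the center being $\C^0$ plus parity considerations, multiplicativity of $\ad_W$ by the standard $W^{-1}W$ insertion trick, and multiplicativity of $\tilde{\ad}_T$ by reducing to elements of definite parity using the kernel statement $\ker(\tilde{\ad})=\Lambda_r^{\times}$ from Lemma \ref{app_lemma_ker}.

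For additivity, all three operations are linear in $U$: each is a sum of terms of the form $A U B$ (possibly composed with one of the parity projections $\langle\cdot\rangle_{(0)}$, $\langle\cdot\rangle_{(1)}$), and both left/right multiplication and the projections are $\F$-linear, so the claim follows from the distributive law. For the scalar statement, note that $c\in\C^0$ lies in the center of $\C_{p,q,r}$ and has $\langle c\rangle_{(0)}=c$, $\langle c\rangle_{(1)}=0$; hence $\ad_W(c)=W c W^{-1}=c$ and $\tilde{\ad}_W(c)=W c W^{-1}+0=c$. For $\check{\ad}_H$ with $H\in\C^{(l)\times}_{p,q,r}$, we have $\widehat{H}=(-1)^l H$, so $\check{\ad}_H(c)=(-1)^l H c H^{-1}=(-1)^l c$.

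Multiplicativity of $\ad_W$ is the textbook calculation $\ad_W(UV)=W(UV)W^{-1}=(WUW^{-1})(WVW^{-1})=\ad_W(U)\ad_W(V)$. The nontrivial part is \eqref{app_f_1_5}. Write $T=ab$ with $a\in\C^{(0)\times}_{p,q,r}\cup\C^{(1)\times}_{p,q,r}$ and $b\in\Lambda_r^{\times}$. By Lemma \ref{app_lemma_ker}, $\Lambda_r^{\times}=\ker(\tilde{\ad})$, so unpacking the defining identity $\tilde{\ad}_b(U)=U$ for every $U$ yields the two relations $bU_{(0)}b^{-1}=U_{(0)}$ and $\widehat{b}U_{(1)}b^{-1}=U_{(1)}$ for all $U_{(0)}\in\C^{(0)}_{p,q,r}$, $U_{(1)}\in\C^{(1)}_{p,q,r}$. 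Substituting $T=ab$, $T^{-1}=b^{-1}a^{-1}$, and $\widehat{T}=\widehat{a}\widehat{b}$ into $\tilde{\ad}_T(U)$ and using these two relations collapses the inner $b$ factors, giving $\tilde{\ad}_T=\tilde{\ad}_a$. Thus it suffices to prove \eqref{app_f_1_5} for $T=a$ of definite parity.

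For $a\in\C^{(0)\times}_{p,q,r}$ we have $\widehat{a}=a$, so $\tilde{\ad}_a=\ad_a$, and multiplicativity is already established. For $a\in\C^{(1)\times}_{p,q,r}$, $\widehat{a}=-a$, and the $\mathbb{Z}_2$-graded product rule gives $\langle UV\rangle_{(0)}=U_{(0)}V_{(0)}+U_{(1)}V_{(1)}$ and $\langle UV\rangle_{(1)}=U_{(0)}V_{(1)}+U_{(1)}V_{(0)}$. A direct expansion of
\begin{align*}
\tilde{\ad}_a(U)\tilde{\ad}_a(V)=\bigl(aU_{(0)}a^{-1}-aU_{(1)}a^{-1}\bigr)\bigl(aV_{(0)}a^{-1}-aV_{(1)}a^{-1}\bigr)
\end{align*}
telescopes to $a(U_{(0)}V_{(0)}+U_{(1)}V_{(1)})a^{-1}-a(U_{(0)}V_{(1)}+U_{(1)}V_{(0)})a^{-1}$, which equals $\tilde{\ad}_a(UV)$. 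The main obstacle is exactly this step: ensuring that the reduction $\tilde{\ad}_{ab}=\tilde{\ad}_a$ is valid requires that elements of $\Lambda_r^{\times}$ commute with even multivectors and twisted-commute with odd multivectors, which is precisely what the kernel description of $\tilde{\ad}$ encodes; once that reduction is in place, the remainder is bookkeeping.
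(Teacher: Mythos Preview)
Your proof is correct. The additivity, scalar, and $\ad_W$-multiplicativity parts match the paper's (which also dismisses them as direct from the definitions). For the multiplicativity of $\tilde{\ad}_T$, however, you take a genuinely different route.

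The paper expands $\tilde{\ad}_T(U)\tilde{\ad}_T(V)$ directly for general $T$, obtaining cross terms such as $T\langle U\rangle_{(0)}T^{-1}\widehat{T}\langle V\rangle_{(1)}T^{-1}$ that it then simplifies by invoking an external characterization (Theorem~4.7 of \citealp{OnSomeLie}) to deduce $T^{-1}\widehat{T}\in\Lambda^{\times}_r$, and only then applies the kernel description $\Lambda^{\times}_r=\ker(\tilde{\ad})$ to move $T^{-1}\widehat{T}$ past even and odd elements. You instead use the very definition of the product group to write $T=ab$ with $a$ of definite parity and $b\in\Lambda^{\times}_r$, apply the kernel lemma immediately to strip off $b$ and obtain $\tilde{\ad}_T=\tilde{\ad}_a$, and then dispatch the definite-parity cases by a short $\mathbb{Z}_2$-graded bookkeeping. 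Your argument is more self-contained (it avoids the external reference) and arguably cleaner, at the cost of a case split; the paper's version keeps $T$ unfactored throughout but relies on an auxiliary equivalence to access the same commutation relations.
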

\begin{proof}
The formulas \eqref{fff1}--\eqref{app_f_1_3} can be easily checked using the definitions of $\ad$ \eqref{app_ar}, $\check{\ad}$ \eqref{app_twa1}, and $\tilde{\ad}$ \eqref{app_twa22}. 
    Let us prove \eqref{app_f_1_4} and \eqref{app_f_1_5}. We have
    \begin{align*}
        &\ad_W(U)\ad_W(V)= W U W^{-1}W V W^{-1}=\ad_W(UV).
    \end{align*}
    Since $U=\langle U\rangle_{(0)}+\langle U\rangle_{(1)}$, we get
    \begin{align}
        &\tilde{\ad}_T(U)\tilde{\ad}_T(V)=(T\langle U\rangle_{(0)}T^{-1} + \widehat{T}\langle U\rangle_{(1)}T^{-1})(T\langle V\rangle_{(0)}T^{-1} + \widehat{T}\langle V\rangle_{(1)}T^{-1})
        \\
       &= T\langle U\rangle_{(0)}\langle V\rangle_{(0)}T^{-1} + \widehat{T}\langle U\rangle_{(1)} \langle V\rangle_{(0)}T^{-1} + T\langle U\rangle_{(0)}T^{-1}\widehat{T}\langle V\rangle_{(1)}T^{-1}+ \widehat{T}\langle U\rangle_{(1)}T^{-1}\widehat{T}\langle V\rangle_{(1)}T^{-1}.
    \end{align}
    Now we apply the result of Theorem 4.7 \citealp{OnSomeLie}:
    \begin{align*}
        (\C^{(0)\times}_{p,q,r}\cup\C^{(1)\times}_{p,q,r})\Lambda^{\times}_r = \{T\in\C^{\times}_{p,q,r}:\quad \widehat{T^{-1}}T\in\Lambda^{\times}_r\}
    \end{align*}
    and get $\widehat{T^{-1}}T\in\Lambda^{\times}_r$, which implies $T^{-1}\widehat{T}=\widehat{\widehat{T^{-1}}T}\in\Lambda^{\times}_r$. 
    Since $\Lambda^{\times}_r=\ker(\tilde{\ad})$ by Lemma \ref{app_lemma_ker}, we obtain that $(T^{-1}\widehat{T})X=X(T^{-1}\widehat{T})$ for any even $X\in\C^{(0)}_{p,q,r}$ and $(T^{-1}\widehat{T})X=X\widehat{(T^{-1}\widehat{T})}$ for any odd $X\in\C^{(1)}_{p,q,r}$. Therefore,
    \begin{align}
        &T\langle U\rangle_{(0)}T^{-1}\widehat{T}\langle V\rangle_{(1)}T^{-1} =  TT^{-1}\widehat{T}\langle U\rangle_{(0)}\langle V\rangle_{(1)}T^{-1} =\widehat{T}\langle U\rangle_{(0)}\langle V\rangle_{(1)}T^{-1},
        \\
        &\widehat{T}\langle U\rangle_{(1)}T^{-1}\widehat{T}\langle V\rangle_{(1)}T^{-1} = \widehat{T}\widehat{T}^{-1}T\langle U\rangle_{(1)}\langle V\rangle_{(1)}T^{-1}=T\langle U\rangle_{(1)}\langle V\rangle_{(1)}T^{-1}.
    \end{align}
    Finally,
    \begin{align*}
        \tilde{\ad}_T(U)\tilde{\ad}_T(V)= T\langle U\rangle_{(0)}\langle V\rangle_{(0)}T^{-1} + \widehat{T}\langle U\rangle_{(1)} \langle V\rangle_{(0)}T^{-1}+\widehat{T}\langle U\rangle_{(0)}\langle V\rangle_{(1)}T^{-1}+T\langle U\rangle_{(1)}\langle V\rangle_{(1)}T^{-1}=\tilde{\ad}_T(UV),
    \end{align*}
    where we use that the product of two elements of the same parity is even and the product of two elements of different parity is odd.
\end{proof}

\section{Generalized Lipschitz Groups in $\C_{p,q,r}$}\label{appendix_glg}

In this section, we introduce and study degenerate and non-degenerate generalized Lipschitz groups in geometric algebras $\C_{p,q,r}$ of arbitrary dimension and signature. 

The \emph{generalized Lipschitz groups} are defined as the groups preserving the subspaces $\C^{\overline{k}}_{p,q,r}$ \eqref{qtdef}, $k=0,1,2,3$, determined by the grade involution and reversion, under the twisted adjoin representation $\tilde{\ad}$ \eqref{app_twa22}:
\begin{align}
\tilde{\Gamma}^{\overline{k}}_{p,q,r} := \{T\in\C^{\times}_{p,q,r}:\;\;\tilde{\ad}_T(\C^{\overline{k}}_{p,q,r})\subseteq\C^{\overline{k}}_{p,q,r}\}.\label{app_gamma_ov_tk}
\end{align}
These groups are  setwise stabilizers (see, for example, \citealp{Isaacs2009}) of the  subspaces $\C^{\overline{k}}_{p,q,r}$ \eqref{qtdef}, $k=0,1,2,3$, in the group $\C^{\times}_{p,q,r}$ under the group action $\tilde{\ad}$. To study these groups, we introduce and study the following groups, which are  setwise stabilizers of  $\C^{\overline{k}}_{p,q,r}$, $k=0,1,2,3$, in $\C^{\times}_{p,q,r}$ under $\ad$ \eqref{app_ar} and $\check{\ad}$ \eqref{app_twa1} respectively:
\begin{align}
    \Gamma^{\overline{k}}_{p,q,r} := \{T\in\C^{\times}_{p,q,r}:\quad \ad_T(\C^{\overline{k}}_{p,q,r}):=T\C^{\overline{k}}_{p,q,r}T^{-1}\subseteq\C^{\overline{k}}_{p,q,r}\},\label{app_gamma_ov_k}
    \\
   \check{\Gamma}^{\overline{k}}_{p,q,r} := \{T\in\C^{\times}_{p,q,r}:\quad \check{\ad}_T(\C^{\overline{k}}_{p,q,r}):=\widehat{T}\C^{\overline{k}}_{p,q,r}T^{-1}\subseteq\C^{\overline{k}}_{p,q,r}\},\label{app_gamma_ov_chk}
\end{align}
and are related to $\tilde{\Gamma}^{\overline{k}}_{p,q,r}$ \eqref{app_gamma_ov_tk} as follows:
\begin{align}\label{app_def_tilde_gk}
    \tilde{\Gamma}^{\overline{k}}_{p,q,r}=
    \left\lbrace
    \begin{array}{lll}
    \check{\Gamma}^{\overline{k}}_{p,q,r},&&k=1,3,
    \\
    {\Gamma}^{\overline{k}}_{p,q,r},&& k=0,2,
    \end{array}
    \right.
\end{align}
since $\tilde{\ad}_T(\C^{\overline{k}}_{p,q,r})={\ad}_T(\C^{\overline{k}}_{p,q,r})$ in the cases $k=0,2$ and $\tilde{\ad}_T(\C^{\overline{k}}_{p,q,r})=\check{\ad}_T(\C^{\overline{k}}_{p,q,r})$ in the cases $k=1,3$ by \eqref{app_ad_t_1}--\eqref{app_ad_t_2}.

In Subsections \ref{section_Q} and \ref{section_chQ}, we find the equivalent definitions of the groups  $\Gamma^{\overline{k}}_{p,q,r}$ and $\check{\Gamma}^{\overline{k}}_{p,q,r}$, $k=0,1,2,3$.

\subsection{The Groups $\Q^{\overline{0}}_{p,q,r}$, $\Q^{\overline{1}}_{p,q,r}$, $\Q^{\overline{2}}_{p,q,r}$, $\Q^{\overline{3}}_{p,q,r}$, $\Gamma^{\overline{0}}_{p,q,r}$, $\Gamma^{\overline{1}}_{p,q,r}$, $\Gamma^{\overline{2}}_{p,q,r}$, and $\Gamma^{\overline{3}}_{p,q,r}$}\label{section_Q}

Let us consider the groups $\Q^{\overline{1}}_{p,q,r}$, $\Q^{\overline{2}}_{p,q,r}$, $\Q^{\overline{3}}_{p,q,r}$, and $\Q^{\overline{0}}_{p,q,r}$:
\begin{align}
&\Q^{\overline{1}}_{p,q,r}:=\{ T\in\C^{\times}_{p,q,r}:\quad \widetilde{T}T\in\Z^{1\times}_{p,q,r}=\ker(\ad),\quad\widehat{\widetilde{T}}T\in\Z^{1\times}_{p,q,r}=\ker(\ad)\},\label{app_def_Q1}
\\
&\Q^{\overline{2}}_{p,q,r}:=\{ T\in\C^{\times}_{p,q,r}:\quad \widetilde{T}T\in\Z^{2\times}_{p,q,r},\quad \widehat{\widetilde{T}}T\in\Z^{2\times}_{p,q,r}\},\label{app_def_Q2}
\\
&\Q^{\overline{3}}_{p,q,r}:=\{T\in\C^{\times}_{p,q,r}:\quad \widetilde{T}T\in\Z^{3\times}_{p,q,r},\quad \widehat{\widetilde{T}}T\in\Z^{3\times}_{p,q,r}\},\label{app_def_Q3}
\\
&\Q^{\overline{0}}_{p,q,r}:=\{T\in\C^{\times}_{p,q,r}:\quad \widetilde{T}T\in\Z^{4\times}_{p,q,r},\quad \widehat{\widetilde{T}}T\in\Z^{4\times}_{p,q,r}\},\label{app_def_Q0}
\end{align}
where $\ker(\ad)$ \eqref{ker_ad_lemma} is the kernel of the adjoint representation $\ad$ \eqref{app_ar}, $\Z^1_{p,q,r}$, $\Z^2_{p,q,r}$, $\Z^3_{p,q,r}$, and $\Z^4_{p,q,r}$ (see Remark \ref{app_cases_we_use}) are the centralizers of the subspaces $\C^1_{p,q,r}$, $\C^{2}_{p,q,r}$, $\C^{3}_{p,q,r}$, and $\C^{4}_{p,q,r}$ respectively considered in Section \ref{appendix_cen}.

The groups $\Q^{\overline{1}}_{p,q,r}$, $\Q^{\overline{2}}_{p,q,r}$, $\Q^{\overline{3}}_{p,q,r}$, and $\Q^{\overline{0}}_{p,q,r}$ are generalizations of the groups $\Q$ and $\Q'$ \cite{OnInner} in the non-degenerate geometric algebras $\C_{p,q}$  to the case of the degenerate geometric algebras $\C_{p,q,r}$ and coincide with them if $r=0$:
\begin{eqnarray*}
    \Q^{\overline{1}}_{p,q}=\Q^{\overline{3}}_{p,q}=\Q,\qquad
     \Q^{\overline{0}}_{p,q}=\Q^{\overline{2}}_{p,q}=
     \left\lbrace
    \begin{array}{lll}
    \Q,&&n=1,2,3\mod{4},
    \\
    \Q',&&n=0\mod{4},
    \end{array}
    \right.
    \quad n\neq2,3,
\end{eqnarray*}
and $\Q^{\overline{1}}_{p,q}=\Q^{\overline{2}}_{p,q}=\Q$ if $n=2,3$.

\begin{theorem}[Formulas \eqref{thq_1} and \eqref{thq_5} of Theorem \ref{maintheo_checkq}]\label{app_maintheo_q}
In the degenerate and non-degenerate geometric algebras $\C_{p,q,r}$, we have
\begin{align*}
&\Q^{\overline{1}}_{p,q,r}=\Gamma^{\overline{1}}_{p,q,r},\qquad\Q^{\overline{2}}_{p,q,r}=\Gamma^{\overline{2}}_{p,q,r},\qquad \Q^{\overline{3}}_{p,q,r}=\Gamma^{\overline{3}}_{p,q,r}, \qquad\Q^{\overline{0}}_{p,q,r}=\Gamma^{\overline{0}}_{p,q,r},
\end{align*}
where
\begin{align}\label{app_dop_1}
\Gamma^{\overline{1}}_{p,q,r}\subseteq\Gamma^{\overline{m}}_{p,q,r}\subseteq\Gamma^{\overline{0}}_{p,q,r},\qquad m=2,3.
\end{align}
\end{theorem}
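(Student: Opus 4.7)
The plan is to establish the equalities $\Q^{\overline{k}}_{p,q,r}=\Gamma^{\overline{k}}_{p,q,r}$ for each $k\in\{0,1,2,3\}$ by applying Lemma~\ref{app_lemma_for_AB} twice per value of $k$ and intersecting the two resulting ``one-sided preservation'' conclusions to recover genuine preservation of $\C^{\overline{k}}_{p,q,r}$ itself. The subgroup chain \eqref{app_dop_1} will then follow from the $\Q$-description together with basic inclusions between the centralizers $\Z^{k}_{p,q,r}$.

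Fixing $k$, I would select, among the eight cases in Lemma~\ref{app_lemma_for_AB}, the two that begin with this $k$. Inspecting the listing, $\widetilde{T}T$ governs the partition $\{0,1\},\{2,3\}$ of quaternion types (cases $(k,l)=(0,1),(1,0),(2,3),(3,2)$), whereas $\widehat{\widetilde{T}}T$ governs the partition $\{0,3\},\{1,2\}$ (cases $(k,l)=(0,3),(3,0),(1,2),(2,1)$). These two partitions of $\{0,1,2,3\}$ are distinct, so for each $k$ there exist indices $l_{1}\neq l_{2}$ for which Lemma~\ref{app_lemma_for_AB} yields
\begin{align*}
\widetilde{T}T\in\Z^{\overline{k}\times}_{p,q,r}\ &\Longleftrightarrow\ T\C^{\overline{k}}_{p,q,r}T^{-1}\subseteq\C^{\overline{k}}_{p,q,r}\oplus\C^{\overline{l_{1}}}_{p,q,r},\\
\widehat{\widetilde{T}}T\in\Z^{\overline{k}\times}_{p,q,r}\ &\Longleftrightarrow\ T\C^{\overline{k}}_{p,q,r}T^{-1}\subseteq\C^{\overline{k}}_{p,q,r}\oplus\C^{\overline{l_{2}}}_{p,q,r}.
\end{align*}
Because $\C_{p,q,r}=\bigoplus_{j=0}^{3}\C^{\overline{j}}_{p,q,r}$ is a direct sum, the two right-hand sides intersect exactly in $\C^{\overline{k}}_{p,q,r}$, so the conjunction of the two norm conditions is equivalent to $T\in\Gamma^{\overline{k}}_{p,q,r}$. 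Combined with Theorem~\ref{app_centralizers_qt}, which identifies $\Z^{\overline{m}}_{p,q,r}=\Z^{m}_{p,q,r}$ for $m=1,2,3$ and $\Z^{\overline{0}}_{p,q,r}=\Z^{4}_{p,q,r}$, this is precisely the defining condition of $\Q^{\overline{k}}_{p,q,r}$ in \eqref{app_def_Q1}--\eqref{app_def_Q0}.

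For the inclusion chain \eqref{app_dop_1} I would pass through the $\Q$-description and note that the defining conditions weaken as the centralizer grows. Since $\Z^{1}_{p,q,r}$ is the full center of $\C_{p,q,r}$ (equivalently, $\ker(\ad)$), it is contained in each $\Z^{m}_{p,q,r}$, which immediately yields $\Q^{\overline{1}}_{p,q,r}\subseteq\Q^{\overline{m}}_{p,q,r}$ for $m=2,3$. The remaining inclusions $\Q^{\overline{m}}_{p,q,r}\subseteq\Q^{\overline{0}}_{p,q,r}$ reduce to $\Z^{2}_{p,q,r},\Z^{3}_{p,q,r}\subseteq\Z^{4}_{p,q,r}$, which I would verify summand by summand using the explicit formulas \eqref{cc_2}--\eqref{cc_4} of Remark~\ref{app_cases_we_use}.

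The main difficulty I foresee is the combinatorial bookkeeping in the first step: for each $k$ one must single out the correct pair of complementary indices $l_{1},l_{2}$ and check that the two target subspaces cut out $\C^{\overline{k}}_{p,q,r}$ exactly, and this must be done cleanly for all four values of $k$. Once this matching is spelled out, the remainder follows mechanically from Lemma~\ref{app_lemma_for_AB}, Theorem~\ref{app_centralizers_qt}, and the centralizer data in Remark~\ref{app_cases_we_use}.
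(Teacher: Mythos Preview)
Your proposal is correct and follows essentially the same route as the paper's own proof: for each $k$ you pick the two admissible pairs $(k,l_{1})$ and $(k,l_{2})$ from Lemma~\ref{app_lemma_for_AB} (the paper simply names them $m=(k-1)\bmod 4$ and $l=(k+1)\bmod 4$), intersect the resulting two ``half-preservation'' conditions to recover $T\C^{\overline{k}}_{p,q,r}T^{-1}\subseteq\C^{\overline{k}}_{p,q,r}$, and then invoke Theorem~\ref{app_centralizers_qt} and the centralizer inclusions of Remark~\ref{app_cases_we_use} exactly as the paper does for the chain~\eqref{app_dop_1}.
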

\begin{proof}
For fixed $k=0,1,2,3\mod{4}$ and $m=(k-1)\mod{4}$, $l=(k+1)\mod{4}$, we get
\begin{eqnarray}
    \Gamma^{\overline{k}}_{p,q,r}&=&\{T\in\C^{\times}_{p,q,r}:\quad T\C^{\overline{k}}_{p,q,r}T^{-1}\subseteq(\C^{\overline{km}}_{p,q,r}\cap\C^{\overline{kl}}_{p,q,r})\}
    \\
    &=&\{T\in\C^{\times}_{p,q,r}:\quad T\C^{\overline{k}}_{p,q,r}T^{-1}\subseteq\C^{\overline{km}}_{p,q,r},\quad T\C^{\overline{k}}_{p,q,r}T^{-1}\subseteq\C^{\overline{kl}}_{p,q,r}\}
    \\
    &=&\{T\in\C^{\times}_{p,q,r}:\quad\widetilde{T}T\in\Z^{\overline{k}\times}_{p,q,r},\quad \widehat{\widetilde{T}}T\in\Z^{\overline{k}\times}_{p,q,r}\}=\Q^{\overline{k}}_{p,q,r},\label{Q_to_pr}
\end{eqnarray}
where we use Lemma \ref{app_lemma_for_AB} in the first equality \eqref{Q_to_pr} and $\Z^{\overline{k}}_{p,q,r}=\Z^{k}_{p,q,r}$, $k=1,2,3$, and $\Z^{\overline{0}}_{p,q,r}=\Z^{4}_{p,q,r}$ by Lemma \ref{app_centralizers_qt} in the second equality \eqref{Q_to_pr}. The inclusions in \eqref{app_dop_1} follow from the definitions \eqref{app_def_Q1}--\eqref{app_def_Q0} and the following facts about explicit forms of centralizers and twisted centralizers implied by Remark \ref{app_cases_we_use}:
\begin{align}
&\ker(\ad)=\Z^{1\times}_{p,q,r}\subseteq\Z^{m\times}_{p,q,r}\subseteq\Z^{4\times}_{p,q,r},\qquad  m=2,3.\label{CC3CC4_2}
\end{align}
This completes the proof.
\end{proof}

\subsection{The Groups $\check{\Q}^{\overline{0}}_{p,q,r}$, $\check{\Q}^{\overline{1}}_{p,q,r}$, $\check{\Q}^{\overline{2}}_{p,q,r}$, $\check{\Q}^{\overline{3}}_{p,q,r}$, $\check{\Gamma}^{\overline{0}}_{p,q,r}$, $\check{\Gamma}^{\overline{1}}_{p,q,r}$, $\check{\Gamma}^{\overline{2}}_{p,q,r}$, and $\check{\Gamma}^{\overline{3}}_{p,q,r}$}\label{section_chQ}

Consider the groups $\check{\Q}^{\overline{1}}_{p,q,r}$, $\check{\Q}^{\overline{2}}_{p,q,r}$, $\check{\Q}^{\overline{3}}_{p,q,r}$, and $\check{\Q}^{\overline{0}}_{p,q,r}$:
\begin{align}
    &\check{\Q}^{\overline{1}}_{p,q,r}:=\{T\in\C^{\times}_{p,q,r}:\quad \widetilde{T}T\in\check{\Z}^{1\times}_{p,q,r}=\ker(\tilde{\ad}),\quad\widehat{\widetilde{T}}T\in\check{\Z}^{1\times}_{p,q,r}=\ker(\tilde{\ad})\},\label{app_def_chQ1}
    \\
    &\check{\Q}^{\overline{2}}_{p,q,r}:=\{T\in\C^{\times}_{p,q,r}:\quad\widetilde{T}T\in\check{\Z}^{2\times}_{p,q,r},\quad \widehat{\widetilde{T}}T\in\check{\Z}^{2\times}_{p,q,r}\},\label{app_def_chQ2}
    \\
    &\check{\Q}^{\overline{3}}_{p,q,r}:=\{T\in\C^{\times}_{p,q,r}:\quad\widetilde{T}T\in\check{\Z}^{3\times}_{p,q,r},\quad\widehat{\widetilde{T}}T\in\check{\Z}^{3\times}_{p,q,r}\},\label{app_def_chQ3}
    \\
    &\check{\Q}^{\overline{0}}_{p,q,r}:=\{T\in\C^{\times}_{p,q,r}:\quad\widetilde{T}T\in\langle\Z^{4}_{p,q,r}\rangle_{(0)}^{\times},\quad\widehat{\widetilde{T}}T\in\langle\Z^{4}_{p,q,r}\rangle_{(0)}^{\times}\},\label{app_def_chQ0}
\end{align}
where $\ker(\tilde{\ad})$ \eqref{lemma_kertad} is the kernel of the twisted adjoint representation $\tilde{\ad}$ \eqref{app_twa22}, the sets $\check{\Z}^2_{p,q,r}$ \eqref{ch_cc_2} and $\check{\Z}^3_{p,q,r}$ \eqref{ch_cc_3} are the twisted centralizers of the subspaces $\C^{2}_{p,q,r}$ and $\C^{3}_{p,q,r}$ respectively, and $\Z^4_{p,q,r}$ (Remark \ref{app_cases_we_use}) is the centralizer of the subspace $\C^{4}_{p,q,r}$ (see Appendix \ref{appendix_cen}). 
We use $\check{\;}$ in the notation of the groups \eqref{app_def_chQ1}--\eqref{app_def_chQ0} due to Theorem \ref{app_maintheo_checkq} below.
We have (see \eqref{lemma_kertad}):
\begin{align}
\langle\Z^{4}_{p,q,r}\rangle_{(0)}=
    \left\lbrace
    \begin{array}{lll}
\Lambda^{(0)}_r\oplus \{\C^{1}_{p,q}\Lambda^{n-2}_r\}\oplus \{\C^{2}_{p,q}\Lambda^{n-3}_r\},&\mbox{$n$ is odd};
\\
\Lambda^{(0)}_r\oplus \{\C^{1}_{p,q}\Lambda^{n-3}_r\}\oplus \{\C^{2}_{p,q}\Lambda^{n-4}_r\} \oplus \C^{n}_{p,q,r}, &\mbox{$n$ is even},\quad r\neq n;
\\
\Lambda^{(0)}_r, &\mbox{$n$ is even},\quad r=n.
    \end{array}
    \right.
\end{align}

In the particular case of the non-degenerate geometric algebra $\C_{p,q}$, we obtain the groups $\Q^{\pm}$ and $\Q'$ considered in \citealp{GenSpin}:
\begin{eqnarray*}
    \check{\Q}^{\overline{1}}_{p,q}=\check{\Q}^{\overline{3}}_{p,q}=\Q^{\pm},\qquad
     \check{\Q}^{\overline{0}}_{p,q}=\check{\Q}^{\overline{2}}_{p,q}=
     \left\lbrace
    \begin{array}{lll}
    \Q^{\pm},&&n=1,2,3\mod{4},
    \\
    \Q',&&n=0\mod{4},
    \end{array}
    \right.
    \; n\neq1,2,
\end{eqnarray*}
and $\check{\Q}^{\overline{1}}_{p,q}=\check{\Q}^{\overline{0}}_{p,q}=\Q^{\pm}$ if $n=1,2$.

\begin{theorem}[Formulas \eqref{thq_3}--\eqref{thq_4} of Theorem \ref{maintheo_checkq}]\label{app_maintheo_checkq}
In degenerate and non-degenerate geometric algebras $\C_{p,q,r}$, we have
\begin{eqnarray*}
&\check{\Q}^{\overline{1}}_{p,q,r}=\check{\Gamma}^{\overline{1}}_{p,q,r}\subseteq\check{\Q}^{\overline{3}}_{p,q,r}=\check{\Gamma}^{\overline{3}}_{p,q,r},\qquad \check{\Q}^{\overline{2}}_{p,q,r}=\check{\Gamma}^{\overline{2}}_{p,q,r},\quad \check{\Q}^{\overline{0}}_{p,q,r}=\check{\Gamma}^{\overline{0}}_{p,q,r}.
\end{eqnarray*}
\end{theorem}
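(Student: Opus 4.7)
The plan is to translate the defining condition of $\check{\Gamma}^{\overline{k}}_{p,q,r}$, namely $\widehat{T}\C^{\overline{k}}_{p,q,r}T^{-1}\subseteq\C^{\overline{k}}_{p,q,r}$, into a pair of simpler conditions via Lemma \ref{app_lemma_for_AB}, and then match these with the defining conditions of $\check{\Q}^{\overline{k}}_{p,q,r}$ using the explicit description of the relevant twisted centralizers from Remark \ref{app_cases_we_use} and Theorem \ref{app_centralizers_qt}.

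First, I would fix $k\in\{0,1,2,3\}$ and choose two distinct indices $m,l\in\{0,1,2,3\}\setminus\{k\}$ whose pairings $(k,m)$ and $(k,l)$ fall into the two different cases of Lemma \ref{app_lemma_for_AB}. Because $\C^{\overline{k}}_{p,q,r}$ is the intersection of the direct sums $\C^{\overline{km}}_{p,q,r}$ and $\C^{\overline{kl}}_{p,q,r}$ (for $m\ne l$), we have
\[
\widehat{T}\C^{\overline{k}}_{p,q,r}T^{-1}\subseteq\C^{\overline{k}}_{p,q,r}\;\Longleftrightarrow\;\widehat{T}\C^{\overline{k}}_{p,q,r}T^{-1}\subseteq\C^{\overline{km}}_{p,q,r}\ \text{and}\ \widehat{T}\C^{\overline{k}}_{p,q,r}T^{-1}\subseteq\C^{\overline{kl}}_{p,q,r}.
\]
Applying Lemma \ref{app_lemma_for_AB} to each inclusion: one of the pairs falls under \eqref{1_for_AB_2} and yields $\widehat{\widetilde{T}}T\in\check{\Z}^{\overline{k}\times}_{p,q,r}$, while the other falls under \eqref{2_for_AB_2} and yields $\widetilde{T}T\in\check{\Z}^{\overline{k}\times}_{p,q,r}$. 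For instance, for $k=1$ one takes $(m,l)=(0,2)$; for $k=2$, $(m,l)=(1,3)$; for $k=3$, $(m,l)=(0,2)$; for $k=0$, $(m,l)=(1,3)$.

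The second step is to identify $\check{\Z}^{\overline{k}}_{p,q,r}$ via Theorem \ref{app_centralizers_qt}: for $k=1,2,3$ it equals $\check{\Z}^{k}_{p,q,r}$, so the resulting two conditions coincide exactly with the definitions \eqref{app_def_chQ1}--\eqref{app_def_chQ3}; for $k=0$, it equals $\langle\Z^{4}_{p,q,r}\rangle_{(0)}$, which matches \eqref{app_def_chQ0}. This establishes the four equalities $\check{\Q}^{\overline{k}}_{p,q,r}=\check{\Gamma}^{\overline{k}}_{p,q,r}$ for all $k$.

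Finally, for the inclusion $\check{\Q}^{\overline{1}}_{p,q,r}\subseteq\check{\Q}^{\overline{3}}_{p,q,r}$, I would appeal to the explicit descriptions in Remark \ref{app_cases_we_use}: the set $\check{\Z}^{1}_{p,q,r}=\Lambda_r$ sits inside $\check{\Z}^{3}_{p,q,r}=\Lambda_r\oplus\{\C^{1}_{p,q}\Lambda^{\geq n-2}_r\}\oplus\{\C^{2}_{p,q}\Lambda^{\geq n-3}_r\}$, so $\check{\Z}^{1\times}_{p,q,r}\subseteq\check{\Z}^{3\times}_{p,q,r}$, and the defining conditions on $\widetilde{T}T$ and $\widehat{\widetilde{T}}T$ in $\check{\Q}^{\overline{1}}_{p,q,r}$ immediately imply those of $\check{\Q}^{\overline{3}}_{p,q,r}$. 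The main obstacle is purely a bookkeeping one: correctly matching, for each $k$, which pair $(k,m)$ is governed by \eqref{1_for_AB_2} versus \eqref{2_for_AB_2}, so that exactly one of the two translated conditions is about $\widetilde{T}T$ and the other is about $\widehat{\widetilde{T}}T$. The case $k=0$ needs extra care because $\check{\Z}^{\overline{0}}_{p,q,r}$ is not $\check{\Z}^{4}_{p,q,r}$ but its even part $\langle\Z^{4}_{p,q,r}\rangle_{(0)}$; here Theorem \ref{app_centralizers_qt} does the crucial work.
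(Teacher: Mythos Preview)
Your proposal is correct and follows essentially the same approach as the paper's proof: split $\C^{\overline{k}}_{p,q,r}=\C^{\overline{km}}_{p,q,r}\cap\C^{\overline{kl}}_{p,q,r}$ with $m=(k-1)\bmod 4$ and $l=(k+1)\bmod 4$, apply the two cases \eqref{1_for_AB_2} and \eqref{2_for_AB_2} of Lemma~\ref{app_lemma_for_AB} to obtain the conditions on $\widehat{\widetilde{T}}T$ and $\widetilde{T}T$, and then invoke Theorem~\ref{app_centralizers_qt} (together with \eqref{same_even} for the $k=0$ case) and the inclusion $\check{\Z}^{1}_{p,q,r}\subseteq\check{\Z}^{3}_{p,q,r}$ from Remark~\ref{app_cases_we_use}. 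Your index choices coincide with the paper's, and the bookkeeping you flag as the only obstacle is handled exactly as you describe.
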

\begin{proof}
We get $\check{\Q}^{\overline{1}}_{p,q,r}\subseteq\check{\Q}^{\overline{3}}_{p,q,r}$, using $\check{\Z}^1_{p,q,r}\subseteq\check{\Z}_{p,q,r}^3$ (see Remark \ref{app_cases_we_use}).
    For fixed $k=0,1,2,3\mod{4}$ and $m=(k-1)\mod{4}$, $l=(k+1)\mod{4}$, we obtain
\begin{eqnarray}
    \!\!\!\!\!\!\!\!\!\!\!\!\!\!\!\check{\Gamma}^{\overline{k}}_{p,q,r}&=&\{T\in\C^{\times}_{p,q,r}:\quad \widehat{T}\C^{\overline{k}}_{p,q,r}T^{-1}\subseteq(\C^{\overline{km}}_{p,q,r}\cap\C^{\overline{kl}}_{p,q,r})\}
    \\
    \!\!\!\!\!\!\!\!\!\!\!\!\!\!\!&=&\{T\in\C^{\times}_{p,q,r}:\quad \widehat{T}\C^{\overline{k}}_{p,q,r}T^{-1}\subseteq\C^{\overline{km}}_{p,q,r},\quad \widehat{T}\C^{\overline{k}}_{p,q,r}T^{-1}\subseteq\C^{\overline{kl}}_{p,q,r}\}
    \\
    \!\!\!\!\!\!\!\!\!\!\!\!\!\!\!&=&\{T\in\C^{\times}_{p,q,r}:\quad\widetilde{T}T\in\check{\Z}_{p,q,r}^{\overline{k}\times},\quad \widehat{\widetilde{T}}T\in\check{\Z}_{p,q,r}^{\overline{k}\times}\}=\check{\Q}^{\overline{k}}_{p,q,r},\label{Q_to_pr_2}
\end{eqnarray}
 where we use Lemma \ref{app_lemma_for_AB} in the first equality \eqref{Q_to_pr_2} and $\check{\Z}_{p,q,r}^{\overline{k}}=\check{\Z}_{p,q,r}^{k}$, $k=1,2,3$, and $\check{\Z}_{p,q,r}^{\overline{0}}=\check{\Z}_{p,q,r}^{4}\cap\C^{(0)}_{p,q,r}={\Z}_{p,q,r}^{4}\cap\C^{(0)}_{p,q,r}$ by Lemma \ref{app_centralizers_qt} and (\ref{same_even}) in the second equality (\ref{Q_to_pr_2}).
\end{proof}

\begin{remark}[Formula \eqref{thq_6} of Theorem \ref{maintheo_checkq}]\label{app_rem_rel}
    We have the following relations between the groups $\Gamma^{\overline{k}}_{p,q,r}$ \eqref{app_gamma_ov_k} and $\check{\Gamma}^{\overline{k}}_{p,q,r}$ \eqref{app_gamma_ov_chk}, $k=0,1,2,3$:
        \begin{align}
    &\check{\Gamma}^{\overline{m}}_{p,q,r}\subseteq\Gamma^{\overline{0}}_{p,q,r},\qquad m=0,1,2,3;\qquad \check{\Gamma}^{\overline{1}}_{p,q,r}\subseteq\Gamma^{\overline{2}}_{p,q,r}.
    \end{align}
    The statements follow from Theorems \ref{app_maintheo_q} and \ref{app_maintheo_checkq}, the definitions of the groups $\check{\Q}^{\overline{m}}_{p,q,r}$ \eqref{app_def_chQ1}--\eqref{app_def_chQ0}, $m=0,1,2,3$, $\Q^{\overline{0}}_{p,q,r}$ \eqref{app_def_Q0}, $\Q^{\overline{2}}_{p,q,r}$ \eqref{app_def_Q2}, $\check{\Z}^1_{p,q,r}\subseteq\Z^2_{p,q,r}$, and $\check{\Z}^k_{p,q,r}\subseteq\Z^4_{p,q,r}$, $k=1,2,3$, by Remark \ref{app_cases_we_use}. 
\end{remark}

\subsection{The Groups $\tilde{\Gamma}^{\overline{0}}_{p,q,r}$, $\tilde{\Gamma}^{\overline{1}}_{p,q,r}$, $\tilde{\Gamma}^{\overline{2}}_{p,q,r}$, and $\tilde{\Gamma}^{\overline{3}}_{p,q,r}$}

In Corollary \ref{coroll_gen}, we summarize the results on the equivalent definitions of the generalized Lipschitz groups $\tilde{\Gamma}^{\overline{k}}_{p,q,r}$ \eqref{app_gamma_ov_tk}, $k=0,1,2,3$. They follow from
Theorems \ref{app_maintheo_q} and \ref{app_maintheo_checkq} on the equivalent definitions of the groups $\Gamma^{\overline{k}}_{p,q,r}$ \eqref{app_gamma_ov_k} and $\check{\Gamma}^{\overline{k}}_{p,q,r}$ \eqref{app_gamma_ov_chk}, $k=0,1,2,3$, respectively, Remark \ref{app_rem_rel}, and the relation \eqref{app_def_tilde_gk} between these groups and the generalized Lipschitz groups $\tilde{\Gamma}^{\overline{k}}_{p,q,r}$.

\begin{corollary}[Formula \eqref{subg1} of Theorem \ref{thm_subgroup}]\label{coroll_gen}
    In the case of arbitrary $\C_{p,q,r}$, we have 
    \begin{align}
    &\tilde{\Gamma}^{\overline{1}}_{p,q,r} \!= \!\check{\Q}^{\overline{1}}_{p,q,r} \subseteq \tilde{\Gamma}^{\overline{2}}_{p,q,r} \!= \!\Q^{\overline{2}}_{p,q,r} \subseteq \tilde{\Gamma}^{\overline{0}}_{p,q,r} \!=\!\Q^{\overline{0}}_{p,q,r},\label{app_genall1}
    \\
    &\tilde{\Gamma}^{\overline{1}}_{p,q,r} \subseteq \tilde{\Gamma}^{\overline{3}}_{p,q,r} = \check{\Q}^{\overline{3}}_{p,q,r}\label{app_genall2}
\end{align}
\end{corollary}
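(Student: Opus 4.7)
The plan is to assemble the corollary as a direct consequence of three ingredients already established in the appendix: the parity-based reduction \eqref{app_def_tilde_gk} between $\tilde{\Gamma}^{\overline{k}}_{p,q,r}$ and the ``one-sided'' stabilizer groups $\Gamma^{\overline{k}}_{p,q,r}$, $\check{\Gamma}^{\overline{k}}_{p,q,r}$; the identifications of those stabilizers with the groups defined by the norm conditions (Theorems \ref{app_maintheo_q} and \ref{app_maintheo_checkq}); and the cross-inclusions collected in Remark \ref{app_rem_rel}. No new calculation should be needed; the work is purely a matter of chaining these equalities and inclusions correctly, keeping careful track of which index uses $\ad$ and which uses $\check{\ad}$.

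First I would dispatch the four equalities. For $k=1,3$ the subspace $\C^{\overline{k}}_{p,q,r}$ is odd, so $\tilde{\ad}_T$ acts on it as $\check{\ad}_T$ by \eqref{app_ad_t_2}; this gives $\tilde{\Gamma}^{\overline{k}}_{p,q,r}=\check{\Gamma}^{\overline{k}}_{p,q,r}$ via \eqref{app_def_tilde_gk}, and then Theorem \ref{app_maintheo_checkq} identifies the right-hand side with $\check{\Q}^{\overline{k}}_{p,q,r}$. For $k=0,2$ the subspace is even, $\tilde{\ad}_T$ agrees with $\ad_T$ on it by \eqref{app_ad_t_1}, so $\tilde{\Gamma}^{\overline{k}}_{p,q,r}=\Gamma^{\overline{k}}_{p,q,r}$ and Theorem \ref{app_maintheo_q} gives $\Gamma^{\overline{k}}_{p,q,r}=\Q^{\overline{k}}_{p,q,r}$.

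Next I would dispatch the three inclusions. The chain $\tilde{\Gamma}^{\overline{2}}_{p,q,r}\subseteq\tilde{\Gamma}^{\overline{0}}_{p,q,r}$ is, after the identifications just made, the inclusion $\Gamma^{\overline{2}}_{p,q,r}\subseteq\Gamma^{\overline{0}}_{p,q,r}$ proved in Theorem \ref{app_maintheo_q}. The inclusion $\tilde{\Gamma}^{\overline{1}}_{p,q,r}\subseteq\tilde{\Gamma}^{\overline{2}}_{p,q,r}$ translates, via \eqref{app_def_tilde_gk}, to $\check{\Gamma}^{\overline{1}}_{p,q,r}\subseteq\Gamma^{\overline{2}}_{p,q,r}$, which is exactly one of the statements of Remark \ref{app_rem_rel}. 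Finally, $\tilde{\Gamma}^{\overline{1}}_{p,q,r}\subseteq\tilde{\Gamma}^{\overline{3}}_{p,q,r}$ becomes $\check{\Gamma}^{\overline{1}}_{p,q,r}\subseteq\check{\Gamma}^{\overline{3}}_{p,q,r}$, which follows from the inclusion $\check{\Q}^{\overline{1}}_{p,q,r}\subseteq\check{\Q}^{\overline{3}}_{p,q,r}$ contained in Theorem \ref{app_maintheo_checkq} (itself a consequence of $\check{\Z}^1_{p,q,r}\subseteq\check{\Z}^3_{p,q,r}$ from Remark \ref{app_cases_we_use}).

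There is essentially no obstacle here: each step is one lookup away. The only place I would be careful is bookkeeping of the two distinct twisted adjoint formalisms ($\check{\ad}$ on $\C^{\overline{1}}_{p,q,r}$, $\C^{\overline{3}}_{p,q,r}$ versus $\ad$ on $\C^{\overline{0}}_{p,q,r}$, $\C^{\overline{2}}_{p,q,r}$), because it is easy to cite the wrong ingredient when a subspace is odd; once the parity table \eqref{app_def_tilde_gk} is invoked systematically for each of the four values of $k$, the corollary follows in one line per relation.
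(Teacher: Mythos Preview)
Your proposal is correct and takes essentially the same approach as the paper: the paper's own proof simply cites Theorems \ref{app_maintheo_q}, \ref{app_maintheo_checkq}, Remark \ref{app_rem_rel}, and the parity reduction \eqref{app_def_tilde_gk} without spelling out which ingredient handles which piece, and you have done exactly that bookkeeping correctly.
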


\begin{theorem}[Theorem \ref{thm_subgroupP}]\label{lemma_chQ_eq}
The elements of the generalized Lipschitz groups $\tilde{\Gamma}^{\overline{1}}_{p,q,r}$ have the following special form:
\begin{eqnarray}
\tilde{\Gamma}^{\overline{1}}_{p,q,r}\subseteq(\C^{(0)\times}_{p,q,r}\cup\C^{(1)\times}_{p,q,r})\Lambda^{\times}_r.\label{sub_ch_Q}
\end{eqnarray}
\end{theorem}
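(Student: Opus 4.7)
The plan is to exploit the equivalent characterization $\tilde{\Gamma}^{\overline{1}}_{p,q,r}=\check{\Q}^{\overline{1}}_{p,q,r}$ from Corollary~\ref{coroll_gen} together with a known characterization (Theorem~4.7 of \citealp{OnSomeLie}) of the product group on the right-hand side, namely
\begin{equation*}
(\C^{(0)\times}_{p,q,r}\cup\C^{(1)\times}_{p,q,r})\Lambda^{\times}_r=\{T\in\C^{\times}_{p,q,r}:\;\widehat{T^{-1}}T\in\Lambda^{\times}_r\}.
\end{equation*}
So the whole task reduces to proving, for an arbitrary $T\in\check{\Q}^{\overline{1}}_{p,q,r}$, that $\widehat{T^{-1}}T\in\Lambda^{\times}_r$.

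First, I would unfold the definition of $\check{\Q}^{\overline{1}}_{p,q,r}$ in \eqref{app_def_chQ1}: the key observation is that $\check{\Z}^{1\times}_{p,q,r}=\Lambda^{\times}_r$ by Remark~\ref{app_cases_we_use}, and by Lemma~\ref{app_lemma_ker} this set equals $\ker(\tilde{\ad})$. Thus membership in $\check{\Q}^{\overline{1}}_{p,q,r}$ gives two elements $a:=\widetilde{T}T\in\Lambda^{\times}_r$ and $b:=\widehat{\widetilde{T}}T\in\Lambda^{\times}_r$.

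Next, I would combine these two relations using the fact that the grade involution is an algebra homomorphism. From $\widetilde{T}=aT^{-1}$, applying $\widehat{\;\;}$ yields $\widehat{\widetilde{T}}=\widehat{a}\,\widehat{T^{-1}}$, since $\Lambda_r$ is preserved by $\widehat{\;\;}$ and so $\widehat{a}\in\Lambda^{\times}_r$. Substituting into $b=\widehat{\widetilde{T}}T$ gives $b=\widehat{a}\,\widehat{T^{-1}}T$, hence
\begin{equation*}
\widehat{T^{-1}}T=\widehat{a}^{-1}b\in\Lambda^{\times}_r,
\end{equation*}
because $\Lambda^{\times}_r$ is a group. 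Applying Theorem~4.7 of \citealp{OnSomeLie} now places $T$ in $(\C^{(0)\times}_{p,q,r}\cup\C^{(1)\times}_{p,q,r})\Lambda^{\times}_r$, completing the proof.

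The main obstacle, and the reason this short chain works at all, is recognizing the right reformulation of the ambient group in terms of the single scalar-like invariant $\widehat{T^{-1}}T$; once that is in hand, the two defining conditions of $\check{\Q}^{\overline{1}}_{p,q,r}$ combine cleanly because one is obtained from the other by precisely one grade involution. I would expect to verify carefully that grade involution preserves $\Lambda^{\times}_r$ (immediate from $\Lambda_r=\C_{0,0,r}$ being a $\mathbb{Z}$-graded subalgebra) and that all formal inversions make sense, both of which are routine.
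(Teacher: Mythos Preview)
Your proof is correct and follows essentially the same approach as the paper's own proof: both start from the identification $\tilde{\Gamma}^{\overline{1}}_{p,q,r}=\check{\Q}^{\overline{1}}_{p,q,r}$ (so $\widetilde{T}T,\widehat{\widetilde{T}}T\in\Lambda^{\times}_r$), combine these two relations algebraically to conclude $\widehat{T^{-1}}T\in\Lambda^{\times}_r$, and then invoke Theorem~4.7 of \citealp{OnSomeLie}. The only difference is cosmetic: the paper derives $\widehat{T^{-1}}T=\widehat{U^{-1}}W$ (in your notation $\widehat{b}^{-1}a$) via $T=\widetilde{T^{-1}}W$ and $\widehat{T^{-1}}=\widehat{U^{-1}}\widetilde{T}$, whereas you obtain the equivalent expression $\widehat{a}^{-1}b$ by applying the grade involution to $\widetilde{T}=aT^{-1}$.
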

\begin{proof}
    Suppose $T\in\tilde{\Gamma}^{\overline{1}}_{p,q,r}$; then  $\widetilde{T}T=W\in\ker(\tilde{\ad})$ and $\widehat{\widetilde{T}}T=U\in\ker(\tilde{\ad})$. Therefore, $T=\widetilde{T^{-1}}W$ and $\widehat{T^{-1}}=\widehat{U^{-1}}\widetilde{T}$. Thus, $\widehat{T^{-1}}T=(\widehat{U^{-1}}\widetilde{T})(\widetilde{T^{-1}}W)=\widehat{U^{-1}}W\in\ker(\tilde{\ad})=\Lambda^{\times}_r$. Now we apply  Theorem 4.7 \cite{OnSomeLie}:
        \begin{align*}
        (\C^{(0)\times}_{p,q,r}\cup\C^{(1)\times}_{p,q,r})\Lambda^{\times}_r = \{T\in\C^{\times}_{p,q,r}:\quad \widehat{T^{-1}}T\in\Lambda^{\times}_r\}.
    \end{align*}
    and obtain $T\in(\C^{(0)\times}_{p,q,r}\cup\C^{(1)\times}_{p,q,r})\Lambda^{\times}_r$.
\end{proof}

\subsection{Relation Between Ordinary and Generalized Lipschitz Groups}

The group $\tilde{\Gamma}^{\overline{1}}_{p,q,r}$ may be considered the most important among the generalized Lipschitz groups $\tilde{\Gamma}^{\overline{k}}_{p,q,r}$ \eqref{app_gamma_ov_tk}, $k=0,1,2,3$, because its elements preserve under $\tilde{\ad}$ not only the subspace $\C^{\overline{1}}_{p,q,r}$, but also all other subspaces $\C^{\overline{k}}_{p,q,r}$, $k=1,2,3$, by Corollary \ref{coroll_gen}. Moreover, the group $\tilde{\Gamma}^{\overline{1}}_{p,q,r}$ coincides with the ordinary Lipschitz group $\tilde{\Gamma}^1_{p,q,r}$  \eqref{def_lg}  in the case of the low-dimensional geometric algebras $\C_{p,q,r}$ (Remark \ref{rem_lg} below) and contains it as a subgroup in the case of arbitrary $\C_{p,q,r}$ (Theorem \ref{thm_lgsub}).

\begin{remark}[Formula \eqref{subg2} of Theorem \ref{thm_subgroup}]\label{rem_lg}
In the case of the small dimensions $n\leq 4$, the (ordinary) Lipschitz groups \eqref{def_lg} coincide with the generalized Lipschitz groups $\tilde{\Gamma}^{\overline{1}}_{p,q,r}$ due to $\C^{\overline{1}}_{p,q,r}=\C^{1}_{p,q,r}$:
\begin{eqnarray}
\tilde{\Gamma}^{1}_{p,q,r}=\tilde{\Gamma}^{\overline{1}}_{p,q,r},\qquad n\leq 4.
\end{eqnarray}
Note that in the particular case of the non-degenerate geometric algebra $\C_{p,q}$, it is proved \cite{GenSpin} that the groups coincide in the case $n=5$ as well:
\begin{eqnarray}
\tilde{\Gamma}^{1}_{p,q}=\tilde{\Gamma}^{\overline{1}}_{p,q},\qquad n\leq 5.
 \end{eqnarray}
\end{remark}

\begin{theorem}[Formula \eqref{subg1} of Theorem \ref{thm_subgroup}]\label{thm_lgsub}
In arbitrary $\C_{p,q,r}$, the (ordinary) Lipschitz group $\tilde{\Gamma}^1_{p,q,r}$  \eqref{def_lg} is a subgroup of the generalized Lipschitz group $\tilde{\Gamma}^{\overline{1}}_{p,q,r}$ \eqref{app_gamma_ov_tk}:
\begin{align}
\tilde{\Gamma}^1_{p,q,r}\subseteq \tilde{\Gamma}^{\overline{1}}_{p,q,r}.
\end{align}
\end{theorem}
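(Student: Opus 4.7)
The plan is to reduce the statement to the equivalent norm-function characterization of $\tilde{\Gamma}^{\overline{1}}_{p,q,r}$ provided by Corollary \ref{coroll_gen}, namely $\tilde{\Gamma}^{\overline{1}}_{p,q,r}=\check{\Q}^{\overline{1}}_{p,q,r}$. Since $\check{\Z}^{1}_{p,q,r}=\Lambda_r$ by \eqref{ch_cc_1}, proving the theorem amounts to verifying, for any $T\in\tilde{\Gamma}^{1}_{p,q,r}$, the two conditions $\widetilde{T}T\in\Lambda_r^{\times}$ and $\widehat{\widetilde{T}}T\in\Lambda_r^{\times}$.

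To obtain these I will exploit the defining symmetries of vectors: $\widehat{v}=-v$ and $\widetilde{v}=v$ for every $v\in\C^1_{p,q,r}$. Since the Lipschitz condition says $\Phi(v):=\widehat{T}vT^{-1}\in\C^1_{p,q,r}$, these symmetries are inherited by $\Phi(v)$. Expanding $\widehat{\Phi(v)}=-\Phi(v)$ using the homomorphism property of grade involution, and $\widetilde{\Phi(v)}=\Phi(v)$ using the antihomomorphism property of reversion, will produce, respectively,
\[
Tv\widehat{T}^{-1}=\widehat{T}vT^{-1},\qquad \widetilde{T}^{-1}v\widehat{\widetilde{T}}=\widehat{T}vT^{-1}.
\]
Multiplying the first on the left by $\widehat{T}^{-1}$ and on the right by $\widehat{T}$, and the second on the left by $\widetilde{T}$ and on the right by $T$, and then applying grade involution once more to rearrange, I expect to recover the centralizer-form identities $\widehat{A}v=vA$ and $\widehat{B}v=vB$ for all $v\in\C^1_{p,q,r}$, where $A:=\widehat{T}^{-1}T$ and $B:=\widehat{\widetilde{T}}T$. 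By the definition of the twisted centralizer and by \eqref{ch_cc_1}, this forces $A,B\in\Lambda_r^{\times}$. This already delivers the second required condition.

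For the first condition, the plan is to combine the two outputs algebraically. From $A=\widehat{T}^{-1}T$ we have $T=\widehat{T}A$, so applying reversion (recall that reversion and grade involution commute) gives $\widetilde{T}=\widetilde{A}\,\widehat{\widetilde{T}}$, whence
\[
\widetilde{T}T=\widetilde{A}\,\widehat{\widetilde{T}}\,T=\widetilde{A}B.
\]
Since $\widetilde{A},B\in\Lambda_r$ and $\Lambda_r$ is a subalgebra closed under products, $\widetilde{A}B\in\Lambda_r$; invertibility is inherited from $T$. Both norm conditions are then in place, so Corollary \ref{coroll_gen} yields $T\in\check{\Q}^{\overline{1}}_{p,q,r}=\tilde{\Gamma}^{\overline{1}}_{p,q,r}$, completing the argument.

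The main obstacle is the sign and ordering bookkeeping in the two symmetry calculations: grade involution flips the sign on every odd-degree factor, reversion reverses the order of products, and one has to apply grade involution a second time to each rearranged identity to convert a ``twisted'' commutation of the form $Av=v\widehat{A}$ into the clean centralizer form $\widehat{A}v=vA$. Once this is handled correctly, the closure of $\Lambda_r$ under products and reversion makes the combination step a one-line calculation.
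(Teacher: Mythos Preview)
Your proposal is correct and follows essentially the same route as the paper: exploit that $\Phi(v)=\widehat{T}vT^{-1}\in\C^1_{p,q,r}$ is fixed by the basic conjugations on vectors, rearrange to obtain twisted-centralizer identities, use $\check{\Z}^1_{p,q,r}=\Lambda_r$, and conclude via $\tilde{\Gamma}^{\overline{1}}_{p,q,r}=\check{\Q}^{\overline{1}}_{p,q,r}$. The only cosmetic difference is that the paper applies reversion and Clifford conjugation to $\Phi(v)$, obtaining $\widetilde{T}T,\widehat{\widetilde{T}}T\in\Lambda_r$ directly, whereas you apply grade involution and reversion, obtaining $\widehat{T}^{-1}T,\widehat{\widetilde{T}}T\in\Lambda_r$ and then combining them via $\widetilde{T}T=\widetilde{A}B$ --- one extra line, same idea.
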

\begin{proof}
    Suppose $T\in\tilde{\Gamma}^1_{p,q,r}$, then $T\in\C^{\times}_{p,q,r}$ and $\widehat{T}U_1T^{-1}\in\C^1_{p,q,r}$ for any $U_1\in\C^1_{p,q,r}$ by definition.
    We get 
    \begin{align}
        & \widehat{T}U_1T^{-1} = (\widehat{T}U_1T^{-1})\widetilde{\;}=\widetilde{T^{-1}}\widetilde{U_1}\widehat{\widetilde{T}}=\widetilde{T^{-1}}U_1\widehat{\widetilde{T}},\qquad\forall U_1\in\C^1_{p,q,r},\label{app_subg_1}
        \\
        &\widehat{T}U_1T^{-1} = -(\widehat{T}U_1T^{-1}){\widehat{\widetilde{\;}}} = - \widehat{\widetilde{T^{-1}}} \widehat{\widetilde{U_1}}\widetilde{T} = \widehat{\widetilde{T^{-1}}}U_1 \widetilde{T},\qquad\forall U_1\in\C^1_{p,q,r},\label{app_subg_2}
    \end{align}
    where we use $\C^1_{p,q,r}\subseteq\C^{\overline{1}}_{p,q,r}$, Table \ref{table_qt}, and the properties of the reversion $\widetilde{UV}=\widetilde{V}\widetilde{U}$, Clifford conjugation $\widehat{\widetilde{UV}}=\widehat{\widetilde{V}}\widehat{\widetilde{U}}$, and grade involution $\widehat{\widehat{U}}=U$, for any $U,V\in\C_{p,q,r}$. We multiply both sides of the equality \eqref{app_subg_1} on the left by $\widetilde{T}$ and on the right by $T$, both sides of the equality \eqref{app_subg_2} on the left by $\widehat{\widetilde{T}}$ and on the right by $T$, and get
    \begin{align}
        \widehat{(\widehat{\widetilde{T}}T)}U_1=U_1(\widehat{\widetilde{T}}T),\qquad \widehat{({\widetilde{T}}T)}U_1=U_1({\widetilde{T}}T),\qquad\forall U_1\in\C^1_{p,q,r}.
    \end{align}
    Therefore, by the definition \eqref{app_def_chz} of the twisted centralizers, the elements $\widehat{\widetilde{T}}T$ and $\widetilde{T}T$ belong to the twisted centralizer of the grade-$1$ subspace $\C^1_{p,q,r}$ in $\C_{p,q,r}$:
    \begin{align}
        \widehat{\widetilde{T}}T\in\check{\Z}^1_{p,q,r},\qquad \widetilde{T}T\in\check{\Z}^1_{p,q,r}.
    \end{align}
    We have $\check{\Z}^1_{p,q,r}=\Lambda_r$ by Remark \ref{app_cases_we_use}. Therefore, $\widehat{\widetilde{T}}T,\widetilde{T}T\in\Lambda^{\times}_r$, and $T\in\check{\Q}^{\overline{1}}_{p,q,r}$ by definition \eqref{app_def_chQ1}. By Corollary \ref{coroll_gen}, $\check{\Q}^{\overline{1}}_{p,q,r}=\tilde{\Gamma}^{\overline{1}}_{p,q,r}$. Thus, $T\in\tilde{\Gamma}^{\overline{1}}_{p,q,r}$, and the proof is completed.
\end{proof}

\section{Equivariant Mappings}\label{appendix_equivariant}

 Let $G$ be a group and $X$ be a set. A (left) \emph{group action} is a map: $\circ: G\times X\rightarrow X$, $(g,x)\mapsto g\circ x$
 that satisfies associativity (i.e. $(gh)\circ x = g\circ(h\circ x)$ for any $g,h\in G$ and $x\in X$) and 
identity condition (i.e. $e\circ x=x$ for any $x\in X$).

    Suppose $G$ is a group and $\circ_X$ and $\circ_Y$ are its actions on the sets $X$ and $Y$ respectively. A \emph{function} (network) $L:X\rightarrow Y$ is called \emph{$G$-equivariant} iff it commutes with these actions:
    \begin{align}
    L(g\circ_{X} x) = g \circ_{Y} L(x),\qquad \forall g\in G,\qquad \forall x\in X.
    \end{align}

\begin{example}
Consider the generalized Lipschitz group $\tilde{\Gamma}^{\overline{1}}_{p,q,r}$ with the action $\tilde{\ad}$ and a function $L:\C_{p,q,r}\rightarrow\C_{p,q,r}$. This function is $\tilde{\Gamma}^{\overline{1}}_{p,q,r}$-equivariant iff
\begin{align}
    L(\tilde{\ad}_T(x)) = \tilde{\ad}_T(L(x)),\qquad \forall T\in \tilde{\Gamma}^{\overline{1}}_{p,q,r},\qquad \forall x\in \C_{p,q,r},
\end{align}
i.e.
\begin{align}
    L(T\langle x\rangle_{(0)} T^{-1} + \widehat{T} \langle x \rangle_{(1)} T^{-1}) = T\langle L(x)\rangle_{(0)} T^{-1} + \widehat{T} \langle L(x) \rangle_{(1)} T^{-1},\qquad \forall T\in \tilde{\Gamma}^{\overline{1}}_{p,q,r},\qquad \forall x\in \C_{p,q,r}.
\end{align}
\end{example}

We prove several general statements about equivariance with respect to an arbitrary group.
Suppose $G$ is a group, $\circ_X$ is its action defined on a set $X$.

\begin{lemma}\label{app_lem_com}
Consider $k$ $G$-equivariant mappings $f_i:X_{i-1}\rightarrow X_{i}$, $i=1,\ldots,k$. Their composition is $G$-equivariant:
    \begin{align}
       f_k(\cdots(f_2(f_1(g\circ_{X_0} x))))  = g\circ_{X_k} f_k(\cdots(f_2(f_1(x))))
    \end{align}
    for any $g\in G$ and $x\in X_0$.
\end{lemma}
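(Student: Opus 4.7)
The plan is to proceed by induction on $k$, the number of maps composed, using the $G$-equivariance of each $f_i$ individually to push the group element past one mapping at a time.

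The base case $k=1$ is immediate: the identity $f_1(g\circ_{X_0} x) = g\circ_{X_1} f_1(x)$ is exactly the hypothesis that $f_1$ is $G$-equivariant. For the inductive step, assume that the composition $F_{k-1} := f_{k-1}\circ\cdots\circ f_1$ satisfies $F_{k-1}(g\circ_{X_0} x) = g\circ_{X_{k-1}} F_{k-1}(x)$ for all $g\in G$ and $x\in X_0$. Applying $f_k$ to both sides and invoking the $G$-equivariance of $f_k$ (with action $\circ_{X_{k-1}}$ on its domain and $\circ_{X_k}$ on its codomain) yields
$$f_k(F_{k-1}(g\circ_{X_0} x)) = f_k(g\circ_{X_{k-1}} F_{k-1}(x)) = g\circ_{X_k} f_k(F_{k-1}(x)),$$
which is precisely the desired claim for $k$ maps.

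There is no genuine obstacle; the result is a routine chaining argument. The only matter requiring care is bookkeeping: the action $\circ_{X_i}$ depends on the set $X_i$, so at each stage of the induction one must track which action is currently in play before invoking the equivariance of the next mapping. An entirely equivalent non-inductive proof is to apply the defining identity $f_i(g\circ_{X_{i-1}} y) = g\circ_{X_i} f_i(y)$ successively for $i=1,2,\ldots,k$ starting from $y = f_{i-1}(\cdots(f_1(x)))$, which telescopes immediately to the claim.
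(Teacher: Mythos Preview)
Your proof is correct and follows the same inductive strategy as the paper's proof, which also proceeds by induction, displaying the $k=2$ case $f_2(f_1(g\circ_{X_0} x)) = f_2(g\circ_{X_1}(f_1(x))) = g\circ_{X_2} f_2(f_1(x))$ as the key step.
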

\begin{proof}
    We prove the statement by induction, using
    \begin{align*}
        f_2(f_1(g\circ_{X_0} x)) = f_2(g\circ_{X_1}(f_1(x)) = g\circ_{X_2} f_2(f_1(x)).
    \end{align*}
\end{proof}

\begin{lemma}\label{app_lem_lin}
    Suppose group $G$ action $\circ$ is linear, i.e. for any $g\in G$,  $x,y\in X$, and $\alpha\in\F$
    \begin{align*}
        g\circ_Y (\alpha x)=\alpha(g\circ_X x),\;\; g\circ_{Y}(x+y)=g\circ_X x + g\circ_X y.
    \end{align*} 
    Consider $k$ $G$-equivariant mappings $f_1,\ldots,f_k:X\rightarrow Y$. Their linear combinations are $G$-equivariant:
    \begin{align}
        \sum_{i=1}^k\alpha_i f_i(g\circ_X x) = g\circ_Y\Big(\sum_{i=1}^k\alpha_i f_i(x) \Big)
    \end{align}
    for any $g\in G$,  $x\in X$, $\alpha_i\in\F$.
\end{lemma}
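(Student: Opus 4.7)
The plan is to prove this by direct computation, applying the $G$-equivariance of each individual $f_i$ first, and then invoking the two linearity hypotheses of the action $\circ$ to pull the action outside the linear combination. Since the hypothesis explicitly separates scalar-linearity and additivity of $\circ$, both need to be used.

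First, I would fix an arbitrary $g \in G$ and $x \in X$ and start from the left-hand side $\sum_{i=1}^k \alpha_i f_i(g \circ_X x)$. By the $G$-equivariance of each $f_i$, we have $f_i(g \circ_X x) = g \circ_Y f_i(x)$ for every $i = 1, \ldots, k$. Substituting gives $\sum_{i=1}^k \alpha_i (g \circ_Y f_i(x))$. Now applying the scalar-linearity hypothesis of $\circ$, namely $g \circ_Y (\alpha_i f_i(x)) = \alpha_i (g \circ_Y f_i(x))$, termwise yields $\sum_{i=1}^k g \circ_Y (\alpha_i f_i(x))$.

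The final step is to pull the action outside the sum via the additivity hypothesis $g \circ_Y (u + v) = g \circ_Y u + g \circ_Y v$. Strictly speaking, this should be done by an immediate induction on $k$ (base case $k = 1$ is trivial, inductive step applies additivity once to combine the first $k-1$ terms with the $k$-th), giving $g \circ_Y \bigl( \sum_{i=1}^k \alpha_i f_i(x) \bigr)$, which is exactly the right-hand side.

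There is no real obstacle here; the lemma is a routine consequence of unwinding definitions. The only thing to be careful about is to cite both parts of the linearity assumption in the right order (scalar-linearity first, then additivity) and to note that the induction on $k$ for additivity is essentially automatic. The statement and its proof are then used in the main text via Theorem~\ref{lemma_rev_cc} and Theorem~\ref{conj_eq}, where $f_i$ are the grade projections $\langle \cdot \rangle_{\overline{m}}$ or $\langle \cdot \rangle_k$, whose equivariance has already been established, and $\alpha_i \in \{+1, -1\}$ encode the conjugation signs.
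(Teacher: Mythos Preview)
Your proof is correct and follows essentially the same approach as the paper: apply equivariance of each $f_i$, then use linearity of the action to pull $g\circ_Y$ outside the sum. The paper writes out the case $k=2$ explicitly and remarks that the general case follows similarly, whereas you spell out the induction on $k$ a bit more carefully, but the argument is the same.
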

\begin{proof}
    We have
    \begin{align}
        \alpha_1 f_1(g\circ_X x) + \alpha_2 f_2(g\circ_X x) =\alpha_1(g\circ_Y f_1(x)) + \alpha_2(g\circ_Y f_2(x)) 
        = g\circ_Y(\alpha_1 f_1(x) + \alpha_2 f_2(x))
    \end{align}
    and obtain the result similarly for $k$ summands.
\end{proof}

\begin{lemma}\label{app_mult_eq}
    Suppose group $G$ action $\circ$ is multiplicative, i.e.
    \begin{align*}
        g\circ_{Y} (xy) =(g\circ_{X} x)(g \circ_{X} y),\qquad \forall g\in G, \quad x,y\in X.
    \end{align*}
    The product of two $G$-equivariant mappings $f_1,f_2:X\rightarrow Y$ is $G$-equivariant:
    \begin{align}
    g\circ_{Y} (f_1(x) f_2(x)) = f_1(g\circ_X x) f_2(g\circ_X x)
    \end{align}
    for any $g\in G$ and $x\in X$.
\end{lemma}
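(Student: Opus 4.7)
The plan is to apply the two hypotheses in sequence: first use multiplicativity of the action to distribute $g \circ_Y$ over the product $f_1(x) f_2(x)$, and then use the individual equivariance of $f_1$ and $f_2$ to pull the action inside each factor. Concretely, starting from $g \circ_Y (f_1(x) f_2(x))$, multiplicativity rewrites this as $(g \circ f_1(x))(g \circ f_2(x))$, and then the equivariance assumptions $g \circ f_i(x) = f_i(g \circ_X x)$ for $i=1,2$ finish the argument.

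There is no real obstacle here; the entire content of the lemma is the bookkeeping of these two one-step rewrites. The only subtle point worth flagging is the domain/codomain conventions: as stated, the multiplicativity hypothesis $g \circ_Y (xy) = (g \circ_X x)(g \circ_X y)$ implicitly assumes a single underlying set (or compatible actions on $X$ and $Y$) so that the products $f_1(x) f_2(x)$ and $(g \circ f_1(x))(g \circ f_2(x))$ make sense; this matches the way the lemma will actually be used in Section~\ref{section_mappings}, where $X = Y = \C_{p,q,r}$ and the action is $\tilde{\ad}_T$, for which multiplicativity on the relevant group is exactly the statement of Lemma~\ref{app_lemma_prop}. Under that convention the proof reduces to the two-step chain sketched above and is a direct analogue of Lemma~\ref{app_lem_lin}, with the role of additivity of the action replaced by multiplicativity.
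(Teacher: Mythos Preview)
Your proposal is correct and matches the paper's proof exactly: the paper's argument is the single chain $g\circ_{Y} (f_1(x) f_2(x)) = (g\circ_Y f_1(x)) (g\circ_Y f_2(x))=f_1(g\circ_X x) f_2(g\circ_X x)$, which is precisely your two-step rewrite. Your remark about the domain/codomain convention is a fair observation that the paper leaves implicit.
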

\begin{proof}
    We have
    \begin{align}
        &g\circ_{Y} (f_1(x) f_2(x)) = (g\circ_Y f_1(x)) (g\circ_Y f_2(x))=f_1(g\circ_X x) f_2(g\circ_X x).
    \end{align}
\end{proof}

\section{Degenerate and Non-degenerate Lipschitz, Generalized Lipschitz, and Orthogonal Groups Equivariance}\label{appendix_orthogonal}

In this section, we consider the relation between the degenerate and non-degenerate pseudo-orthogonal (complex orthogonal) groups and the ordinary and generalized Lipschitz groups.
We extend the work presented in \citealp{crum_book,br1,dereli,cgenn}, etc.

Consider the degenerate and non-degenerate \emph{pseudo-orthogonal group} (in the real case $V=\BR^{p,q,r}$) or \emph{complex orthogonal group} (in the complex case $V=\BC^{p+q,0,r}$) denoted by  $\OO(V,\q)$. It is defined as the Lie group of all linear transformations of an $n$-dimensional vector space $V$ that leave invariant a quadratic form $\q$ of signature $(p, q,r)$ if $V$ is real and $(p+q,0,r)$ if $V$ is complex:
\begin{align}\label{app_def_opq}
    \OO(V,\q):= \{\Phi:V\rightarrow V:\quad \mbox{$\Phi$ is linear, invertible,}\quad\q(\Phi(v))=\q(v),\quad \forall v\in V\}.
\end{align}
Note that 
\begin{align*}
    \OO(V,\q) \cong  \{A\in\GL(n,\F):\;\; A^{\mathrm{T}}\eta A = \eta\}.
\end{align*}
When considering both the real and complex cases, we refer to the group $\OO(V,\q)$ as the orthogonal group.

Let us use the following notation. The \emph{radical subspace} is denoted by $\Lambda^1_r:=\C^1_{0,0,r}$. We have $V=\C^1_{p,q,r}=\C^1_{p,q}\oplus \Lambda^1_r$. The subalgebra generated by the basis elements of $\Lambda^1_r$ is denoted by $\Lambda_r$ and is a \emph{Grassmann (exterior) algebra}.

Consider the following subgroup of the orthogonal group, which leaves invariant the radical subspace $\Lambda^1_r$:
\begin{eqnarray}
    \OO_{\Lambda^{1}_r}(V,\q) := \{\Phi\in\OO(V,q):\quad \Phi|_{\Lambda^1_r} = \mathrm{id}_{\Lambda^1_r}\}.\label{app_def_restr_opq}
\end{eqnarray}

The following statement about the matrix forms of the orthogonal $\OO(V,\q)$ and restricted orthogonal $\OO_{\Lambda^1_r}(V,\q)$ groups is well-known and considered, for example, in \citealp{crum_book,cgenn}. We use it in the proof of the main statements about the relation between the Lipschitz and orthogonal groups below.
\begin{lemma}\label{lemma_orthm}
    We have the following isomorphisms
    \begin{align}
    &\OO(V,\q) \cong \{
    \begin{pmatrix}
    A & 0_{(p+q)\times r} \\
    M & G
    \end{pmatrix}\},\qquad A\in \OO(\C^1_{p,q},\q|_{\C^1_{p,q}}),\qquad M\in \Mat_{r\times (p+q)}(\F),\qquad G\in \GL(r,\F);
    \\
    &\OO_{\Lambda^1_r}(V,\q)\cong \{
    \begin{pmatrix}
    A & 0_{(p+q)\times r} \\
    M & \I_{r}
    \end{pmatrix}\},\qquad A\in \OO(\C^1_{p,q},\q|_{\C^1_{p,q}}),\qquad M\in \Mat_{r\times (p+q)}(\F),\label{is2}
\end{align}
where $\OO(\C^1_{p,q},\q|_{\C^1_{p,q}})$ is the non-degenerate orthogonal group of transformations of $\C^1_{p,q}$ with a non-degenerate quadratic form $\q|_{\C^1_{p,q}}$; $0_{(p+q)\times r}$ is the zero matrix of size $(p+q)\times r$; $\Mat_{r\times (p+q)}(\F)$ is a set of arbitrary matrices of size $r\times(p+q)$ with coefficients in $\F$; $\I_{r}$ is the identity matrix of size $r\times r$; $\GL(r,\F)$ is the general linear group acting on an $r$-dimensional vector space over $\F$.
\end{lemma}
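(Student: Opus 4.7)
The plan is to pick a basis adapted to the decomposition $V=\C^1_{p,q}\oplus\Lambda^1_r$ and translate the defining conditions of $\OO(V,\q)$ into block-matrix equations. Concretely, I would choose $(e_1,\ldots,e_{p+q},e_{p+q+1},\ldots,e_n)$ so that $(e_1,\ldots,e_{p+q})$ spans $\C^1_{p,q}$ and diagonalizes $\q|_{\C^1_{p,q}}$ with $p$ entries $+1$ and $q$ entries $-1$, while $(e_{p+q+1},\ldots,e_n)$ spans the radical $\Lambda^1_r$. In this basis the Gram matrix of $\q$ has the block form
\begin{equation*}
\eta \;=\; \begin{pmatrix} \eta_0 & 0 \\ 0 & 0 \end{pmatrix}, \qquad \eta_0 = \mathrm{diag}(\underbrace{+1,\ldots,+1}_{p},\underbrace{-1,\ldots,-1}_{q}),
\end{equation*}
and $\eta_0$ is non-singular.

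Next I would write an arbitrary invertible linear $\Phi:V\to V$ in the same basis as
\begin{equation*}
\Phi \;=\; \begin{pmatrix} A & B \\ M & G \end{pmatrix},\qquad A\in\Mat_{(p+q)\times(p+q)}(\F),\; B\in\Mat_{(p+q)\times r}(\F),\; M\in\Mat_{r\times(p+q)}(\F),\; G\in\Mat_{r\times r}(\F),
\end{equation*}
and expand $\Phi^{\mathrm{T}}\eta\Phi=\eta$ block-by-block. This yields $A^{\mathrm{T}}\eta_0 A=\eta_0$, $A^{\mathrm{T}}\eta_0 B=0$, $B^{\mathrm{T}}\eta_0 A=0$, and $B^{\mathrm{T}}\eta_0 B=0$. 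The first equation says exactly $A\in\OO(\C^1_{p,q},\q|_{\C^1_{p,q}})$, so in particular $A$ is invertible; combined with non-singularity of $\eta_0$, the second equation forces $B=0$, and the remaining two become automatic. With $B=0$, invertibility of $\Phi$ is equivalent to $\det(A)\det(G)\neq 0$, and since $A$ is already invertible this reduces to $G\in\GL(r,\F)$; no constraint falls on $M$. This gives the first displayed isomorphism.

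For the restricted group $\OO_{\Lambda^1_r}(V,\q)$, I would add the condition $\Phi(e_j)=e_j$ for $j=p+q+1,\ldots,n$, which says that the last $r$ columns of $\Phi$ equal the last $r$ standard basis vectors. In block form this is precisely $B=0$ (already forced) and $G=\I_r$, while $A$ and $M$ remain as before, giving \eqref{is2}. No real obstacle arises; the one step that deserves to be stated explicitly is the passage from $A^{\mathrm{T}}\eta_0 B=0$ to $B=0$, which rests on the non-singularity of both $A$ (from the first block equation) and $\eta_0$ (from the choice of basis on $\C^1_{p,q}$).
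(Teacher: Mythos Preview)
Your proposal is correct and complete. The paper does not actually supply a proof of this lemma; it simply states that the result is well known and cites \citealp{crum_book,cgenn}. Your block-matrix computation is exactly the standard argument one would give: the key point, which you identify, is that $A^{\mathrm T}\eta_0 A=\eta_0$ makes $A$ invertible, and then $A^{\mathrm T}\eta_0 B=0$ together with non-singularity of $\eta_0$ forces $B=0$. One tiny cosmetic remark: in the complex case the paper's convention gives $\eta_0=\I_{p+q}$ rather than $\mathrm{diag}(+1,\ldots,+1,-1,\ldots,-1)$, but since your argument only uses that $\eta_0$ is non-singular this changes nothing.
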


\begin{theorem}[Kernel of $\tilde{\ad}^1$]\label{app_thm_ker}
    The kernel of the restricted twisted adjoint representation $\tilde{\ad}^1$ acting on the Lipschitz group $\tilde{\ad}^1:\tilde{\Gamma}^1_{p,q,r}\rightarrow\Aut(\C_{p,q,r})$ has the following form:
\begin{align}\label{app_ker_ad_g}
    \ker(\tilde{\ad}^1:\tilde{\Gamma}^1_{p,q,r}\rightarrow\Aut(\C_{p,q,r})) = \Lambda^{\times}_r.
\end{align}
\end{theorem}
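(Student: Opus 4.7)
The plan is to unfold the kernel condition directly from the definition of $\tilde{\ad}^1$, recognise it as membership in the twisted centraliser of $\C^1_{p,q,r}$, and then invoke the explicit form $\check{\Z}^1_{p,q,r} = \Lambda_r$ from Remark \ref{app_cases_we_use}.

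First, I would spell out what it means for $T \in \tilde{\Gamma}^1_{p,q,r}$ to lie in the kernel: by the definition $\tilde{\ad}^1_T(v) = \widehat{T} v T^{-1}$ for $v \in \C^1_{p,q,r}$, the condition $\tilde{\ad}^1_T = \mathrm{id}_{\C^1_{p,q,r}}$ is equivalent to $\widehat{T} v = v T$ for all $v \in \C^1_{p,q,r}$. This is precisely the defining condition \eqref{app_def_chz} for $T$ to belong to the twisted centraliser $\check{\Z}^1_{p,q,r}$.

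Next, I would apply the explicit computation $\check{\Z}^1_{p,q,r} = \Lambda_r$ recorded in \eqref{ch_cc_1} of Remark \ref{app_cases_we_use}. Combined with the requirement $T \in \C^{\times}_{p,q,r}$ that is built into the Lipschitz group, this gives $T \in \Lambda_r \cap \C^{\times}_{p,q,r} = \Lambda_r^{\times}$, which proves the inclusion $\ker(\tilde{\ad}^1|_{\tilde{\Gamma}^1_{p,q,r}}) \subseteq \Lambda_r^{\times}$.

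For the reverse inclusion, I need to check that every $T \in \Lambda_r^{\times}$ actually belongs to $\tilde{\Gamma}^1_{p,q,r}$ and acts trivially on $\C^1_{p,q,r}$ via $\tilde{\ad}^1$. Given $T \in \Lambda_r^{\times} = \check{\Z}^{1\times}_{p,q,r}$, for any $v \in \C^1_{p,q,r}$ we have $\widehat{T} v = v T$, hence $\widehat{T} v T^{-1} = v \in \C^1_{p,q,r}$. This simultaneously verifies the Lipschitz condition \eqref{app_def_lg} (so $T \in \tilde{\Gamma}^1_{p,q,r}$) and that $\tilde{\ad}^1_T = \mathrm{id}$. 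The only subtle point — and the one I would be careful about — is the very last consistency check: that $\Lambda_r^{\times} \subseteq \tilde{\Gamma}^1_{p,q,r}$, which is immediate once the twisted centraliser computation is in hand. Everything else is essentially a bookkeeping exercise and is consistent with $\ker(\tilde{\ad}) = \Lambda_r^{\times}$ from Lemma~\ref{app_lemma_ker}, as expected.
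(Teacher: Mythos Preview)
Your proposal is correct and follows essentially the same route as the paper. The paper writes the kernel as $\tilde{\Gamma}^1_{p,q,r}\cap\{T\in\C^{\times}_{p,q,r}:\widehat{T}vT^{-1}=v,\ \forall v\in\C^1_{p,q,r}\}$, identifies the second set with $\Lambda_r^{\times}$ by invoking Lemma~\ref{app_lemma_ker} (whose proof is precisely your twisted-centraliser computation $\check{\Z}^1_{p,q,r}=\Lambda_r$), and then checks $\Lambda_r^{\times}\subseteq\tilde{\Gamma}^1_{p,q,r}$ exactly as you do; the only difference is that you cite \eqref{ch_cc_1} directly rather than routing through the lemma.
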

\begin{proof}
    We have
\begin{align}
    \ker(\tilde{\ad}^1:\tilde{\Gamma}^1_{p,q,r}\rightarrow\Aut(\C_{p,q,r})) &= \{T\in\tilde{\Gamma}^1_{p,q,r}:\quad \widehat{T}vT^{-1}=v,\quad \forall v\in\C^1_{p,q,r}\}
    \\
    &= \tilde{\Gamma}^1_{p,q,r}\cap \{T\in\C^{\times}_{p,q,r}:\quad \widehat{T}vT^{-1}=v,\quad \forall v\in\C^1_{p,q,r}\}
    \\
    &=\tilde{\Gamma}^1_{p,q,r}\cap \Lambda^{\times}_r,\label{app_f1}
\end{align}
where in \eqref{app_f1}, we apply the statement \eqref{lemma_kertad} of Lemma \ref{app_lemma_ker}. Note that $\Lambda^{\times}_r\subseteq\tilde{\Gamma}^1_{p,q,r}$, since for any $T\in\Lambda^{\times}_r$ and any $v\in\C^1_{p,q,r}$, we have $\widehat{T}vT^{-1}=vTT^{-1}=v\in\C^1_{p,q,r}$ again by Lemma \ref{app_lemma_ker}. Thus, we obtain \eqref{app_ker_ad_g}.

\end{proof}

\begin{remark}\label{app_rem_bq}
Note that for any vectors $v,v_1,v_2\in V$, we have
\begin{align}
    2\bb(v_1,v_2)e=v_1v_2+v_2v_1,\qquad \q(v)e = v^2.
\end{align}
\end{remark}

In Remark \ref{rem_geom}, we consider how reflections can be represented in geometric algebras. We use this particular case of the relation between elements of the orthogonal groups $\OO(V,\q)$ and the Lipschitz groups $\tilde{\Gamma}^1_{p,q,r}$ in the proof of Theorem \ref{app_thm_ker_im} below.
\begin{remark}\label{rem_geom}
    The mapping $\tilde{\ad}^1_v$, where $v\in\C^{1\times}_{p,q}$ is an invertible vector, acts on an arbitrary vector $x\in\C^1_{p,q,r}$ as a reflection of a vector $x$  across the hyperplane orthogonal to the vector $v$:
    \begin{align}
        \tilde{\ad}^1_v(x) = \widehat{v}xv^{-1} = -vxv^{-1}=x-(xv+vx)v^{-1} =x-2\bb(x,v)\frac{v^2}{\q(v)}v^{-1}= x - 2\frac{\bb(x,v)}{\bb(v,v)}v,
    \end{align}
    where in the right-hand side, we have the difference between the vector $x$ and twice the projection of the vector $x$ onto the vector $v$.
\end{remark}

In Remarks \ref{rem_lg_vec} and \ref{rem_02lg}, we consider several examples of elements that belong to the Lipschitz group $\tilde{\Gamma}^1_{p,q,r}$ \eqref{def_lg}.

\begin{remark}\label{rem_lg_vec}
All invertible non-degenerate vectors belong to the Lipschitz group:
\begin{align}
\C^{1\times}_{p,q}\subseteq\tilde{\Gamma}^{1}_{p,q,r},
\end{align}
since, by Remark \ref{rem_geom}, we have $\tilde{\ad}^1_v(x)=x - 2\frac{\bb(x,v)}{\bb(v,v)}v\in\C^1_{p,q,r}$ for any $x\in\C^1_{p,q,r}$ and $v\in\C^{1\times}_{p,q}$.
\end{remark}

\begin{remark}\label{rem_02lg}
    Any element $U\in\C_{p,q,r}$ of the form
    \begin{align}
        U = e+me_{ab}\in\C^0\oplus\C^2_{p,q,r},\qquad \forall m\in\F,\quad \forall e_a\in\C^1_{p,q},\quad \forall e_{b}\in\Lambda^1_r,
    \end{align}
    is invertible and belongs to the Lipschitz group:
    \begin{align}
        U\in\tilde{\Gamma}^1_{p,qr}.\label{fff3}
    \end{align} 
    Let us prove these statement.
    Firstly, $U$ is invertible, since $(e+me_{ab})(e-me_{ab})=e$. Secondly $U$ preserves the grade-$1$ subspace under $\tilde{\ad}$, since
    \begin{align}
        &\widehat{U}e_aU^{-1}=(e+me_{ab})e_a(e-me_{ab})=(e_a-m\eta_{aa}e_b)(e-me_{ab}) = e_a -2 m\eta_{aa}e_b\in\C^1_{p,q,r},
        \\
        &\widehat{U}e_bU^{-1}=(e+me_{ab})e_b(e-me_{ab})=e_b\in\C^1_{p,q,r},
        \\
         &\widehat{U}e_cU^{-1}=(e+me_{ab})e_c(e-me_{ab})=(e_c+me_{ab}e_c)(e-me_{ab})=e_c\in\C^1_{p,q,r},\quad \forall c=1,\ldots,n,\quad c\neq a,b.
    \end{align}
    Thus, by definition of the Lipschitz group $\tilde{\Gamma}^1_{p,q,r}$ \eqref{def_lg}, we get \eqref{fff3}.
\end{remark}

Let us prove auxiliary Lemmas \ref{lemma_geom2} and \ref{lemma_dop1}, which we use in the proof of Theorem \ref{app_thm_ker_im}.

\begin{lemma}\label{lemma_geom2}
In the case $r\neq 0$ and $r\neq n$, consider any matrix of the form
\begin{align}
 B_{i,j} = \begin{pmatrix}
     I_{p+q} & 0 \\
     M_{i,j} & I_r \\
 \end{pmatrix}\in\OO_{\Lambda^1_r}(V,\q),
\end{align}
where $I_{p+q}$ and $I_r$ are the identity matrices of the sizes $(p+q)\times(p+q)$ and $r\times r$ respectively, and
$M_{i,j}\in\Mat_{r,p+q}(\F)$ has at most one non-zero element $m_{ij}$, which is in the $i$-th row and $j$-th column, for fixed $i=1,\ldots,r$, $j=1,\ldots,p+q$.  
Then for the following multivector  $$U_{i,j} := e+c_{ij}e_j e_{p+q+i}\in(\C^0\oplus\C^2_{p,q,r})^{\times},\qquad c_{ij}:=-\frac{m_{ij}}{2\eta_{jj}}\in\F,\qquad e_j\in\C^1_{p,q},\qquad e_{p+q+i}\in\Lambda^1_r,$$
we have \begin{align}\label{st1}B_{i,j}=\tilde{\ad}^1_{U_{i,j}}.\end{align}
In the cases $r=n$ or $r=0$, for the identity matrices $I_r$ or $I_{p+q}$ respectively, we have $I_r=\tilde{\ad}^1_{e}$ and $I_{p+q}=\tilde{\ad}^1_{e}$ respectively.
\end{lemma}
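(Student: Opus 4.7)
The plan is to invoke Remark \ref{rem_02lg} directly. Note that $U_{i,j} = e + c_{ij}\, e_j e_{p+q+i}$ is precisely of the form $e+m e_{ab}$ with $a=j\in\{1,\dots,p+q\}$, $b=p+q+i\in\{p+q+1,\dots,n\}$, and $m=c_{ij}$. Hence $e_a=e_j\in\C^1_{p,q}$ and $e_b=e_{p+q+i}\in\Lambda^1_r$, so by Remark \ref{rem_02lg} the element $U_{i,j}$ is invertible and belongs to $\tilde{\Gamma}^1_{p,q,r}$; in particular $\tilde{\ad}^1_{U_{i,j}}$ is a well-defined map $\C^1_{p,q,r}\to\C^1_{p,q,r}$.

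Next I would compute the action of $\tilde{\ad}^1_{U_{i,j}}$ on each generator of $V=\C^1_{p,q,r}$ using the three identities already established inside Remark \ref{rem_02lg}. With $a=j$, $b=p+q+i$, $m=c_{ij}$, these give
\begin{align*}
\tilde{\ad}^1_{U_{i,j}}(e_j) &= e_j - 2c_{ij}\eta_{jj}\, e_{p+q+i} = e_j + m_{ij}\, e_{p+q+i},\\
\tilde{\ad}^1_{U_{i,j}}(e_{p+q+i}) &= e_{p+q+i},\\
\tilde{\ad}^1_{U_{i,j}}(e_k) &= e_k, \qquad k\neq j,\ k\neq p+q+i,
\end{align*}
where the substitution $c_{ij}=-m_{ij}/(2\eta_{jj})$ is what makes the coefficient of $e_{p+q+i}$ collapse to exactly $m_{ij}$. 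This is the only nontrivial algebraic step in the argument.

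Then I would compare this with the action of $B_{i,j}$ on the ordered basis $(e_1,\dots,e_{p+q},e_{p+q+1},\dots,e_n)$. Because $M_{i,j}$ has a single non-zero entry $m_{ij}$ in row $i$ and column $j$, the block-lower-triangular matrix $B_{i,j}$ fixes every basis vector $e_k$ with $k\neq j$ and sends $e_j\mapsto e_j+m_{ij}\, e_{p+q+i}$. This matches the formulas above generator-by-generator, so by linearity $\tilde{\ad}^1_{U_{i,j}}$ and $B_{i,j}$ coincide on all of $V$, proving \eqref{st1}.

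Finally, the degenerate endpoints are immediate: if $r=0$ or $r=n$, the only matrix considered is the identity, and the definition of $\tilde{\ad}$ gives $\tilde{\ad}^1_e(v)=eve^{-1}=v$ for every $v\in\C^1_{p,q,r}$, so $I_{p+q}=\tilde{\ad}^1_e$ and $I_r=\tilde{\ad}^1_e$ respectively. I expect the main obstacle to be purely bookkeeping --- keeping track of the index conventions between the matrix labelling ($i$ for the radical row, $j$ for the non-radical column) and the geometric-algebra labelling ($e_j$ versus $e_{p+q+i}$), together with the sign produced by $\widehat{U_{i,j}}$ --- since the actual computation is a one-line consequence of Remark \ref{rem_02lg} once the right substitution for $c_{ij}$ is made.
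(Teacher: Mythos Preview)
Your proposal is correct and follows essentially the same approach as the paper: both verify the identity by checking the action on each basis vector of $V$ and invoking linearity. Your version is slightly more streamlined in that you cite the three generator identities already proved inside Remark~\ref{rem_02lg} rather than recomputing them (the paper redoes the computation for $e_j$ and the $e_k$ with $k\neq j$, and handles all of $\Lambda^1_r$ at once via the commutation statement~\eqref{XVVX_r} of Lemma~\ref{app_lemma_XVVX}), but the substance is identical.
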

\begin{proof}
In the cases $r=n$ or $r=0$, we have $\tilde{\ad}^1_{e}(v)=e v e^{-1}=v$ for any $v\in\C^1_{p,q,r}$ and get the statement.

Further in the proof, we consider the case $r\neq0$ and $r\neq n$.
    Without loss of generality, let us prove the statement for $B_{1,1}$.
    
    Since $B_{1,1}\in\OO_{\Lambda^1_r}(V,\q)$ \eqref{is2}, it leaves invariant $\Lambda^1_r$. Thus, $B_{1,1}v=\mathrm{id} v$ for any $v\in\Lambda^1_r$. On the other hand, for $U_{1,1}:=e+c_{11}e_{1}e_{p+q+1}$, we have $\tilde{\ad}^1_{U_{1,1}} (v) = U_{1,1}vU_{1,1}^{-1}=vU_{1,1}U_{1,1}^{-1}=v$, for any $c_{ij}\in\F$, where we apply the statement \eqref{XVVX_r} of Lemma \ref{app_lemma_XVVX}, since $U_{1,1}\in\C^{(0)}_{p,q,r}$ and $v\in\Lambda^1_r$. 

    Let us consider how $B_{1,1}$ acts on the vectors from a canonical basis of $\C^1_{p,q}$. 
    
    For the vector $v_1=(1,0,\ldots,0)\in V$ corresponding to $e_1$, we have 
    \begin{align}\label{f11}
        B_{1,1}v_1 = (1,0,\ldots,0,m_{11},0,\ldots,0),
    \end{align}
    where $m_{11}$ is on the $(p+q+1)$-th position.
    On the other hand, for $U_{1,1}$ and $e_1$, we get
    \begin{align}
        \tilde{\ad}^1_{U_{1,1}}(e_1)= (e_1 -c_{11}\eta_{11}e_{p+q+1})(e-c_{11}e_{1}e_{p+q+1})=e_1-2\eta_{11}c_{11}e_{p+q+1}=e_1+m_{11}e_{p+q+1},
    \end{align}
    which corresponds to \eqref{f11}. 
    
    For any other vector $v_k\in V$, $k=2,\ldots,p+q$, corresponding to $e_k$, we have 
    \begin{align}\label{f22}
        B_{1,1}v_k = (0,\ldots,0,1,0,\ldots,0),
    \end{align}
    where $1$ is on the $k$-th position.
    On the other hand, for $U_{1,1}$ and $e_k$, we get
    \begin{align}
    \tilde{\ad}^1_{U_{1,1}}(e_k)=(e+c_{11}e_{1}e_{p+q+1})e_k(e-c_{11}e_{1}e_{p+q+1})=e_k,
    \end{align}
    which corresponds to \eqref{f22}. We have proved that $B_{1,1}v=\tilde{\ad}^1_{U_{1,1}}(v)$ for any $v$ from a canonical basis of $V$. By linearity, we get the statement \eqref{st1}. This completes the proof.
\end{proof}

\begin{lemma}\label{lemma_dop1}
We have 
    \begin{align}
        \widehat{X}V=VX,\qquad \forall X\in(\C^{(0)\times}_{p,q,r}\cup\C^{(1)\times}_{p,q,r})\Lambda^{\times}_r,\qquad \forall V\in\Lambda^{1\times}_r.
    \end{align}
\end{lemma}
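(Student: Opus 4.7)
The plan is to prove the identity by a direct computation that exploits the fact that every vector in the radical subspace $\Lambda^1_r$ anticommutes with all generators of $\C_{p,q,r}$. I would first establish this anticommutation and then extend it, by linearity and the graded structure, to arbitrary elements of pure parity and to arbitrary elements of $\Lambda_r$.

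The first step is the key observation: for any $V\in\Lambda^1_r$ and any generator $e_a$ of $\C_{p,q,r}$, $Ve_a = -e_a V$. Indeed, $V$ is a linear combination of the radical generators $e_{p+q+1},\ldots,e_n$, each of which pairs with every generator (including itself) to give $\eta = 0$; hence the defining relation $e_a e_b + e_b e_a = 2\eta_{ab}e$ yields $e_a V + V e_a = 0$ by bilinearity. Iterating this over a basis monomial gives $(e_{a_1}\cdots e_{a_k})V = (-1)^k V(e_{a_1}\cdots e_{a_k})$, and extending by linearity I conclude that for any element $Y\in\C^{(l)}_{p,q,r}$ of pure parity $l\in\{0,1\}$,
\begin{align*}
    YV = (-1)^l V Y = V\widehat{Y}.
\end{align*}
Decomposing an arbitrary $b\in\Lambda_r$ as $b = b_{(0)}+b_{(1)}$ and applying this to each piece, I obtain the same identity $bV = V\widehat{b}$ for every $b\in\Lambda_r$.

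Now, writing $X = ab$ with $a\in\C^{(0)\times}_{p,q,r}\cup\C^{(1)\times}_{p,q,r}$ and $b\in\Lambda^{\times}_r$, and using that the grade involution is an algebra homomorphism and an involution, I compute
\begin{align*}
    \widehat{X}V = \widehat{a}\widehat{b}V = \widehat{a}\,(V\widehat{\widehat{b}}) = \widehat{a}Vb = (V\widehat{\widehat{a}})\,b = Vab = VX,
\end{align*}
where the identity $YV = V\widehat{Y}$ is applied first with $Y = \widehat{b}\in\Lambda_r$ and then with $Y = \widehat{a}$, which still has pure parity. This yields the claim. The only subtle point is the anticommutation step, in particular the degenerate case $V^2 = 0$, which must be handled via the bilinear form rather than by thinking of $V$ as invertible inside $\Lambda_r$; once that is settled, the remainder is formal algebra. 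There is no serious obstacle, but it is worth noting that the hypothesis $a\in\C^{(0)\times}_{p,q,r}\cup\C^{(1)\times}_{p,q,r}$ (pure parity) is essential, since the identity $aV = V\widehat{a}$ fails for elements of mixed parity.
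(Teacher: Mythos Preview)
Your proof is correct and more elementary than the paper's. Both arguments factor $X=ab$ (the paper writes $WH$) and treat the two factors in turn, but where the paper invokes earlier structural lemmas---$\Lambda^{\times}_r=\ker(\tilde{\ad})$ for the $\Lambda_r$-factor, and the centralizer description of Lemma~\ref{app_lemma_XVVX} for the pure-parity factor, with a case split on parity and (in the odd case with $r\neq n$) the device $W=e_i(\eta_{ii}e_iW)$ to reduce odd to even---your argument derives everything from the single direct observation that every generator anticommutes with every $V\in\Lambda^1_r$, which gives $YV=V\widehat{Y}$ uniformly. This makes the proof self-contained and eliminates the case distinctions.

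One minor correction: your closing remark that the pure-parity hypothesis on $a$ is ``essential'' because ``$aV=V\widehat{a}$ fails for elements of mixed parity'' is not right. Your own computation, applied to the even and odd parts of an arbitrary $a\in\C_{p,q,r}$, shows $aV=V\widehat{a}$ for \emph{every} $a$; hence $\widehat{X}V=VX$ actually holds for all $X\in\C_{p,q,r}$, and the hypothesis in the lemma is not sharp. This does not affect the validity of your proof, only the accuracy of that side comment.
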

\begin{proof}
    Suppose $V\in\Lambda^{1\times}_r$ and $X=WH$, where $W\in(\C^{(0)\times}_{p,q,r}\cup\C^{(1)\times}_{p,q,r})$ and $H\in \Lambda^{\times}_r$. 
    
    Note that $\widehat{H}V=VH$ by the statement \eqref{lemma_kertad} of Lemma \ref{app_lemma_ker}. 
    
    If $W\in\C^{(0)\times}_{p,q,r}$, then we have $\widehat{W}V=WV = VW$ by the statement \eqref{XVVX_r} of Lemma \ref{app_lemma_XVVX}. 
    
    Consider the case $W\in\C^{(1)\times}_{p,q,r}$. If $r=n$, then $W\in\Lambda^{(1)\times}_{r}$, and we get $\widehat{W}V = VW$ again by the statement \eqref{lemma_kertad} of Lemma \ref{app_lemma_ker}. If $r\neq n$, then there exists such generator $e_i$ that $(e_i)^2\neq0$, and we can always represent $W$ as $W=e_i( \eta_{ii}e_i W)$. Since $\eta_{ii}e_i W\in\C^{(0)}_{p,q,r}$, we again get $\widehat{(\eta_{ii}e_i W)}V = V(\eta_{ii}e_i W)$ by the statement \eqref{XVVX_r} of Lemma \ref{app_lemma_XVVX}. Also we have $\widehat{e_i}V =-e_iV= Ve_i$, since $V\in\Lambda^{1}_{r}$ does not contain $e_i$. So, in this case, we get $\widehat{W}V = \widehat{e_i}\widehat{(\eta_{ii}e_iW)}V=\widehat{e_i}V\eta_{ii}e_iW=Ve_i\eta_{ii}e_iW=VW$ as well.

    Using the notes above, we finally get
    \begin{align}
    \widehat{X}V = \widehat{W}\widehat{H}V=\widehat{W}VH=VWH=VX,
    \end{align}
    and the statement is proved.
\end{proof}

\begin{theorem}[Image of $\tilde{\ad}^1$]\label{app_thm_ker_im} 
The image of the restricted twisted adjoint representation $\tilde{\ad}^1$ acting on the Lipschitz group $\tilde{\ad}^1:\tilde{\Gamma}^1_{p,q,r}\rightarrow\Aut(\C_{p,q,r})$ has the form:
\begin{align}\label{app_im_ad_g}
\im(\tilde{\ad}^1:\tilde{\Gamma}^1_{p,q,r}\rightarrow\Aut(\C_{p,q,r})) = \OO_{\Lambda^1_r}(V,\q).
\end{align}
\end{theorem}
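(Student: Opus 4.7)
The plan is to establish the two inclusions separately, using the block matrix decomposition for $\OO_{\Lambda^1_r}(V,\q)$ from Lemma \ref{lemma_orthm} together with the elementary building blocks already exhibited in Remarks \ref{rem_geom}, \ref{rem_lg_vec}, \ref{rem_02lg} and Lemma \ref{lemma_geom2}.

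For the inclusion $\im(\tilde{\ad}^1) \subseteq \OO_{\Lambda^1_r}(V,\q)$, I take an arbitrary $T \in \tilde{\Gamma}^1_{p,q,r}$ and verify three things about the map $\tilde{\ad}^1_T:\C^1_{p,q,r}\to\C^1_{p,q,r}$. It is linear and sends $V$ to $V$ by the definition \eqref{def_lg}. It preserves $\q$: by Theorems \ref{thm_subgroup} and \ref{thm_subgroupP}, $T \in (\C^{(0)\times}_{p,q,r}\cup\C^{(1)\times}_{p,q,r})\Lambda^{\times}_r$, so Lemma \ref{lemma_prop} gives multiplicativity of $\tilde{\ad}_T$, hence $(\tilde{\ad}^1_T(v))^2 = \tilde{\ad}_T(v^2) = \tilde{\ad}_T(\q(v)e) = \q(v)e$ by Remark \ref{app_rem_bq} and \eqref{f_1_3}. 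Invertibility of $\tilde{\ad}^1_T$ follows from $T^{-1}\in\tilde{\Gamma}^1_{p,q,r}$ and the same multiplicativity argument applied to $T^{-1}$. Finally, $\tilde{\ad}^1_T$ fixes $\Lambda^1_r$ pointwise: for any $v\in\Lambda^1_r$, Lemma \ref{lemma_dop1} gives $\widehat{T}v = vT$, so $\widehat{T}vT^{-1}=v$.

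For the reverse inclusion $\OO_{\Lambda^1_r}(V,\q) \subseteq \im(\tilde{\ad}^1)$, I use Lemma \ref{lemma_orthm} to write any $\Phi\in\OO_{\Lambda^1_r}(V,\q)$ as a block matrix $\begin{pmatrix} A & 0 \\ M & I_r \end{pmatrix}$ with $A\in\OO(\C^1_{p,q},\q|_{\C^1_{p,q}})$ and $M\in\Mat_{r\times(p+q)}(\F)$, and factor it as
\begin{equation*}
\begin{pmatrix} A & 0 \\ M & I_r \end{pmatrix} = \begin{pmatrix} I_{p+q} & 0 \\ M & I_r \end{pmatrix}\begin{pmatrix} A & 0 \\ 0 & I_r \end{pmatrix}.
\end{equation*}
The block $\begin{pmatrix}A & 0\\ 0 & I_r\end{pmatrix}$ extends an element of the non-degenerate orthogonal group $\OO(\C^1_{p,q},\q|_{\C^1_{p,q}})$ by the identity on $\Lambda^1_r$. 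By Cartan--Dieudonn\'e, $A$ is a product of reflections, each of which by Remark \ref{rem_geom} equals $\tilde{\ad}^1_v$ for some invertible $v\in\C^{1\times}_{p,q}\subseteq\tilde{\Gamma}^1_{p,q,r}$ (Remark \ref{rem_lg_vec}). Since each such $v$ acts trivially on $\Lambda^1_r$ (by Lemma \ref{lemma_dop1} applied to $X=v\in\C^{(1)\times}_{p,q,r}$), the product realizes the whole block.

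The second factor $\begin{pmatrix}I_{p+q}&0\\M&I_r\end{pmatrix}$ decomposes as a product of the elementary matrices $B_{i,j}$ from Lemma \ref{lemma_geom2}, each of which is realized as $\tilde{\ad}^1_{U_{i,j}}$ with $U_{i,j}\in(\C^0\oplus\C^2_{p,q,r})^{\times}\cap\tilde{\Gamma}^1_{p,q,r}$ by Remark \ref{rem_02lg}. Multiplying the resulting Lipschitz elements together yields a single $T\in\tilde{\Gamma}^1_{p,q,r}$ with $\tilde{\ad}^1_T=\Phi$. The main technical obstacle is handling the degenerate cases $r=0$ and $r=n$ (where some of these building blocks collapse) and ensuring that the order of multiplication in the $B_{i,j}$ factorization matches the chosen order of the Lipschitz product; this is a bookkeeping issue rather than a conceptual one, and reduces to the observation that distinct $e_je_{p+q+i}$ bivectors involved commute in the required manner modulo $\ker(\tilde{\ad}^1)=\Lambda^{\times}_r$ from Theorem \ref{app_thm_ker}.
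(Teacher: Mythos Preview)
Your approach is essentially identical to the paper's: both inclusions are handled the same way, using Lemma \ref{lemma_prop}, Remark \ref{app_rem_bq}, Theorem \ref{thm_subgroupP}, and Lemma \ref{lemma_dop1} for $\im\subseteq\OO_{\Lambda^1_r}$, and then the block decomposition of Lemma \ref{lemma_orthm} together with Cartan--Dieudonn\'e, Remarks \ref{rem_geom}, \ref{rem_lg_vec}, \ref{rem_02lg}, and Lemma \ref{lemma_geom2} for surjectivity.

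One small slip worth correcting: your block factorization
\[
\begin{pmatrix} A & 0 \\ M & I_r \end{pmatrix} = \begin{pmatrix} I_{p+q} & 0 \\ M & I_r \end{pmatrix}\begin{pmatrix} A & 0 \\ 0 & I_r \end{pmatrix}
\]
is not right as written, since the right-hand side multiplies out to $\begin{pmatrix} A & 0 \\ MA & I_r \end{pmatrix}$. The paper uses the opposite order, $\begin{pmatrix} A & 0 \\ 0 & I_r \end{pmatrix}\begin{pmatrix} I_{p+q} & 0 \\ M & I_r \end{pmatrix}$, which does give the desired matrix; alternatively you could keep your order and replace $M$ by $MA^{-1}$. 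This is purely cosmetic and does not affect the argument. Your remark about the $B_{i,j}$ commuting is also harmless here, since the unipotent blocks $\begin{pmatrix} I & 0 \\ M_{i,j} & I\end{pmatrix}$ genuinely commute as matrices (their product has lower-left block $\sum M_{i,j}$), so no appeal to $\ker(\tilde{\ad}^1)$ is needed.
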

\begin{proof}
 Firstly, we prove that $\im(\tilde{\ad}^1:\tilde{\Gamma}^1_{p,q,r}\rightarrow\Aut(\C_{p,q,r})) \subseteq \OO_{\Lambda^1_r}(V,\q)$, i.e. that $\tilde{\ad}_T^1\in\OO_{\Lambda^1_r}(V,\q)$ for any $T\in\tilde{\Gamma}^1_{p,q,r}$. Suppose $T\in\tilde{\Gamma}^1_{p,q,r}$, then $\tilde{\ad}^1_T$ is linear by Lemma \ref{lemma_prop}. Moreover,  $\tilde{\ad}^1_T$ is invertible with the inverse $\tilde{\ad}^1_{T^{-1}}$, since 
\begin{align}
    \tilde{\ad}_{T^{-1}}^1(\tilde{\ad}^1_T(v)) = \widehat{T^{-1}} (\widehat{T}vT^{-1})T =v,\qquad \forall v\in\C^1_{p,q,r}.
\end{align}
Now, we need to show that
\begin{align}\label{nsh1}
\q(\tilde{\ad}^1_T(v))=\q(v),\qquad \forall v\in\C^1_{p,q,r}.
\end{align}
We have
\begin{align}
    \q(\tilde{\ad}^1_T(v)) = \q(\tilde{\ad}_T(v)) = \tilde{\ad}_T(v)\tilde{\ad}_T(v) = \tilde{\ad}_T(v^2),
\end{align}
where we apply Remark \ref{app_rem_bq}, since $\tilde{\ad}_T(v)\in V$, and multiplicativity of $\tilde{\ad}_T$ (Lemma \ref{lemma_prop}), since $T\in\tilde{\Gamma}^1_{p,q,r}\subseteq(\C^{(0)\times}_{p,q,r}\cup\C^{(1)\times}_{p,q,r})\Lambda^{\times}_r$ by Theorem \ref{thm_subgroupP}. Since $v^2=\q(v)$ again by  Remark \ref{app_rem_bq}, we finally get
\begin{align}
     \q(\tilde{\ad}^1_T(v)) = \tilde{\ad}_T(\q(v)) = \q(v),
\end{align}
where in the last equality we use the property that $\tilde{\ad}_T$ leaves invariant the scalars by the statement \eqref{f_1_3} of  Lemma \ref{lemma_prop}. The statement \eqref{nsh1} is proved, i.e. we have shown that $\tilde{\ad}^1_T\in\OO(V,\q)$. To prove $\tilde{\ad}_T^1\in\OO_{\Lambda^1_r}(V,\q)$, we finally need to show that $\tilde{\ad}_T^1|_{\Lambda^1_r}=\mathrm{id}_{\Lambda^1_r}$. This statement is true because for any $v\in\Lambda^1_r$ and $T\in\tilde{\Gamma}^1_{p,q,r}\subseteq(\C^{(0)\times}_{p,q,r}\cup\C^{(1)\times}_{p,q,r})\Lambda^{\times}_r$ (Theorem \ref{thm_subgroupP}), we have $\tilde{\ad}^1_T(v)=\widehat{T}vT^{-1}=vTT^{-1} = v$ by Lemma \ref{lemma_dop1}. 

Now let us prove $\OO_{\Lambda^1_r}(V,\q)\subseteq\im(\tilde{\ad}^1:\tilde{\Gamma}^1_{p,q,r}\rightarrow\Aut(\C_{p,q,r}))$, i.e. let is prove the surjectivity of the mapping $\tilde{\ad}^1$. 
We have by Lemma \ref{lemma_orthm}:
\begin{align}
    \OO_{\Lambda^1_r}(V,\q)\cong \{
    \begin{pmatrix}
    A & 0 \\
    M & \I_{r}
    \end{pmatrix}\},\qquad A\in \OO(\C^1_{p,q},\q|_{\C^{1}_{p,q}}),\qquad M\in\Mat_{r\times (p+q)}(\F).
\end{align}
Suppose $\Phi\in \OO_{\Lambda^1_r}(V,\q)$. Then, there exist $A\in \OO(\C^1_{p,q},\q|_{\C^{1}_{p,q}})$ and $M\in\Mat_{r\times (p+q)}(\F)$ such  that
\begin{align}
    \Phi = 
    \begin{pmatrix}
        A & 0 \\
        M & I_r
    \end{pmatrix}=
    \begin{pmatrix}
        A & 0 \\
        0 & I_r
    \end{pmatrix}\begin{pmatrix}
        I_{p+q} & 0 \\
        M & I_r
    \end{pmatrix}=
    \begin{pmatrix}
        A_1 & 0 \\
        0 & I_r
    \end{pmatrix}\cdots \begin{pmatrix}
        A_k & 0 \\
        0 & I_r
    \end{pmatrix}\begin{pmatrix}
        I_{p+q} & 0 \\
        M_{1,1} & I_r
    \end{pmatrix}\cdots
    \begin{pmatrix}
        I_{p+q} & 0 \\
        M_{r,p+q} & I_r
    \end{pmatrix},
\end{align}
where $A=A_1\cdots A_k$, $k\leq p+q$, and $A_1,\ldots,A_k\in\OO(\C^1_{p,q},\q|_{\C^{1}_{p,q}})$ are matrices representing reflections, by the well-known Cartan–Dieudonné theorem \cite{dieudonne1971}, which states that every orthogonal transformation in a $(p+q)$-dimensional space with a non-degenerate symmetric bilinear form can be represented as a composition of at most $p+q$ reflections; and $M_{i,j}\in\Mat_{r\times(p+q)}(\F)$, $i=1,\ldots,r$, $j=1,\ldots,p+q$ is a matrix with at most one non-zero element, which is on the $i$-th row and $j$-th column, such that $M=\sum_{i=1,\ldots, r}\sum_{j=1,\ldots,p+q}M_{i,j}$.

Note that each reflection matrix $A_i$, $i=1,\ldots, k$, can be associated with some vector $v_i\in\C^1_{p,q}$, so that $A_i=\tilde{\ad}^1_{v_i}$, by Remark \ref{rem_geom}. Any matrix \begin{align}    B_{i,j}=\begin{pmatrix} I_{p+q} & 0 \\
M_{i,j} & I_r\end{pmatrix}
\end{align}
can be associated with some multivector $\gamma_{i,j}=  e+ m_{i,j}e_je_{p+q+i}\in\C^0\oplus\C^2_{p,q,r}$, where $m_{i,j}\in\F$, $e_j\in\C^1_{p,q}$, and $e_{p+q+i}\in\Lambda^1_r$, so that $B_{i,j}=\tilde{\ad}^1_{\gamma_{i,j}}$, by Lemma \ref{lemma_geom2}.
Consider the following multivector, which consists of $k+p+q$ factors:
\begin{align}
T = v_1\cdots v_k \gamma_{1,1}\cdots \gamma_{r,p+q}.
\end{align}
Note that $T\in\tilde{\Gamma}^1_{p,q,r}$, since $v_1,\ldots,v_k\in\tilde{\Gamma}^1_{p,q,r}$ by Remark \ref{rem_lg_vec} and $\gamma_{1,1},\ldots,\gamma_{r,p+q}\in\tilde{\Gamma}^1_{p,q,r}$ by Remark \ref{rem_02lg}.
We obtain for any $v\in V=\C^1_{p,q,r}$:
\begin{align}
\Phi(v) 
    &=\begin{pmatrix}
        A_1 & 0 \\
        0 & I_r
    \end{pmatrix}\cdots \begin{pmatrix}
        A_k & 0 \\
        0 & I_r
    \end{pmatrix}\begin{pmatrix}
        I_{p+q} & 0 \\
        M_{1,1} & I_r
    \end{pmatrix}\cdots
    \begin{pmatrix}
        I_{p+q} & 0 \\
        M_{r,p+q} & I_r
    \end{pmatrix}v
    \\
    &= \tilde{\ad}^1_{v_1}\Big(\tilde{\ad}^1_{v_k}\Big(\tilde{\ad}^1_{\gamma_{1,1}}\big(\cdots(\tilde{\ad}^1_{\gamma_{r,p+q}}(v)\big)\Big)\Big)
    \\
    &=\widehat{v_1}\cdots(\widehat{v_k}(\widehat{\gamma_{1,1}}\cdots(\widehat{\gamma_{r,p+q}}v\gamma_{r,p+q}^{-1})\cdots \gamma_{1,1}^{-1})v_k^{-1})\cdots v_1^{-1},
    \\
    &= \widehat{(v_1\cdots v_k \gamma_{1,1}\cdots \gamma_{r,p+q})}v (v_1\cdots v_k \gamma_{1,1}\cdots \gamma_{r,p+q})^{-1}
    \\
    &= \tilde{\ad}^1_{v_1\cdots v_k \gamma_{1,1}\cdots \gamma_{r,p+q}}(v) 
    \\
    &=\tilde{\ad}^1_T(v). 
\end{align}
Thus, for any $\Phi\in\OO_{\Lambda^1_r}(V,\q)$, there exists $T\in\tilde{\Gamma}^1_{p,q,r}$  such that $\Phi=\tilde{\ad}_T^1$, and this completes the proof.
\end{proof}

\begin{theorem}\label{app_thm_relog}
In the case of any degenerate or non-degenerate $\C_{p,q,r}$, the mapping $\tilde{\ad}^1$ defines the following isomorphism:
    \begin{align}\label{app_relog}
    \tilde{\ad}^1:\quad \bigslant{\tilde{\Gamma}^1_{p,q,r}}{\Lambda_r^{\times}}\cong\OO_{\Lambda^{1}_r}(V,\q),
\end{align}
\end{theorem}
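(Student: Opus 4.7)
The plan is to derive the isomorphism \eqref{app_relog} from the First Isomorphism Theorem for groups applied to the homomorphism $\tilde{\ad}^1:\tilde{\Gamma}^1_{p,q,r}\to \Aut(\C_{p,q,r})$. All the heavy lifting has already been done in Theorems \ref{app_thm_ker} and \ref{app_thm_ker_im}, so what remains is essentially a bookkeeping argument.

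First I would verify that $\tilde{\ad}^1$, restricted to $\tilde{\Gamma}^1_{p,q,r}$, is a group homomorphism into $\Aut(V)$. Since the grade involution is an algebra automorphism, $\widehat{TS}=\widehat{T}\widehat{S}$ for any $T,S\in\tilde{\Gamma}^1_{p,q,r}$. For any $v\in V=\C^1_{p,q,r}$, the element $\widehat{S}vS^{-1}$ again lies in $\C^1_{p,q,r}$ by the defining property of the Lipschitz group, so
\begin{align*}
\tilde{\ad}^1_{TS}(v)=\widehat{T}\widehat{S}vS^{-1}T^{-1}=\widehat{T}\bigl(\widehat{S}vS^{-1}\bigr)T^{-1}=\tilde{\ad}^1_T(\tilde{\ad}^1_S(v)),
\end{align*}
confirming the homomorphism property.

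Next I would invoke Theorem \ref{app_thm_ker} to identify the kernel of $\tilde{\ad}^1$ as exactly $\Lambda^{\times}_r$, and Theorem \ref{app_thm_ker_im} to identify its image as exactly $\OO_{\Lambda^1_r}(V,\q)$. Since kernels of group homomorphisms are automatically normal, $\Lambda^{\times}_r$ is a normal subgroup of $\tilde{\Gamma}^1_{p,q,r}$, and the First Isomorphism Theorem then yields the claimed isomorphism in \eqref{app_relog}.

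I expect no genuine obstacle at this final stage: the nontrivial work was in establishing surjectivity onto $\OO_{\Lambda^1_r}(V,\q)$ in Theorem \ref{app_thm_ker_im}, which relied on the Cartan–Dieudonné decomposition combined with the explicit construction of preimages for the radical-translation part through bivector elements of the form $e+m e_j e_{p+q+i}$, and in identifying the kernel as $\Lambda^{\times}_r$ in Theorem \ref{app_thm_ker}. Once those facts are in hand, the isomorphism is a direct consequence of the First Isomorphism Theorem.
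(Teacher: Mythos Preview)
Your proposal is correct and follows essentially the same route as the paper: invoke Theorem \ref{app_thm_ker_im} for surjectivity, Theorem \ref{app_thm_ker} for the kernel, and conclude via the First Isomorphism Theorem. Your version is slightly more explicit in checking the homomorphism property $\tilde{\ad}^1_{TS}=\tilde{\ad}^1_T\circ\tilde{\ad}^1_S$, which the paper takes for granted, but otherwise the arguments are identical.
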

\begin{proof}
We have the homomorphism $\tilde{\ad}^1$ of  the groups $\tilde{\Gamma}^1_{p,q,r}$ and $\OO_{\Lambda^{1}_r}(V,\q)$, which is surjective by the statement \eqref{app_im_ad_g} of Theorem \ref{app_thm_ker_im}. By the fundamental homomorphism theorem, the group  $\OO_{\Lambda^{1}_r}(V,\q)$ is isomorphic to the quotient group $\bigslant{\tilde{\Gamma}^1_{p,q,r}}{\ker(\tilde{\ad}^1)}$. By applying Theorem \ref{app_thm_ker}, we complete the proof.
\end{proof}

Note that in the particular case of the non-degenerate geometric algebra $\C_{p,q}$, Theorem \ref{app_thm_relog} has the form
\begin{align}
    \tilde{\ad}^1:\quad \bigslant{\tilde{\Gamma}^1_{p,q}}{\C^{0\times}}\cong\OO(V,\q)
\end{align}
and is well-known (see, for example, \citealp{lg1}).

\begin{theorem}[Theorem \ref{theorem_orth_lg_glg}]\label{app_theorem_orth_lg_glg}
    If a mapping $f:\C_{p,q,r}\rightarrow\C_{p,q,r}$ is equivariant with respect to any group $\H$ that contains the Lipschitz group $\tilde{\Gamma}^{1}_{p,q,r}$ as a subgroup, then $f$ is equivariant with respect to the corresponding restricted orthogonal group. In other words, if
    \begin{align*}
        \tilde{\ad}_T(f(x)) = f(\tilde{\ad}_T(x)),\quad \forall T\in \H,\quad \forall x\in\C_{p,q,r},
    \end{align*}
    then
    \begin{align*}
        f(\Phi(x)) = \Phi(f(x)),\quad \forall \Phi\in\OO_{\Lambda^1_r}(V,\q),\quad \forall x\in\C_{p,q,r},
    \end{align*}
    where $\Phi$ acts on $x$ in a sense \eqref{phiact} by applying an orthogonal transformation to its vector components.
\end{theorem}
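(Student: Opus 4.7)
The plan is to reduce the theorem to the equivariance hypothesis restricted to the ordinary Lipschitz group and then invoke the surjective homomorphism $\tilde{\ad}^1 : \tilde{\Gamma}^1_{p,q,r} \to \OO_{\Lambda^1_r}(V,\q)$ established in Theorem \ref{app_thm_relog} (or more directly, its image computation in Theorem \ref{app_thm_ker_im}). Since $\tilde{\Gamma}^1_{p,q,r} \subseteq \H$, the hypothesis immediately gives $\tilde{\ad}_T(f(x)) = f(\tilde{\ad}_T(x))$ for every $T \in \tilde{\Gamma}^1_{p,q,r}$ and every $x \in \C_{p,q,r}$, so we may assume $\H = \tilde{\Gamma}^1_{p,q,r}$ throughout.

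Given an arbitrary $\Phi \in \OO_{\Lambda^1_r}(V,\q)$, by Theorem \ref{app_thm_ker_im} there exists $T \in \tilde{\Gamma}^1_{p,q,r}$ with $\Phi = \tilde{\ad}^1_T$ on $V = \C^1_{p,q,r}$. The next step is to lift this identification from vectors to arbitrary multivectors: I would check that the extension of $\Phi$ to $\C_{p,q,r}$ defined by \eqref{phiact} coincides with $\tilde{\ad}_T$ on the whole algebra. Writing $x = \sum_i u_i v_{i,1} \cdots v_{i,k_i}$ with $u_i \in \F$ and $v_{i,j} \in V$, additivity of $\tilde{\ad}_T$ and its multiplicativity on $(\C^{(0)\times}_{p,q,r} \cup \C^{(1)\times}_{p,q,r})\Lambda^{\times}_r \supseteq \tilde{\Gamma}^1_{p,q,r}$ (Lemma \ref{lemma_prop} together with Theorem \ref{thm_subgroupP}) yield
\begin{align*}
\tilde{\ad}_T(x) &= \sum_i u_i\, \tilde{\ad}_T(v_{i,1}) \cdots \tilde{\ad}_T(v_{i,k_i}) = \sum_i u_i\, \tilde{\ad}^1_T(v_{i,1}) \cdots \tilde{\ad}^1_T(v_{i,k_i}) \\
&= \sum_i u_i\, \Phi(v_{i,1}) \cdots \Phi(v_{i,k_i}) = \Phi(x),
\end{align*}
where the second equality uses $\tilde{\ad}_T|_{\C^1_{p,q,r}} = \tilde{\ad}^1_T$ and the last is \eqref{phiact}.

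With this identity in hand, the desired equivariance is one line: for every $x \in \C_{p,q,r}$,
\begin{align*}
f(\Phi(x)) = f(\tilde{\ad}_T(x)) = \tilde{\ad}_T(f(x)) = \Phi(f(x)),
\end{align*}
where the middle equality is the hypothesized $\tilde{\Gamma}^1_{p,q,r}$-equivariance and the outer equalities apply the identification $\Phi = \tilde{\ad}_T$ on $\C_{p,q,r}$. The only real subtlety — and what I expect to be the main thing to watch — is ensuring that the extension \eqref{phiact} is well defined (independent of the chosen representation of $x$ as a sum of products of vectors) and that multiplicativity of $\tilde{\ad}_T$ actually holds for all $T \in \tilde{\Gamma}^1_{p,q,r}$; both are already covered by Theorem \ref{thm_subgroupP} and Lemma \ref{lemma_prop}, so the argument is essentially a bookkeeping step once those ingredients are invoked.
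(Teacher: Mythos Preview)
Your proof is correct and follows essentially the same approach as the paper's own argument: reduce to $\tilde{\Gamma}^1_{p,q,r}$-equivariance via the inclusion $\tilde{\Gamma}^1_{p,q,r}\subseteq\H$, then use the surjectivity of $\tilde{\ad}^1$ onto $\OO_{\Lambda^1_r}(V,\q)$ (Theorem~\ref{app_thm_relog}/\ref{app_thm_ker_im}) together with the definition \eqref{phiact} of the orthogonal action on multivectors. The paper's version is terser, simply asserting that Theorem~\ref{app_thm_relog} gives the equivalence of $\tilde{\Gamma}^1_{p,q,r}$- and $\OO_{\Lambda^1_r}(V,\q)$-equivariance, whereas you spell out explicitly (via Lemma~\ref{lemma_prop} and Theorem~\ref{thm_subgroupP}) why $\tilde{\ad}_T$ agrees with the extended $\Phi$ on all of $\C_{p,q,r}$ --- a step the paper takes as understood from the discussion preceding \eqref{phiact}.
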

\begin{proof}
Suppose $\H$ is a group, $\tilde{\Gamma}^{1}_{p,q,r}\subseteq\H$, and  $\tilde{\ad}_T(f(x)) = f(\tilde{\ad}_T(x))$ for some mapping $f$, for any $T\in \H$, and $x\in\C_{p,q,r}$. Then it holds, in particular, for any $T\in\tilde{\Gamma}^{1}_{p,q,r}$, and $f$ is $\tilde{\Gamma}^{1}_{p,q,r}$-equivariant. By Theorem \ref{app_thm_relog}, a mapping is equivariant w.r.t. $\tilde{\Gamma}^{1}_{p,q,r}$ iff it is equivariant w.r.t. $\OO_{\Lambda^1_r}(V,\q)$. Therefore, $f$ is $\OO(V,\q)_{\Lambda^1_r}$-equivariant, and the statement is proved.
\end{proof}

\section{Experimental Details}\label{appendix_exp_details}

The implementation of GLGENN and summary of each experiment are available at \href{https://github.com/katyafilimoshina/glgenn}{https://github.com/katyafilimoshina/glgenn}.

For CGENN \cite{cgenn}, we follow the training setups from the corresponding public code release. 
For other models, we use the loss values from the corresponding code repository \cite{emlpO5}. 

We construct our models to closely resemble the CGENN architecture, replacing the $\C^k_{p,q}$-linear, $\C^k_{p,q}$-geometric product, and $\C^k_{p,q}$-normalization layers from \citealp{cgenn} with the same number of GLGENN's $\C^{\overline{k}}_{p,q}$ counterparts (see Section \ref{section:methodology}).
This design choice results in GLGENN having significantly fewer parameters than CGENN, as they operate in a unified manner across  $4$ fundamental subspaces of geometric algebras defined by the grade involution ($\widehat{\;\;}$) and reversion ($\widetilde{\;\;}$); they processes geometric objects in groups with a step size of $4$. 

However, this parameter efficiency difference between GLGENN and CGENN becomes apparent only for tasks with $n > 3$. In lower-dimensional cases ($n \leq 3$), subspaces of fixed grades coincide with the subspaces determined by grade involution and reversion \eqref{qtdef}, i.e., $\C^k_{p,q,r} = \C^{\overline{k}}_{p,q,r}$ for $k=0,1,2,3$. Nonetheless, as experiments show, even in $5$-dimensional cases, the performance gap between CGENN and GLGENN is substantial. Moreover, GLGENN's parameter efficiency advantage increases with dimension $n$, as the subspaces $\C^{\overline{k}}_{p,q,r}$ diverge further from $\C^k_{p,q,r}$.

\subsection*{$\mathrm{O}(5,0)$-Regression Task}

In our first experiment, we consider an $\OO(5,0)$-invariant regression task proposed in \cite{emlpO5}. The task is  to estimate the function $\sin(\|x_1\|)-\|x_2\|^3/2 +\frac{x_1^Tx_2}{\|x_1\|\|x_2\|}$, where $x_1,x_2\in\BR^{5,0}$ are vectors sampled from a standard Gaussian distribution. The loss function used is Mean squared error (MSE).

We evaluate the performance using four different training dataset sizes and compare against the ordinary MLP, MLP with augmentations, the $\OO(5,0)$- and $\mathrm{SO}(5,0)$-equivariant MLP
architectures proposed in \cite{emlpO5}, and with CGENN \cite{cgenn}. CGENN, MLP, MLP with augmentations, $\OO(5,0)$- and $\mathrm{SO}(5,0)$-equivariant MLP
architectures have approximately the same number of parameters, we use the setup from the similar experiment provided in \citealp{cgenn} (they state that they compare their model CGENN with other models with the same number of trainable parameters).


\begin{table}[h]
\caption{MSE ($\downarrow$) on the $\OO(5,0)$-Regression Experiment.}
\label{app_tab1}
\vskip 0.15in
\begin{center}
\begin{small}
\begin{sc}
\begin{tabular}{|c|c|c|c|c|}
\hline
\textbf{Model}&\multicolumn{4}{|c|}{\textbf{\# of Training Samples}} \\
\textbf{} & \textbf{\textit{$3\cdot 10^1$}}& \textbf{\textit{$3\cdot 10^2$}}& \textbf{\textit{$3\cdot 10^3$}} & \textbf{\textit{$3\cdot 10^4$}} \\
\hline
\textbf{GLGENN} & 0.1055 & 0.0020&0.0031 & 0.0011 \\
MLP & 28.1011 & 0.2482 & 0.0623 & 0.0622 \\
MLP+Aug & 0.4758 & 0.0936 & 0.0889 & 0.0672 \\
EMLP-$\OO(5)$ & 0.152 & 0.0344 & 0.0310 & 0.0273 \\
EMLP-$\mathrm{SO}(5)$ &  0.1102 & 0.0384 & 0.032 & 0.0279 \\
CGENN & 0.0791 & 0.0089 & 0.0012 & 0.0003 \\
\hline
\end{tabular}
\end{sc}
\end{small}
\end{center}
\end{table}

The number of parameters in Table \ref{app_tab1} ($\OO(5,0)$-Regression Experiment) is as follows. All the models besides GLGENN and CGENN have $\approx150.3$K parameters in total. In CGENN and GLGENN, the architecture contains sequentially applied geometric algebra-based layers (applied to the subspaces of all grades) and ordinary MLP layers (applied only to the subspace of 0-grade (scalars)). The most significant layers and at the same time the most consuming for training are geometric algebra-based ones, and CGENN possess $\approx1.8$K of parameters associated with such layers, while GLGENN has $\approx0.6$K of such parameters. For the MLP part, which is very fast and easy for training, both CGENN and GLGENN both have $\approx148.5$K parameters.

The results, presented in Tables \ref{app_tab1}, \ref{tab1_2} and Figure \ref{figure_all} (left), demonstrate that GLGENN achieves performance on a par with CGENN, while significantly outperforming the other models. Moreover, GLGENN attains these results with fewer parameters and reduced training time compared to CGENN. In Table \ref{app_tab1}, MSE for CGENN and GLGENN are averaged over 5 runs. MSE for other models are averaged over 3 runs (from \citealp{emlpO5}). Number of iterations is the same for all algorithms.

\begin{table}[t]
\caption{MSE ($\downarrow$) on the $\OO(5,0)$-Regression Experiment in the case of large training set size.}
\label{tab1_2}
\vskip 0.15in
\begin{center}
\begin{small}
\begin{sc}
\begin{tabular}{|c|c|c|}
\hline
\textbf{Model}&\multicolumn{2}{|c|}{\textbf{\# of Training Samples}} \\
\textbf{} & \textbf{\textit{$6\cdot 10^4$}}& \textbf{\textit{$1\cdot 10^5$}} \\
\hline
\textbf{GLGENN} & 0.0001 & 0.0001 \\
CGENN & 0.0002 & 0.0002 \\
\hline
\end{tabular}
\end{sc}
\end{small}
\end{center}
\end{table}






\subsection*{$\mathrm{O}(5,0)$-Convex Hull Volume Estimation}

In this equivariant experiment, the task is to estimate the volume of  a convex hull generated by $K$ points in $\BR^{5,0}$. We consider three different values of $K$: 16 (as used in \cite{cgenn,topology}) and two additional settings, $K = 256$ and $512$, which are more relevant for real-world applications.

In the case $K=16$, we consider $4$ different sizes of the training set: $256$, $1024$, $4096$, and $16384$ samples. We compare GLGENN with the state-of-the-art model CGENN \cite{cgenn}. While both models share a similar architecture, GLGENN has 24.1K trainable parameters, whereas CGENN has 58.8K. The number of training steps is the same for both models.
The results are presented in Table \ref{tab2} and Figure \ref{figure_all} (middle). GLGENN either outperforms or matches CGENN across all training set sizes. The most noticeable difference occurs with smaller training sets. We attribute this to GLGENN’s lower parameter count, which reduces its tendency to overfit—an issue commonly observed in small datasets.

To study the behavior of CGENN and GLGENN on different iterations of training, we provide Figure \ref{figure_iter5}. This figure shows the training and test loss at different iterations. Notably, CGENN tends to achieve lower training loss compared to GLGENN. However, at a certain point, while CGENN’s training loss continues to decrease, its test loss plateaus and remains almost constant. In contrast, GLGENN’s training loss decreases at a slower rate, but its test loss is consistently lower than that of CGENN, indicating better generalization.

\begin{figure}[t]
\begin{center}
\centerline{\includegraphics[width=\columnwidth]{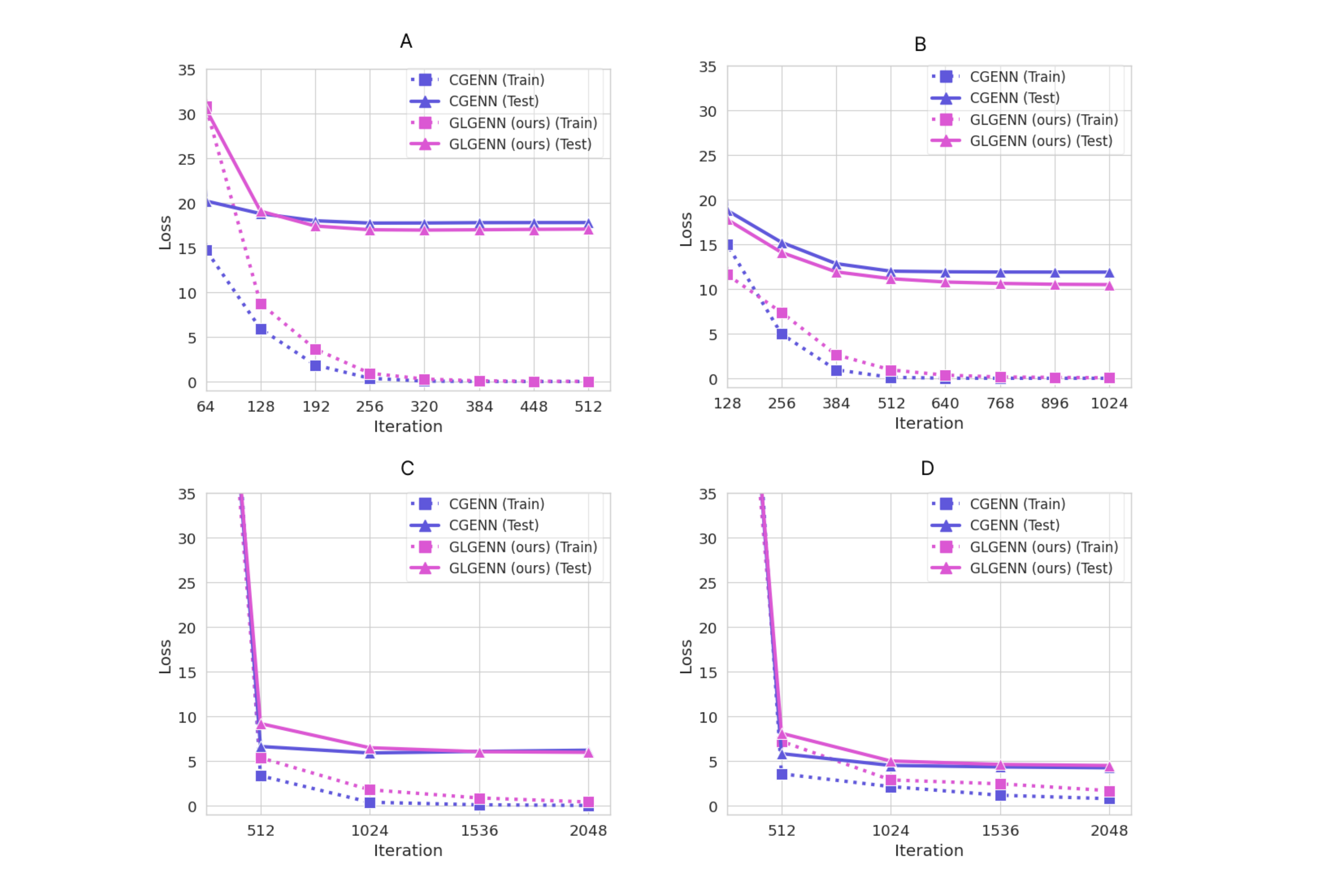}}
\vskip -0.15in
\caption{$\OO(5,0)$-Convex Hull, $K=16$. The plots illustrate the training and test loss curves for CGENN and GLGENN across different training iterations. Subfigures (A)–(D) correspond to different training set sizes: 256 (A), 1024 (B), 4096 (C), and 16384 (D), respectively.}
\label{figure_iter5}
\end{center}
\vskip -0.4in
\end{figure}

For $K = 256$ and $K = 512$, the results are also presented in Table~\ref{tab2}. To ensure a fair comparison, we first select CGENN architectures that perform best for each setup. We then construct corresponding GLGENN models by replacing CGENN layers with their GLGENN counterparts, resulting in significantly fewer trainable parameters. Note that GLGENN  consistently outperform CGENN in these real-world settings. 

There is a contrast in GLGENN and CGENN generalization behavior in cases of large $K$. We illustrate it in Figure \ref{figure_loss_convexhull256}, which shows  the training and test losses for $K=256$ in the case of $1024$ (left) and $16384$ (right) training set sizes. GLGENN demonstrate stable convergence without signs of overfitting: the training and test losses decrease in a similar way throughout the optimization trajectory. CGENN show a clear overfitting pattern: while the training loss quickly drops to near-zero, the test loss plateaus early and remains significantly higher, especially in the small-data regime.

\begin{figure}[t]
\begin{center}
\centerline{\includegraphics[width=\columnwidth]{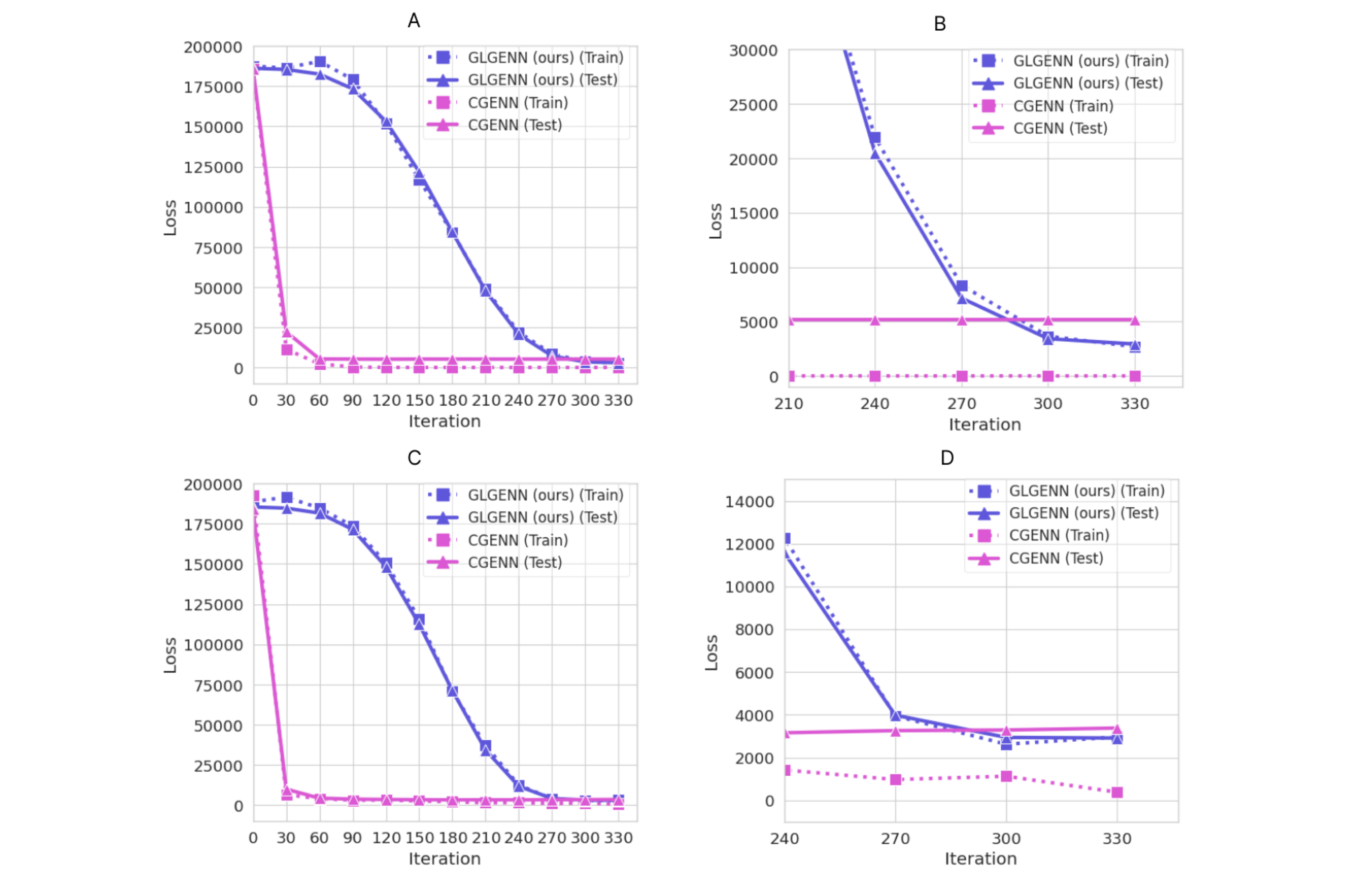}}
\vskip -0.15in
\caption{$\OO(5,0)$-Convex Hull, $K=256$. The plots illustrate the training and test loss curves for CGENN and GLGENN across different training iterations.  Subfigures (A)–(B) correspond to training with $1024$ samples; subfigures (C)–(D) correspond to training with $16384$ samples. The right-hand plots zoom in on the final iterations of training.}
\label{figure_loss_convexhull256}
\end{center}
\vskip -0.4in
\end{figure}

Following the recommendation of one of the anonymous reviewers, in Table~\ref{tab_clocktime}, we report the average wall-clock time required to process one training batch for  GLGENN and CGENN across different training set sizes and number of points $K$. The results demonstrate that GLGENN consistently achieve faster training times compared to CGENN. The performance gap remains notable across training dataset scales. The majority of memory usage is attributed to the Python environment and PyTorch's internal components rather than to the models themselves; as a result, the memory footprint is comparable between GLGENN and CGENN.

\begin{table}[h]
\caption{Average time (in seconds) for processing one training batch in $\OO(5,0)$-Convex Hull Experiment. In case of the number of points $K=16$ and $256$, for datasets with $2^8$, $2^{10}$, and $2^{12}$ training samples, the batch size is 128; for $2^{14}$ samples, the batch size is 256. In case of $K=512$, for datasets with $2^{10}$ and $2^{12}$ samples, batch sizes are 256 and 512 respectively.}
\label{tab_clocktime}
\vskip 0.15in
\begin{center}
\begin{small}
\begin{sc}
\begin{tabular}{|c|c|c|c|c|c|c|c|c|}
\hline
\textbf{K}& \multicolumn{4}{|c|}{16} & \multicolumn{2}{|c|}{256} & \multicolumn{2}{|c|}{512} \\
\hline
\textbf{Model}&\multicolumn{4}{|c|}{\textbf{\# Train Samples}}  &\multicolumn{2}{|c|}{\textbf{\# Train Samples}} &\multicolumn{2}{|c|}{\textbf{\# Train Samples}} \\
\textbf{} & \textbf{\textit{$2^8$}}& \textbf{\textit{$2^{10}$}}& \textbf{\textit{$2^{12}$}} & \textbf{\textit{$2^{14}$}} & \textbf{\textit{$2^{10}$}} & \textbf{\textit{$2^{14}$}} & \textbf{\textit{$2^{10}$}} & \textbf{\textit{$2^{14}$}}  \\ 
\hline
\textbf{GLGENN} & 0.22352 & 0.16708 & 0.1712 & 0.3408 & 3.9931 & 12.8758 & 13.3317 & 27.4383\\
CGENN & 0.33692 & 0.2331 & 0.23418 & 0.43854 &  8.658 & 17.7424 &17.6603 &35.5605\\
\hline
Gap & $-0.1134$ & $-0.06602$ &  $-0.06298$ & $-0.09774$  & $-4.6649$ & $-4.8666$ & $-4.3286$& $-8.1222$\\ \hline
\end{tabular}
\end{sc}
\end{small}
\end{center}
\end{table}


\subsection*{$\mathrm{O}(7,0)$-Convex Hull Volume Estimation}

In this experiment, we extend the previous task with number of points $K=16$ to a higher-dimensional setting and evaluate the performance of GLGENN and CGENN in estimating the volume of a convex hull formed by 16 points in $\mathbb{R}^{7,0}$.
We consider three different training set sizes: 256, 512, and 1024 samples.

\begin{figure}[h!]
\begin{center}
\centerline{\includegraphics[width=\columnwidth]{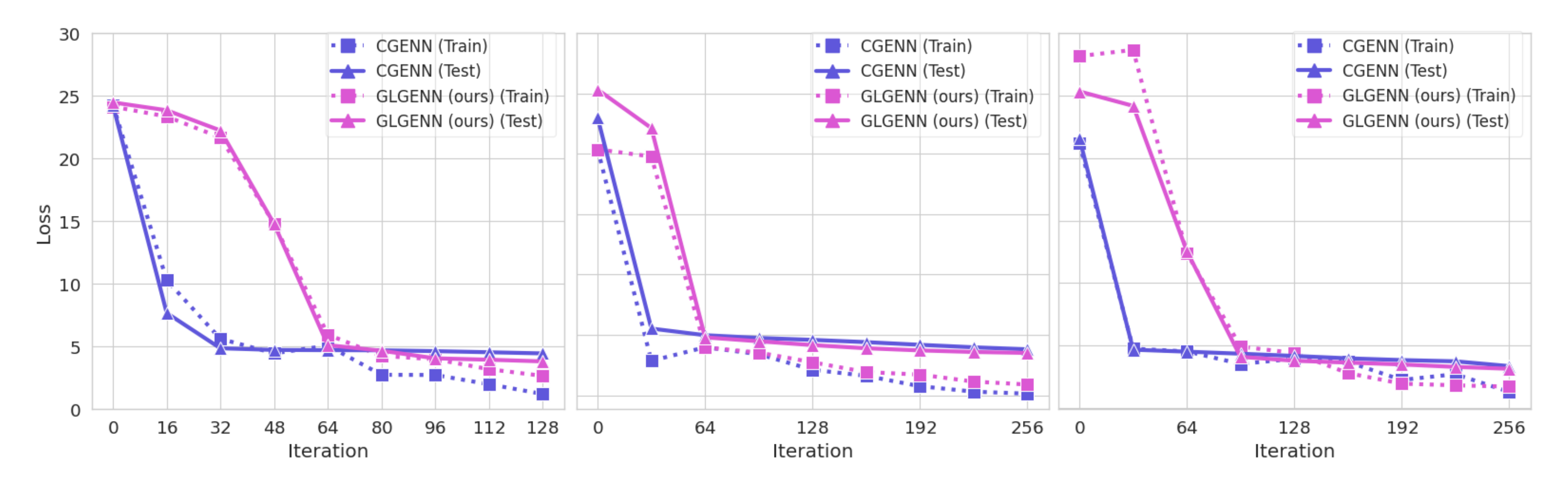}}
\vskip -0.15in
\caption{$\OO(7,0)$ Convex Hull. The plots illustrate the training and test loss curves for CGENN and GLGENN across different training iterations. Subfigures correspond to different training set sizes: 256 (left), 512 (middle), 1024 (right), respectively.}
\label{figure_iter7}
\end{center}
\end{figure}

While both models share a similar architecture, GLGENN maintains 24.1K trainable parameters, as in the previous experiment, whereas CGENN’s parameter count increases to 83.7K. The number of training steps remains the same for both models.

The results, presented in Table \ref{app_tab3}, show that GLGENN consistently outperforms or matches CGENN. The smaller is the training set size, the higher is the difference.

\begin{table}[h]
\caption{MSE ($\downarrow$) on the $\OO(7,0)$-Convex Hull Experiment on the test dataset.}
\label{app_tab3}
\vskip 0.15in
\begin{center}
\begin{small}
\begin{sc}
\begin{tabular}{|c|c|c|c|c|}
\hline
\textbf{Model}&\multicolumn{3}{|c|}{\textbf{\# of Training Samples}} & \textbf{\# of Para-}  \\
\textbf{} & \textbf{\textit{$2^8$}}& \textbf{\textit{$2^{9}$}}& \textbf{\textit{$2^{10}$}} & \textbf{meters} \\
\hline
\textbf{GLGENN} & 4.0032 & 3.7378 & 3.5343 & 24.1K\\
CGENN & 4.4914 & 3.7756 & 3.5408 & 83.7K\\
\hline
\end{tabular}
\end{sc}
\end{small}
\end{center}
\vskip -0.1in
\end{table}

Figure \ref{figure_iter7} compares the training and test loss for CGENN and GLGENN across different iterations. The observed behavior is consistent with the results from the $\OO(5,0)$-Convex Hull Experiment: CGENN tends to achieve lower training loss, while GLGENN exhibits better generalization, particularly for smaller training sets.

\subsection*{CGENN Same Size as GLGENN}

In all our experiments, GLGENN architecture, by construction, has fewer trainable parameters than CGENN. This is because, for each task, we first construct the best-performing CGENN architecture and then obtain GLGENN by replacing CGENN layers with their corresponding GLGENN counterparts considered in Section \ref{section:methodology},  which have fewer parameters. 
Across all experiments, GLGENN either matches or outperforms CGENN, despite using fewer parameters.

In this subsection, we explore another setting: we compare GLGENN with CGENN architectures constrained to have the same number of parameters as the best-performing GLGENN. Our results show that, in almost all cases, CGENN performs worse under this constraint.

Table~\ref{app_tab_new_same1} presents results from the $\OO(5,0)$-Convex Hull Volume Estimation Experiment. In this setup, the CGENN model is downsized to have approximately 25K parameters (compared to 58.8K in its best-performing version), while GLGENN has 24.1K parameters. As shown, GLGENN outperforms the size-matched CGENN across all training set sizes.

\begin{table}[h]
\caption{MSE ($\downarrow$) on the $\OO(5,0)$-Convex Hull Experiment ($K=16$ Points) on the test dataset. Comparison of CGENN of different sizes with GLGENN.}
\label{app_tab_new_same1}
\vskip 0.15in
\begin{center}
\begin{small}
\begin{sc}
\begin{tabular}{|c|c|c|c|c|c|}
\hline
\textbf{Model} & \textbf{\# of Para-}&\multicolumn{4}{|c|}{\textbf{\# of Training Samples}}   \\
\textbf{} &  \textbf{meters} & \textbf{\textit{$2^8$}}& \textbf{\textit{$2^{10}$}}& \textbf{\textit{$2^{12}$}} & \textbf{\textit{$2^{14}$}}  \\
\hline
\textbf{GLGENN} & 24.1K & 16.94 & 10.40 &  6.2 & 4.46  \\
CGENN (same size as GLGENN) & 25K & 19.79 & 15.94 & 7.69 & 4.23 \\
CGENN (best performing) & 58.8K  & 18.71 & 11.93 &  6.1 & 4.11\\
\hline
\end{tabular}
\end{sc}
\end{small}
\end{center}
\end{table}

Table~\ref{app_tab_new_same2} reports results for the $\OO(5,0)$-Regression Task, where both GLGENN and CGENN are constrained to have approximately 149.1K total parameters, including about 0.6K parameters in geometric algebra-based layers. Again, GLGENN achieves superior or comparable performance.

\begin{table}[h]
\caption{MSE ($\downarrow$) on the $\OO(5,0)$-Regression Task on the Test Dataset. Comparison of CGENN of different sizes with GLGENN. The second column shows the number of trainable parameters associated with geometric algebra-based layers.}
\label{app_tab_new_same2}
\vskip 0.15in
\begin{center}
\begin{small}
\begin{sc}
\begin{tabular}{|c|c|c|c|c|c|}
\hline
\textbf{Model} & \textbf{\# of GA-Para-}&\multicolumn{4}{|c|}{\textbf{\# of Training Samples}}   \\
\textbf{} &  \textbf{meters} & \textbf{\textit{$3\cdot 10^1$}}& \textbf{\textit{$3\cdot 10^2$}}& \textbf{\textit{$3\cdot 10^3$}} & \textbf{\textit{$3\cdot 10^4$}}  \\
\hline
\textbf{GLGENN} & $0.6$K& 0.1055 & 0.0020&0.0031 & 0.0011  \\
CGENN (same size as GLGENN) & $0.6$K & 0.2856 & 0.0076 & 0.0017 & 0.0005\\
CGENN (best performing) & $1.8$K & 0.0791 & 0.0089 & 0.0012 & 0.0003 \\
\hline
\end{tabular}
\end{sc}
\end{small}
\end{center}
\end{table}

\subsection*{Application of Typical Activation Functions}

It is possible to combine geometric algebra-based layers with standard neural network layers, such as MLPs, while preserving equivariance — provided that the non-geometric algebra-based layers are applied only to scalars (i.e., elements of the subspace $\C^0$).  The diversity of layers in this combination may lead to better results, although in equivariant tasks, non-geometric algebra-based layers on their own generally underperform compared to GLGENN layers or other geometric algebra-based layers.
Applying non-geometric layers to $\C^0$ has inherent limitations compared to geometric algebra-based nonlinearities: they do not mix grades and instead isolate them, thereby hindering interactions across subspaces. 

We provide an example in the case of the $\OO(5,0)$-Regression Task with 300 training samples. The results presented in Table~\ref{tab_activations} and Figure~\ref{fig_activations} show that the best performance is achieved by a combination of GLGENN (applied to all grades) and MLP (applied to scalars).

\begin{table}[h]
\caption{MSE ($\downarrow$) on combination of MLP with GLGENN and CGENN in $\OO(5,0)$-Regression.}
\label{tab_activations}
\vskip 0.15in
\begin{center}
\begin{small}
\begin{sc}
\begin{tabular}{|c|c|c|}
\hline
\textbf{Model}&\textbf{MSE on Test} \\
\hline
MLP & 0.1706 \\ 
\textbf{GLGENN w/ MLP} & 0.0010 \\
CGENN w/ MLP & 0.0066 \\
\textbf{GLGENN w/o MLP} & 0.0265 \\ 
CGENN w/o MLP & 0.0723 \\ 
\hline
\end{tabular}
\end{sc}
\end{small}
\end{center}
\end{table}

\subsection*{$\OO(5,0)$-$N$-Body Experiment}

We consider a system of $N=5$ charged particles (bodies) in $\BR^{5,0}$ with given masses, initial positions, and velocities. The task is to predict the final positions of the bodies after the system evolves under Newtonian gravity for $1000$ Euler integration steps. For this purpose, we embed all the data in the geometric algebra $\C_{5,0}$.

We construct a graph neural network (GNN) 
based on the message-passing paradigm \cite{nmp}, where bodies are treated as nodes in a graph, and their pairwise interactions are modeled as edges. The message and update networks are equivariant GLGENN. We use $\C^{\overline{k}}_{5,0}$-linear, $\C^{\overline{k}}_{5,0}$-normalization, and $\C^{\overline{k}}_{5,0}$-geometric product GLGENN layers combined with MVSiLU layer from CGENN \cite{cgenn}. We compare against CGENN, which itself outperforms several state-of-the-art methods, including steerable $\mathrm{SE(3)}$-Transformers \cite{se3transf}, Tensor Field Networks \cite{eq2}, SEGNN \cite{brandstetter2022geometric}, Radial Field \cite{equiv_flows}, EGNN \cite{engraph}, and NMP \cite{nmp}. To ensure a fair comparison, we use the best-performing CGENN architecture and then replace its layers with analogous GLGENN counterparts to obtain the GLGENN architecture, which automatically has two times fewer trainable parameters. The results are presented in Table \ref{tab_nbody}  and Figure \ref{figure_all} (right).

\begin{table}[h]
\caption{MSE ($\downarrow$) on the $\OO(5,0)$-$N$-Body Experiment.}
\label{tab_nbody}
\vskip 0.15in
\begin{center}
\begin{small}
\begin{sc}
\begin{tabular}{|c|c|c|c|c|}
\hline
\textbf{Model}&\multicolumn{3}{|c|}{\textbf{\# of Training Samples}} & \textbf{\# of Para-}  \\
\textbf{} & \textbf{\textit{$3\cdot 10^1$}} & \textbf{\textit{$3\cdot 10^2$}} & \textbf{\textit{$3\cdot 10^3$}} & \textbf{meters} \\
\hline
\textbf{GLGENN} & 0.007 & 0.0011& 0.0009& 103K \\
CGENN & 0.0136 & 0.0015 & 0.0007 & 210K \\
\hline
\end{tabular}
\end{sc}
\end{small}
\end{center}
\end{table}

\end{document}